\definecolor{mydarkblue}{rgb}{0,0.08,0.45}
\newcommand{\bs}{\boldsymbol}
\newcommand{\Tau}{\mathcal{T}}
\DeclareMathOperator*{\Val}{Val}
\DeclareMathOperator*{\Var}{Var}
\newcommand\independent{\protect\mathpalette{\protect\independenT}{\perp}}
\def\independenT#1#2{\mathrel{\rlap{$#1#2$}\mkern2mu{#1#2}}}
\declaretheorem[name=Theorem,numberwithin=section]{theorem}
\declaretheorem[name=Lemma,numberwithin=section]{lemma}
\title{Hindsight policy gradients}
\author{Paulo Rauber\\
IDSIA, USI, SUPSI\\
Lugano, Switzerland\\
\texttt{paulo@idsia.ch}
\And
Avinash Ummadisingu\\
USI\\
Lugano, Switzerland\\
\texttt{avinash.ummadisingu@usi.ch}
\And
Filipe Mutz \thanks{Work performed while at IDSIA.}\\
IFES, UFES\\
Serra, Brazil\\
\texttt{filipe.mutz@ifes.edu.br}{\phantom{AAAAAAAAA}}
\And
Jürgen Schmidhuber\\
IDSIA, USI, SUPSI, NNAISENSE \\
Lugano, Switzerland\\
\texttt{juergen@idsia.ch}}
\begin{document}

\maketitle

\begin{abstract}
A reinforcement learning agent that needs to pursue different goals across episodes requires a goal-conditional policy. In addition to their potential to generalize desirable behavior to unseen goals, such policies may also enable higher-level planning based on subgoals. In sparse-reward environments, the capacity to exploit information about the degree to which an arbitrary goal has been achieved while another goal was intended appears crucial to enable sample efficient learning. However, reinforcement learning agents have only recently been endowed with such capacity for hindsight. In this paper, we demonstrate how hindsight can be introduced to policy gradient methods, generalizing this idea to a broad class of successful algorithms. Our experiments on a diverse selection of sparse-reward environments show that hindsight leads to a remarkable increase in sample efficiency.
\end{abstract}

\section{Introduction}

In a traditional reinforcement learning setting, an agent interacts with an environment in a sequence of episodes, observing states and acting according to a policy that ideally maximizes expected cumulative reward. If an agent is required to pursue different \emph{goals} across episodes, its \emph{goal-conditional policy} may be represented by a probability distribution over actions for every combination of state and goal. This distinction between states and goals is particularly useful when the probability of a state transition given an action is independent of the goal pursued by the agent. 

Learning such goal-conditional behavior has received significant attention in machine learning and robotics, especially because a goal-conditional policy may generalize desirable behavior to goals that were never encountered by the agent \citep{schmidhuber1990, da2012learning, kupcsik2013data, deisenroth2014multi, schaul2015universal, zhu2017target, kober2012reinforcement, ghosh2018divideandconquer, mankowitz2018unicorn, pathak*2018zeroshot}. Consequently, developing goal-based curricula to facilitate learning has also attracted considerable interest \citep{fabisch2014active, pmlr-v78-florensa17a, sukhbaatar2018intrinsic, Srivastava2013first}. In hierarchical reinforcement learning, goal-conditional policies may enable agents to plan using subgoals, which abstracts the details involved in lower-level decisions \citep{oh2017zero, vezhnevets2017feudal, kulkarni2016hierarchical, levy2017hierarchical}. 

In a typical \emph{sparse-reward environment}, an agent receives a non-zero reward only upon reaching a \emph{goal state}. Besides being natural, this task formulation avoids the potentially difficult problem of \emph{reward shaping}, which often biases the learning process towards suboptimal behavior \citep{ng1999policy}. Unfortunately, sparse-reward environments remain particularly challenging for traditional reinforcement learning algorithms \citep{andrychowicz2017hindsight, pmlr-v78-florensa17a}. For example, consider an agent tasked with traveling between cities. In a sparse-reward formulation, if reaching a desired destination by chance is unlikely, a learning agent will rarely obtain reward signals. At the same time, it seems natural to expect that an agent will learn how to reach the cities it visited regardless of its desired destinations. 

In this context, the capacity to exploit information about the degree to which an arbitrary goal has been achieved while another goal was intended is called \emph{hindsight}. This capacity was recently introduced by \citet{andrychowicz2017hindsight} to off-policy reinforcement learning algorithms that rely on experience replay \citep{lin1992self}. In earlier work, \citet{karkus2016factored} introduced hindsight to policy search based on Bayesian optimization \citep{metzen2015bayesian}.

In this paper, we demonstrate how hindsight can be introduced to policy gradient methods \citep{Williams:86,williams1992simple,sutton99policy}, generalizing this idea to a successful class of reinforcement learning algorithms \citep{peters2008reinforcement, duan2016benchmarking}.

In contrast to previous work on hindsight, our approach relies on \emph{importance sampling} \citep{bishop2013pattern}. In reinforcement learning, importance sampling has been traditionally employed in order to efficiently reuse information obtained by earlier policies during learning \citep{precup2000eligibility, Peshkin:2002, jie2010connection, thomas2015high, munos2016safe}. In comparison, our approach attempts to efficiently learn about different goals using information obtained by the current policy for a specific goal. This approach leads to multiple formulations of a \emph{hindsight policy gradient} that relate to well-known policy gradient results. 

In comparison to conventional (goal-conditional) policy gradient estimators, our proposed estimators lead to remarkable sample efficiency on a diverse selection of sparse-reward environments.

\section{Preliminaries}
\label{sec:preliminaries}

We denote random variables by upper case letters and assignments to these variables by corresponding lower case letters. We let $\Val(X)$ denote the set of valid assignments to a random variable $X$. We also omit the subscript that typically relates a probability function to random variables when there is no risk of ambiguity. For instance, we may use $p(x)$ to denote $p_X(x)$ and $p(y)$ to denote $p_Y(y)$. 

Consider an agent that interacts with its environment in a sequence of episodes, each of which lasts for exactly $T$ time steps. The agent receives a goal $g \in \Val(G)$ at the beginning of each episode. At every time step $t$, the agent observes a state $s_t \in \Val(S_t)$, receives a reward $r(s_t, g) \in \mathbb{R}$, and chooses an action $a_t \in \Val(A_t)$. For simplicity of notation, suppose that $\Val(G), \Val(S_t)$, and $\Val(A_t)$ are finite for every $t$.

In our setting, a goal-conditional policy defines a probability distribution over actions for every combination of state and goal. The same policy is used to make decisions at every time step.

Let $\bs{\tau} = s_1, a_1, s_2, a_2, \ldots, s_{T-1}, a_{T-1}, s_{T}$ denote a trajectory. We assume that the probability $p(\bs{\tau} \mid g, \bs{\theta})$ of trajectory $\bs{\tau}$ given goal $g$ and a policy parameterized by $\bs{\theta} \in \Val(\bs{\Theta})$ is given by
\begin{equation}
 p(\bs{\tau} \mid g, \bs{\theta}) = p(s_1) \prod_{t=1}^{T-1} p(a_t \mid s_t, g, \bs{\theta}) p(s_{t+1} \mid s_t, a_t).
 \label{eq:trajectory}
\end{equation}

In contrast to a Markov decision process, this formulation allows the probability of a state transition given an action to change across time steps within an episode. More importantly, it implicitly states that the probability of a state transition given an action is independent of the goal pursued by the agent, which we denote by $S_{t+1} \independent G \mid S_{t}, A_{t}$. For every $\bs{\tau}, g,$ and $\bs{\theta}$, we also assume that $p(\bs{\tau} \mid g, \bs{\theta})$ is non-zero and differentiable with respect to $\bs{\theta}$.

Assuming that $G \independent \bs{\Theta}$, the expected return $\eta(\bs{\theta})$ of a policy parameterized by $\bs{\theta}$ is given by
\begin{equation}
 \eta(\bs{\theta}) = \mathbb{E}\left[ \sum_{t = 1}^T r(S_t, G) \mid \bs{\theta} \right] =  \sum_{g} p(g) \sum_{\bs{\tau}} p(\bs{\tau} \mid g, \bs{\theta}) \sum_{t = 1}^T r(s_t, g).
\end{equation}

The action-value function is given by $Q_{t}^{\bs{\theta}}(s, a, g) = \mathbb{E} \left[ \sum_{t'= t+1}^{T} r(S_{t'}, g) \mid S_t = s, A_t = a, g, \bs{\theta} \right]$,
the value function by $V_{t}^{\bs{\theta}}(s, g) = \mathbb{E} \left[ Q_{t}^{\bs{\theta}}(s, A_t, g) \mid S_t = s, g, \bs{\theta} \right]$, and the advantage function by $A_{t}^{\bs{\theta}}(s, a, g) = Q_{t}^{\bs{\theta}}(s, a, g) - V_{t}^{\bs{\theta}}(s, g)$.

\section{Goal-conditional policy gradients}
\label{sec:gcpg}

This section presents results for goal-conditional policies that are analogous to well-known results for conventional policies \citep{peters2008reinforcement}. They establish the foundation for the results presented in the next section. The corresponding proofs are included in Appendix \ref{app:policy_gradients} for completeness.

The objective of policy gradient methods is finding policy parameters that achieve maximum expected return. When combined with Monte Carlo techniques \citep{bishop2013pattern}, the following result allows pursuing this objective using gradient-based optimization.

\begin{restatable}[Goal-conditional policy gradient]{theorem}{theorempg}
 The gradient $\nabla \eta(\bs{\theta})$ of the expected return with respect to $\bs{\theta}$ is given by
\begin{equation}
  \nabla \eta(\bs{\theta}) = \sum_{g} p(g) \sum_{\bs{\tau}} p(\bs{\tau} \mid g, \bs{\theta}) \sum_{t = 1}^{T -1} \nabla \log p(a_t \mid s_t, g, \bs{\theta}) \sum_{t' = t+1}^{T} r(s_{t'}, g).
  \label{eq:pg}
\end{equation}
\label{th:pg}
\end{restatable}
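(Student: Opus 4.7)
The plan is to differentiate the expression for $\eta(\bs{\theta})$ directly and then apply the standard log-derivative identity. Since $p(g)$ is independent of $\bs{\theta}$ by the assumption $G \independent \bs{\Theta}$, and the sums over $g$ and $\bs{\tau}$ are finite, I can freely interchange $\nabla$ with these sums to obtain
\begin{equation*}
\nabla \eta(\bs{\theta}) = \sum_{g} p(g) \sum_{\bs{\tau}} \nabla p(\bs{\tau} \mid g, \bs{\theta}) \sum_{t = 1}^{T} r(s_{t}, g).
\end{equation*}
Rewriting $\nabla p(\bs{\tau} \mid g, \bs{\theta}) = p(\bs{\tau} \mid g, \bs{\theta}) \nabla \log p(\bs{\tau} \mid g, \bs{\theta})$, which is valid since $p(\bs{\tau} \mid g, \bs{\theta})$ was assumed strictly positive and differentiable, converts the derivative into an expectation against the trajectory distribution.

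Next, I would use the trajectory factorization in Eq.~\ref{eq:trajectory} to expand the score function. Taking the logarithm turns the product into a sum, and only the factors $p(a_{t} \mid s_{t}, g, \bs{\theta})$ depend on $\bs{\theta}$; the initial-state distribution $p(s_{1})$ and the transition probabilities $p(s_{t+1} \mid s_{t}, a_{t})$ carry no $\bs{\theta}$ dependence. Hence
\begin{equation*}
\nabla \log p(\bs{\tau} \mid g, \bs{\theta}) = \sum_{t=1}^{T-1} \nabla \log p(a_{t} \mid s_{t}, g, \bs{\theta}).
\end{equation*}
Substituting back yields an identity of the same form as Eq.~\ref{eq:pg}, but with the full reward sum $\sum_{t'=1}^{T} r(s_{t'}, g)$ multiplying each score term rather than the truncated sum $\sum_{t'=t+1}^{T} r(s_{t'}, g)$.

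The remaining step, which I expect to be the main subtlety, is the \emph{causality} reduction that removes rewards at times $t' \leq t$ from the factor accompanying $\nabla \log p(a_{t} \mid s_{t}, g, \bs{\theta})$. For each fixed $t$ and each $t' \leq t$, I would reorganize the sum over $\bs{\tau}$ by first summing over $a_{t}, s_{t+1}, a_{t+1}, \ldots, s_{T}$ with the prefix $s_{1}, a_{1}, \ldots, s_{t}$ held fixed. Because $r(s_{t'}, g)$ depends only on the prefix and the transitions do not involve $\bs{\theta}$, the inner sum collapses to
\begin{equation*}
\sum_{a_{t}} p(a_{t} \mid s_{t}, g, \bs{\theta}) \, \nabla \log p(a_{t} \mid s_{t}, g, \bs{\theta}) = \nabla \sum_{a_{t}} p(a_{t} \mid s_{t}, g, \bs{\theta}) = \nabla 1 = 0,
\end{equation*}
so the contributions from $t' \leq t$ vanish. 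Combining this cancellation with the previous display gives exactly Eq.~\ref{eq:pg}. The bookkeeping in this last step is the only delicate point; the rest is purely mechanical differentiation and application of the log-derivative trick.
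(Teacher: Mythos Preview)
Your proposal is correct and follows essentially the same route as the paper: apply the likelihood-ratio trick and the factorization of $p(\bs{\tau}\mid g,\bs{\theta})$ to reach the intermediate expression with the full reward sum, then eliminate the terms with $t'\le t$ by marginalizing over the action at the score time so that $\sum_{a_t} p(a_t\mid s_t,g,\bs{\theta})\nabla\log p(a_t\mid s_t,g,\bs{\theta})=0$. The paper packages the causality step slightly differently (writing the cross term as an expectation and invoking $A_{t'}\independent S_t\mid S_{t'},G,\bs{\Theta}$ before reversing the likelihood-ratio trick), but the underlying argument is the same as yours.
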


The following result allows employing a \emph{baseline} to reduce the variance of the gradient estimator.
\begin{restatable}[Goal-conditional policy gradient, baseline formulation]{theorem}{theorempgbaseline} For every $t, \bs{\theta}$, and associated real-valued (baseline) function $b_{t}^{\bs{\theta}}$, the gradient $\nabla \eta(\bs{\theta})$ of the expected return with respect to $\bs{\theta}$ is given by
\begin{align}
 \nabla \eta(\bs{\theta}) = \sum_{g} p(g) \sum_{\bs{\tau}} p(\bs{\tau} \mid g, \bs{\theta}) \sum_{t = 1}^{T -1} \nabla \log p(a_t \mid s_t, g, \bs{\theta}) \left[ \left[ \sum_{t' = t+1}^{T} r(s_{t'}, g)\right] - b_{t}^{\bs{\theta}}(s_t, g) \right].
\end{align}
\label{th:pgbaseline}
\end{restatable}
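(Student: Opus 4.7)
The plan is to show that subtracting the baseline term from the right-hand side of Theorem \ref{th:pg} contributes nothing, i.e., that
\begin{equation*}
\sum_{g} p(g) \sum_{\bs{\tau}} p(\bs{\tau} \mid g, \bs{\theta}) \sum_{t=1}^{T-1} \nabla \log p(a_t \mid s_t, g, \bs{\theta}) \, b_t^{\bs{\theta}}(s_t, g) = 0.
\end{equation*}
Once this is established, the baseline formulation follows by subtracting this zero from the expression in Theorem \ref{th:pg} and using linearity of summation to move $b_t^{\bs{\theta}}(s_t, g)$ inside the bracket.

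I would first swap the sums over $t$ and $\bs{\tau}$ (both are finite sums, so this is unconditionally valid) and, for a fixed $t$, marginalize the trajectory distribution down to $(S_t, A_t)$ using the factorization in Equation \eqref{eq:trajectory}. This yields
\begin{equation*}
\sum_{\bs{\tau}} p(\bs{\tau} \mid g, \bs{\theta}) \nabla \log p(a_t \mid s_t, g, \bs{\theta}) \, b_t^{\bs{\theta}}(s_t, g) = \sum_{s_t, a_t} p(s_t, a_t \mid g, \bs{\theta}) \nabla \log p(a_t \mid s_t, g, \bs{\theta}) \, b_t^{\bs{\theta}}(s_t, g),
\end{equation*}
because $b_t^{\bs{\theta}}$ and $\nabla \log p(a_t \mid s_t, g, \bs{\theta})$ depend on the trajectory only through $(s_t, a_t)$, while the remaining sum over the other coordinates of $\bs{\tau}$ returns the marginal $p(s_t, a_t \mid g, \bs{\theta})$.

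Next I would factor $p(s_t, a_t \mid g, \bs{\theta}) = p(s_t \mid g, \bs{\theta}) \, p(a_t \mid s_t, g, \bs{\theta})$ and apply the log-derivative identity
\begin{equation*}
\sum_{a_t} p(a_t \mid s_t, g, \bs{\theta}) \nabla \log p(a_t \mid s_t, g, \bs{\theta}) = \sum_{a_t} \nabla p(a_t \mid s_t, g, \bs{\theta}) = \nabla \sum_{a_t} p(a_t \mid s_t, g, \bs{\theta}) = \nabla 1 = \bs{0},
\end{equation*}
which is permitted because $p(a_t \mid s_t, g, \bs{\theta})$ is non-zero and differentiable by assumption, and $\Val(A_t)$ is finite so the gradient commutes with the sum. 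Since $b_t^{\bs{\theta}}(s_t, g)$ does not depend on $a_t$, it factors out of the inner sum and multiplies the zero vector, making the whole expression vanish term-by-term in $t$ and $g$.

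The main subtlety (rather than obstacle) is being careful that the baseline depends only on $s_t$ and $g$ (and on $t$ and $\bs{\theta}$), not on $a_t$ or anything downstream; this is exactly what allows $b_t^{\bs{\theta}}$ to be pulled outside the sum over $a_t$ before invoking the log-derivative identity. Combining this cancellation with Theorem \ref{th:pg} then yields the stated baseline formulation.
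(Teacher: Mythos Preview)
Your proposal is correct and follows essentially the same approach as the paper: the paper isolates the vanishing of the baseline term as Lemma~\ref{lm:pgbaseline} (proved by writing the expectation over $(S_t,A_t,G)$, reversing the likelihood-ratio trick, and using $\sum_{a_t} p(a_t\mid s_t,g,\bs{\theta})=1$), and then obtains Theorem~\ref{th:pgbaseline} by subtracting this zero from Theorem~\ref{th:pg}. Your marginalization down to $(s_t,a_t)$ and application of the log-derivative identity are exactly this argument spelled out.
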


Appendix \ref{app:gcpg:pgoptimalbaselines} presents the constant baselines that minimize the (elementwise) variance of the corresponding estimator. However, such baselines are usually impractical to compute (or estimate), and the variance of the estimator may be reduced further by a baseline function that depends on state and goal. Although generally suboptimal, it is typical to let the baseline function $b_{t}^{\bs{\theta}}$ approximate the value function $V_{t}^{\bs{\theta}}$ \citep{greensmith2004variance}.

Lastly, actor-critic methods may rely on the following result for goal-conditional policies.
\begin{restatable}[Goal-conditional policy gradient, advantage formulation]{theorem}{theorempgadvantage}
 The gradient $\nabla \eta(\bs{\theta})$ of the expected return with respect to $\bs{\theta}$ is given by
 \begin{align}
  \nabla \eta(\bs{\theta}) = \sum_{g} p(g) \sum_{\bs{\tau}} p(\bs{\tau} \mid g, \bs{\theta}) \sum_{t = 1}^{T -1} \nabla \log p(a_t \mid s_t, g, \bs{\theta}) A_{t}^{\bs{\theta}}(s_t, a_t, g).
\end{align}
\label{th:pgadvantage}
\end{restatable}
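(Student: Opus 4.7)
The plan is to obtain the advantage formulation by starting from the baseline formulation (Theorem \ref{th:pgbaseline}) with the particular choice $b_{t}^{\bs{\theta}}(s_t, g) = V_{t}^{\bs{\theta}}(s_t, g)$, then showing that the remaining reward-to-go term can be replaced by $Q_{t}^{\bs{\theta}}(s_t, a_t, g)$ in expectation. After this substitution, the identity $A_{t}^{\bs{\theta}} = Q_{t}^{\bs{\theta}} - V_{t}^{\bs{\theta}}$ yields the claim immediately.

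The key step is the replacement of the random reward-to-go by the action-value function. For this I would rewrite each term of the outer sum as a conditional expectation. Factor the trajectory probability at time $t$ using Equation \ref{eq:trajectory}, splitting it into a prefix $p(s_1, a_1, \ldots, s_t, a_t \mid g, \bs{\theta})$ and a suffix $p(s_{t+1}, a_{t+1}, \ldots, s_T \mid s_t, a_t, g, \bs{\theta})$. The score function $\nabla \log p(a_t \mid s_t, g, \bs{\theta})$ depends only on the prefix variables (and $\bs{\theta}$), so by the tower property I can push the marginalization over $s_{t+1}, a_{t+1}, \ldots, s_T$ inside. That inner marginalization applied to $\sum_{t'=t+1}^{T} r(s_{t'}, g)$ is, by definition, $Q_{t}^{\bs{\theta}}(s_t, a_t, g)$. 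The same kind of marginalization applied to $V_{t}^{\bs{\theta}}(s_t, g)$ leaves it unchanged since it depends only on $s_t$ and $g$.

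Having made the substitution, the expression inside the outer expectation becomes
\begin{equation*}
\nabla \log p(a_t \mid s_t, g, \bs{\theta}) \left[ Q_{t}^{\bs{\theta}}(s_t, a_t, g) - V_{t}^{\bs{\theta}}(s_t, g) \right] = \nabla \log p(a_t \mid s_t, g, \bs{\theta}) A_{t}^{\bs{\theta}}(s_t, a_t, g),
\end{equation*}
which gives the desired identity after reassembling the prefix probability with the rest of the trajectory probability (the suffix integrates to one, so no extra factor appears).

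The main obstacle is bookkeeping rather than mathematical depth: one must carefully justify commuting the gradient/score factor with the marginalization over the trajectory suffix, which relies on (i) the factorization in Equation \ref{eq:trajectory}, (ii) the assumption $S_{t+1} \independent G \mid S_t, A_t$ implicit in that factorization, and (iii) the fact that the baseline $V_{t}^{\bs{\theta}}(s_t, g)$ does not depend on $a_t$ so that Theorem \ref{th:pgbaseline} applies with this choice. Once these points are in place, the derivation reduces to a one-line substitution.
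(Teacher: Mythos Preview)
Your proposal is correct and follows essentially the same route as the paper. The paper first isolates the action-value formulation as a separate lemma (replacing the reward-to-go by $Q_t^{\bs{\theta}}$ via exactly the prefix/suffix factorization you describe), then subtracts the baseline identity with $b_t^{\bs{\theta}} = V_t^{\bs{\theta}}$; you instead start from the baseline formulation with $b_t^{\bs{\theta}} = V_t^{\bs{\theta}}$ and perform the $Q$-substitution inline, but the underlying argument is identical.
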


\section{Hindsight policy gradients}
\label{sec:hpg}

This section presents the novel ideas that introduce hindsight to policy gradient methods. The corresponding proofs can be found in Appendix \ref{app:hindsight_policy_gradients}. 

Suppose that the reward $r(s,g)$ is known for every combination of state $s$ and goal $g$, as in previous work on hindsight \citep{andrychowicz2017hindsight, karkus2016factored}. In that case, it is possible to evaluate a trajectory obtained while trying to achieve an original goal $g'$ for an alternative goal $g$. Using importance sampling, this information can be exploited using the following central result.

\begin{restatable}[Every-decision hindsight policy gradient]{theorem}{theoremadhpg}
 For an arbitrary (original) goal $g'$, the gradient $\nabla \eta(\bs{\theta})$ of the expected return with respect to $\bs{\theta}$ is given by
 \small
\begin{align}
 \nabla \eta(\bs{\theta}) = \sum_{\bs{\tau}} p(\bs{\tau} \mid g', \bs{\theta}) \sum_{g} p(g)   \sum_{t = 1}^{T -1} \nabla \log p(a_t \mid s_t, g, \bs{\theta}) \sum_{t' = t+1}^{T} \left[ \prod_{k=1}^{T - 1} \frac{p(a_k \mid s_k, g, \bs{\theta})}{p(a_k \mid s_k, g', \bs{\theta})}  \right] r(s_{t'}, g).
\end{align}
\normalsize
\label{th:intermediary_hpg}
\end{restatable}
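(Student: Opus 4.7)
The plan is to start from Theorem~\ref{th:pg} and introduce an importance-sampling change of measure from the distribution over trajectories conditioned on each goal $g$ to the distribution conditioned on the fixed original goal $g'$. Concretely, for each term in the outer sum over $g$, I would multiply and divide $p(\bs{\tau} \mid g, \bs{\theta})$ by $p(\bs{\tau} \mid g', \bs{\theta})$, which is well-defined since the excerpt assumes $p(\bs{\tau} \mid g, \bs{\theta}) > 0$ for every $\bs{\tau}, g, \bs{\theta}$.

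The key algebraic step is to simplify the ratio $p(\bs{\tau} \mid g, \bs{\theta})/p(\bs{\tau} \mid g', \bs{\theta})$. Using the factorization in Eq.~\eqref{eq:trajectory} and the conditional independence $S_{t+1} \independent G \mid S_t, A_t$, the initial state probability $p(s_1)$ and all transition probabilities $p(s_{k+1} \mid s_k, a_k)$ appear identically in the numerator and denominator and therefore cancel, leaving the product
\begin{equation*}
\frac{p(\bs{\tau} \mid g, \bs{\theta})}{p(\bs{\tau} \mid g', \bs{\theta})} = \prod_{k=1}^{T-1} \frac{p(a_k \mid s_k, g, \bs{\theta})}{p(a_k \mid s_k, g', \bs{\theta})}.
\end{equation*}
This is the step I expect to be the most conceptually important, since it is exactly what makes hindsight tractable: the importance weight depends only on the policy and not on the (unknown) transition dynamics.

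Finally, I would substitute this simplified ratio back into Theorem~\ref{th:pg}, exchange the order of the finite sums over $g$ and $\bs{\tau}$ (justified because $\Val(G)$ and the set of trajectories are finite, so Fubini applies trivially), and move the importance weight inside the inner sum over $t'$ (it depends on neither $t$ nor $t'$, so this is just a notational placement that matches the form in the statement). The resulting expression is exactly the claimed identity, completing the proof. Beyond the cancellation noted above, every step is a routine manipulation of finite sums, so no further obstacles are anticipated.
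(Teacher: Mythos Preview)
Your proposal is correct and mirrors the paper's own proof essentially step for step: the paper also starts from Theorem~\ref{th:pg}, multiplies and divides by $p(\bs{\tau}\mid g',\bs{\theta})$, and then uses the factorization in Eq.~\eqref{eq:trajectory} to cancel the initial-state and transition terms, leaving exactly the product of policy ratios. The only differences are cosmetic (the paper does not explicitly mention swapping the $g$ and $\bs{\tau}$ sums or pushing the ratio inside the $t'$ sum), so there is nothing to add.
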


In the formulation presented above, every reward is multiplied by the ratio between the likelihood of the corresponding trajectory under an alternative goal and the likelihood under the original goal (see Eq. \ref{eq:trajectory}). Intuitively, every reward should instead be multiplied by a \emph{likelihood ratio} that only considers the corresponding trajectory up to the previous action. This intuition underlies the following important result, named after an analogous result for action-value functions by \citet{precup2000eligibility}.

\begin{restatable}[Per-decision hindsight policy gradient]{theorem}{theorempdhpg}
For an arbitrary (original) goal $g'$, the gradient $\nabla \eta(\bs{\theta})$ of the expected return with respect to $\bs{\theta}$ is given by
\small
\begin{align}
 \nabla \eta(\bs{\theta}) = \sum_{\bs{\tau}} p(\bs{\tau} \mid g', \bs{\theta}) \sum_{g} p(g)   \sum_{t = 1}^{T -1} \nabla \log p(a_t \mid s_t, g, \bs{\theta}) \sum_{t' = t+1}^{T} \left[ \prod_{k=1}^{t' - 1} \frac{p(a_k \mid s_k, g, \bs{\theta})}{p(a_k \mid s_k, g', \bs{\theta})}  \right] r(s_{t'}, g).
 \label{eq:hpg}
\end{align}
\normalsize
\label{th:hpg}
\end{restatable}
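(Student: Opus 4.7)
The plan is to start from the goal-conditional policy gradient of Theorem \ref{th:pg} and apply importance sampling \emph{separately to each reward term} rather than to the whole trajectory at once, as was done in Theorem \ref{th:intermediary_hpg}. The key observation is that for each pair $(t, t')$ with $t' > t$, the quantity $\nabla \log p(a_t \mid s_t, g, \bs{\theta})\, r(s_{t'}, g)$ depends on the trajectory only through its prefix $s_1, a_1, \ldots, s_{t'}$. Hence the tail $a_{t'}, s_{t'+1}, \ldots, s_T$ can be marginalized \emph{before} changing the sampling goal, so that only the prefix enters the resulting likelihood ratio.

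First I would isolate a generic term of the form
\begin{equation*}
 \sum_{g} p(g) \sum_{\bs{\tau}} p(\bs{\tau} \mid g, \bs{\theta})\, \nabla \log p(a_t \mid s_t, g, \bs{\theta})\, r(s_{t'}, g).
\end{equation*}
Using the factorization in Eq.~\ref{eq:trajectory} together with the assumption $S_{t+1} \independent G \mid S_t, A_t$, summing the tail against $p(\bs{\tau} \mid g, \bs{\theta})$ collapses by iterated unit normalization (each inner sum over $a_k$ or $s_{k+1}$ yields $1$), leaving the prefix marginal
\begin{equation*}
 p(s_1, a_1, \ldots, s_{t'} \mid g, \bs{\theta}) = p(s_1) \prod_{k=1}^{t'-1} p(a_k \mid s_k, g, \bs{\theta})\, p(s_{k+1} \mid s_k, a_k).
\end{equation*}
Next I would multiply and divide by the same prefix probability evaluated at $g'$. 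Since $p(s_1)$ and the transition kernels $p(s_{k+1} \mid s_k, a_k)$ are goal-independent, they cancel, producing exactly the per-decision ratio $\prod_{k=1}^{t'-1} p(a_k \mid s_k, g, \bs{\theta})/p(a_k \mid s_k, g', \bs{\theta})$.

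Finally, because this ratio, $\nabla \log p(a_t \mid s_t, g, \bs{\theta})$, and $r(s_{t'}, g)$ are all independent of the tail, I would re-insert the tail under $p(\bs{\tau} \mid g', \bs{\theta})$, whose conditional sum given the prefix is $1$. This restores a sum over full trajectories weighted by $p(\bs{\tau} \mid g', \bs{\theta})$. Summing over $t$ and $t'$ and swapping the outer order of summation then gives Eq.~\ref{eq:hpg}. The main obstacle will be keeping the marginalization bookkeeping tight: it is easy to leave stray factors when collapsing and re-introducing the tail, and the cancellation of the initial-state and transition factors must explicitly invoke their goal-independence. Once these pieces are handled cleanly, the derivation is just an application of the same importance-sampling identity underlying Theorem \ref{th:intermediary_hpg}, applied at the correct granularity.
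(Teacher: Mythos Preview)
Your proposal is correct and yields Eq.~\ref{eq:hpg}, but it proceeds along a different route from the paper. The paper starts from Theorem~\ref{th:intermediary_hpg} (the every-decision formulation, which already carries the full-trajectory likelihood ratio), splits each trajectory at time~$t'$, and shows via Lemma~\ref{th:lemma2} that the tail factor $\prod_{k=t'}^{T-1} p(a_k \mid s_k, g, \bs{\theta})/p(a_k \mid s_k, g', \bs{\theta})$ integrates against the tail distribution under~$g'$ to produce something depending only on~$s_{t'}$; the goal-independence of $S_{t'}$ given the prefix then lets one absorb the remaining~$g$-dependence back into the $g'$-expectation. In contrast, you go back to Theorem~\ref{th:pg} and marginalize the tail \emph{before} introducing any importance weight, so that the change of measure is applied only to the prefix $s_1, a_1, \ldots, s_{t'}$ and the per-decision ratio appears directly; you then pad the tail back under~$g'$ to recover a full-trajectory expectation. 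Your route is somewhat more economical in that it does not pass through Theorem~\ref{th:intermediary_hpg} as an intermediate step, and it makes transparent why the ratio truncates at~$t'-1$: the integrand simply never touches anything beyond~$s_{t'}$. The paper's route, on the other hand, exhibits the per-decision result explicitly as a sharpening of the every-decision result, which clarifies the relationship between the two formulations. Both arguments rest on the same structural facts (goal-independence of transitions and of the initial-state distribution), so the bookkeeping you flag as the main obstacle is indeed the only place where care is required.
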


The following lemma allows introducing baselines to hindsight policy gradients (see App. \ref{sec:baselinehpg}).
\begin{restatable}{lemma}{lemmahpgbaseline}
For every $g'$, $t, \bs{\theta}$, and associated real-valued (baseline) function $b_{t}^{\bs{\theta}}$,
\small
\begin{align}
\sum_{\bs{\tau}} p(\bs{\tau} \mid g', \bs{\theta}) \sum_{g} p(g)   \sum_{t = 1}^{T -1} \nabla \log p(a_t \mid s_t, g, \bs{\theta}) \left[ \prod_{k=1}^{t} \frac{p(a_k \mid s_k, g, \bs{\theta})}{p(a_k \mid s_k, g', \bs{\theta})} \right]  b_{t}^{\bs{\theta}}(s_t, g) = \mathbf{0}.
\label{eq:bchpg}
\end{align}
\label{lm:bchpg}
\end{restatable}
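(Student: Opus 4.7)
The plan is to fix $t$ in the outer sum (the identity holds termwise in $t$, so it suffices to show each term vanishes) and combine the importance-sampling ratio with the trajectory factorization from Eq.~\ref{eq:trajectory} so that the ratio absorbs the $g'$-dependent action probabilities up to time $t$ and replaces them by $g$-dependent ones. Concretely, I would compute
\begin{equation*}
 p(\bs{\tau} \mid g', \bs{\theta}) \prod_{k=1}^{t} \frac{p(a_k \mid s_k, g, \bs{\theta})}{p(a_k \mid s_k, g', \bs{\theta})}
 = p(s_1) \Biggl[\prod_{k=1}^{t} p(a_k \mid s_k, g, \bs{\theta}) p(s_{k+1} \mid s_k, a_k)\Biggr] \Biggl[\prod_{k=t+1}^{T-1} p(a_k \mid s_k, g', \bs{\theta}) p(s_{k+1} \mid s_k, a_k)\Biggr],
\end{equation*}
so the summand becomes a product of a prefix trajectory probability under $g$ (up to $s_{t+1}$) and a suffix continuation probability under $g'$ (from $a_{t+1}$ onward).

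Next I would split the outer sum $\sum_{\bs\tau}$ into $\sum_{s_1, a_1, \ldots, s_t, a_t}$ and $\sum_{s_{t+1}, a_{t+1}, \ldots, s_T}$. Since neither $b_{t}^{\bs\theta}(s_t, g)$ nor $\nabla \log p(a_t \mid s_t, g, \bs\theta)$ depends on any variable after $a_t$, the suffix sum collapses: $\sum_{s_{t+1}} p(s_{t+1}\mid s_t,a_t) \sum_{a_{t+1}} p(a_{t+1}\mid s_{t+1}, g',\bs\theta) \cdots = 1$ by iterated normalization of conditional distributions. This reduces the inner expression to
\begin{equation*}
 \sum_{s_1,a_1,\ldots,s_t,a_t} p(s_1) \Biggl[\prod_{k=1}^{t-1} p(a_k \mid s_k, g, \bs\theta) p(s_{k+1}\mid s_k, a_k)\Biggr] p(a_t \mid s_t, g, \bs\theta) \nabla \log p(a_t \mid s_t, g, \bs\theta)\, b_{t}^{\bs\theta}(s_t, g).
\end{equation*}

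Finally I would apply the standard log-derivative baseline trick at time $t$: using $p(a_t \mid s_t,g,\bs\theta) \nabla \log p(a_t \mid s_t,g,\bs\theta) = \nabla p(a_t \mid s_t,g,\bs\theta)$ and pulling $b_{t}^{\bs\theta}(s_t, g)$ outside the inner sum (it is independent of $a_t$) gives $b_{t}^{\bs\theta}(s_t, g)\, \nabla \sum_{a_t} p(a_t \mid s_t,g,\bs\theta) = b_{t}^{\bs\theta}(s_t, g)\, \nabla 1 = 0$. Summing the resulting zero over $g$ and $t$, and multiplying by $p(g)$, yields the claim.

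The main bookkeeping risk, rather than a genuine obstacle, is making sure the prefix/suffix split of the trajectory probability lines up correctly with the index range of the importance-ratio product (which runs to $k=t$, one step further than the gradient log term's time index); getting the indices right so that the $p(s_{t+1}\mid s_t,a_t)$ factor ends up in the prefix is the only subtle point. After that, the computation is purely the standard baseline argument (cf.\ Theorem~\ref{th:pgbaseline}) applied to a reweighted distribution.
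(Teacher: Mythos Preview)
Your proposal is correct and follows essentially the same route as the paper's proof: combine the importance ratio with the trajectory factorization so the prefix up to time $t$ is reweighted to goal $g$, marginalize out the suffix (the paper does this implicitly by writing the expectation directly over $s_{1:t}, a_{1:t}$ via Lemma~\ref{th:lemma1}, whereas you sum it out explicitly), and then apply the log-derivative identity $\sum_{a_t} p(a_t\mid s_t,g,\bs\theta)\nabla\log p(a_t\mid s_t,g,\bs\theta)=\nabla 1=0$. The only cosmetic difference is that the paper packages the ratio-to-joint step through its Lemma~\ref{th:lemma1}, while you carry the product of factors by hand; the arguments are otherwise identical.
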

\normalsize
Appendix \ref{app:hpg:optimalbaseline} presents the constant baselines that minimize the (elementwise) variance of the corresponding gradient estimator. By analogy with the conventional practice, we suggest letting the baseline function $b_{t}^{\bs{\theta}}$ approximate the value function $V_{t}^{\bs{\theta}}$ instead.

Importantly, the choice of likelihood ratio in Lemma \ref{lm:bchpg} is far from unique. However, besides leading to straightforward estimation, it also underlies the advantage formulation presented below.
\begin{restatable}[Hindsight policy gradient, advantage formulation]{theorem}{theoremhpgadvantage}
 For an arbitrary (original) goal $g'$, the gradient $\nabla \eta(\bs{\theta})$ of the expected return with respect to $\bs{\theta}$ is given by
 \small
\begin{align}
 \nabla \eta(\bs{\theta}) = \sum_{\bs{\tau}} p(\bs{\tau} \mid g', \bs{\theta}) \sum_{g} p(g)   \sum_{t = 1}^{T -1} \nabla \log p(a_t \mid s_t, g, \bs{\theta}) \left[ \prod_{k=1}^{t} \frac{p(a_k \mid s_k, g, \bs{\theta})}{p(a_k \mid s_k, g', \bs{\theta})} \right] A_{t}^{\bs{\theta}}(s_t, a_t, g).
\end{align}
\normalsize
 \label{th:hpgadvantage}
\end{restatable}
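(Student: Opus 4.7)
The plan is to derive the advantage formulation from the per-decision hindsight policy gradient (Theorem~\ref{th:hpg}) by collapsing the importance-weighted tail return into a $Q$-function, and then converting $Q$ to the advantage $A$ via Lemma~\ref{lm:bchpg} applied with the value function as baseline. Let $\rho_k = p(a_k\mid s_k,g,\bs{\theta})/p(a_k\mid s_k,g',\bs{\theta})$ for brevity. The crux is to show that for each $t$ the inner quantity
\[
 \sum_{t'=t+1}^T\Bigl[\prod_{k=1}^{t'-1}\rho_k\Bigr]r(s_{t'},g),
\]
after taking expectation over the trajectory tail (under the original goal $g'$, conditional on the prefix up to $(s_t,a_t)$), equals $\bigl[\prod_{k=1}^{t}\rho_k\bigr]\,Q_t^{\bs{\theta}}(s_t,a_t,g)$.

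To establish that identity, fix $t$ and $t'\ge t+1$ and factor $\prod_{k=1}^{t'-1}\rho_k=\bigl[\prod_{k=1}^{t}\rho_k\bigr]\bigl[\prod_{k=t+1}^{t'-1}\rho_k\bigr]$. Using Eq.~\ref{eq:trajectory}, split $p(\bs{\tau}\mid g',\bs{\theta})$ into its prefix factor and the tail factor $p(\bs{\tau}_{>t}\mid s_t,a_t,g',\bs{\theta})=p(s_{t+1}\mid s_t,a_t)\prod_{k=t+1}^{T-1}p(a_k\mid s_k,g',\bs{\theta})\,p(s_{k+1}\mid s_k,a_k)$. Multiplying this tail factor by $\prod_{k=t+1}^{t'-1}\rho_k$ replaces $p(a_k\mid s_k,g',\bs{\theta})$ by $p(a_k\mid s_k,g,\bs{\theta})$ for $k\in\{t+1,\ldots,t'-1\}$; summing the remaining $g'$-action and transition factors for $k\ge t'$ contributes one; and what remains, summed over $a_{t+1},s_{t+2},\ldots,a_{t'-1}$, reduces to $p(s_{t'}\mid s_t,a_t,g,\bs{\theta})$ by the factorization of a goal-$g$ trajectory (exploiting that transitions are goal-independent, $S_{k+1}\independent G\mid S_k,A_k$). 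Multiplying by $r(s_{t'},g)$, summing over $s_{t'}$, and then over $t'=t+1,\ldots,T$ yields precisely $Q_t^{\bs{\theta}}(s_t,a_t,g)$.

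Substituting back into Theorem~\ref{th:hpg} and reinserting the outer sum over the full trajectory (trivial since the integrand no longer depends on $\bs{\tau}_{>t}$) gives the intermediate $Q$-function formulation
\[
 \nabla\eta(\bs{\theta})=\sum_{\bs{\tau}}p(\bs{\tau}\mid g',\bs{\theta})\sum_g p(g)\sum_{t=1}^{T-1}\nabla\log p(a_t\mid s_t,g,\bs{\theta})\Bigl[\prod_{k=1}^{t}\rho_k\Bigr]Q_t^{\bs{\theta}}(s_t,a_t,g).
\]
Lemma~\ref{lm:bchpg} with $b_t^{\bs{\theta}}(s_t,g)=V_t^{\bs{\theta}}(s_t,g)$ states that the corresponding expression with $V_t^{\bs{\theta}}(s_t,g)$ in place of $Q_t^{\bs{\theta}}(s_t,a_t,g)$ vanishes, so subtracting it leaves $A_t^{\bs{\theta}}(s_t,a_t,g)=Q_t^{\bs{\theta}}(s_t,a_t,g)-V_t^{\bs{\theta}}(s_t,g)$ inside the sum, which is exactly the claimed identity.

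The main obstacle is the tail-collapse step: one has to keep careful track of which action-probability terms have had their goal swapped (those with $k\in\{t+1,\ldots,t'-1\}$) and which have not, and use the goal-independence of transitions so that the resulting hybrid product becomes the true goal-$g$ marginal of $S_{t'}$. Everything else is straightforward algebraic rearrangement plus a direct invocation of Lemma~\ref{lm:bchpg}.
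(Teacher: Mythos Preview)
Your proof is correct, and the final step---subtracting the expression from Lemma~\ref{lm:bchpg} with $b_t^{\bs{\theta}}=V_t^{\bs{\theta}}$---is exactly what the paper does. The difference lies in how you reach the intermediate $Q$-function formulation (which the paper states separately as Lemma~\ref{lm:hpgactionvalue}). The paper does not start from the per-decision result (Theorem~\ref{th:hpg}) at all: instead it begins from the $Q$-form of the \emph{goal-conditional} policy gradient, Eq.~\ref{eq:pgactionvalue}, which is already an expectation over $(s_{1:t},a_{1:t})$ under goal $g$, and applies importance sampling directly to that prefix expectation to swap $g$ for $g'$. This is a one-line change of measure using Lemma~\ref{th:lemma1}, with no tail manipulation needed. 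Your route instead starts from Theorem~\ref{th:hpg} and performs a second tail-collapse (mirroring the one used to prove Theorem~\ref{th:hpg} from Theorem~\ref{th:intermediary_hpg}) to turn the importance-weighted future rewards into $Q_t^{\bs{\theta}}$. Both derivations are sound; the paper's is shorter because it leverages the fact that the $Q$-form of the ordinary policy gradient has already absorbed the tail, whereas yours is more self-contained in that it needs only the per-decision hindsight result and Lemma~\ref{lm:bchpg} as inputs.
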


Fortunately, the following result allows approximating the advantage under a goal using a state transition collected while pursuing another goal (see App. \ref{app:advantage}).
\begin{restatable}{theorem}{theoremadvantage}
 For every $t$ and $\bs{\theta}$, the advantage function $A_t^{\bs{\theta}}$ is given by
 \begin{align}
  A_t^{\bs{\theta}}(s, a, g) = \mathbb{E} \left[ r(S_{t+1}, g) + V_{t+1}^{\bs{\theta}}(S_{t+1}, g) - V_t^{\bs{\theta}}(s, g) \mid S_t = s, A_t = a \right].
 \end{align}
 \label{th:advantage}
\end{restatable}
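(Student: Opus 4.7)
The plan is to start from the definition $A_t^{\bs{\theta}}(s,a,g) = Q_t^{\bs{\theta}}(s,a,g) - V_t^{\bs{\theta}}(s,g)$ and rewrite $Q_t^{\bs{\theta}}$ as a one-step expectation of immediate reward plus bootstrapped value, in the standard Bellman style; the goal-independence of the transition model $S_{t+1} \independent G \mid S_t, A_t$ (which is implicit in Eq.\ \ref{eq:trajectory}, together with the fact that $p(s_{t+1}\mid s_t,a_t)$ does not depend on $\bs{\theta}$) is what lets us erase $g$ and $\bs{\theta}$ from the final conditioning event.

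Concretely, first I would expand
\begin{align*}
Q_t^{\bs{\theta}}(s,a,g) = \mathbb{E}\!\left[r(S_{t+1},g) \,\middle|\, S_t=s, A_t=a, g, \bs{\theta}\right] + \mathbb{E}\!\left[\sum_{t'=t+2}^{T} r(S_{t'},g) \,\middle|\, S_t=s, A_t=a, g, \bs{\theta}\right],
\end{align*}
then apply the tower property to the second term, conditioning on $S_{t+1}$, to recognize the inner conditional expectation as $V_{t+1}^{\bs{\theta}}(S_{t+1},g)$ (this uses the Markov-like structure of Eq.\ \ref{eq:trajectory}, where everything from step $t+1$ onward depends on the past only through $S_{t+1}$). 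This yields
\begin{align*}
Q_t^{\bs{\theta}}(s,a,g) = \mathbb{E}\!\left[r(S_{t+1},g) + V_{t+1}^{\bs{\theta}}(S_{t+1},g) \,\middle|\, S_t=s, A_t=a, g, \bs{\theta}\right].
\end{align*}

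Next I would invoke $S_{t+1} \independent G \mid S_t, A_t$ together with the fact that the transition kernel does not depend on $\bs{\theta}$ to strike $g$ and $\bs{\theta}$ from the conditioning (the integrand depends on $g$ and $\bs{\theta}$ only as parameters, not as the conditioning variables controlling the distribution of $S_{t+1}$). Finally, since $V_t^{\bs{\theta}}(s,g)$ is a constant with respect to the random variable $S_{t+1}$ under the conditioning event $\{S_t=s, A_t=a\}$, it can be absorbed inside the expectation, giving
\begin{align*}
A_t^{\bs{\theta}}(s,a,g) = \mathbb{E}\!\left[r(S_{t+1},g) + V_{t+1}^{\bs{\theta}}(S_{t+1},g) - V_t^{\bs{\theta}}(s,g) \,\middle|\, S_t=s, A_t=a\right],
\end{align*}
as required.

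The only delicate step is the removal of $g$ and $\bs{\theta}$ from the conditioning: it is tempting to conflate "$V_{t+1}^{\bs{\theta}}(\cdot,g)$ is parameterized by $g$ and $\bs{\theta}$" with "the expectation depends on $g$ and $\bs{\theta}$ through its sampling distribution." The cleanest way around this, which I would spell out explicitly, is to write the outer expectation as a sum over $s_{t+1}$ weighted by $p(s_{t+1}\mid s_t=s, a_t=a, g, \bs{\theta}) = p(s_{t+1}\mid s_t=s, a_t=a)$ and observe that the weighting factor itself is free of $g$ and $\bs{\theta}$.
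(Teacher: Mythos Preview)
Your proposal is correct and follows essentially the same route as the paper: the paper factors the Bellman-style identity $Q_t^{\bs{\theta}}(s,a,g)=\mathbb{E}[r(S_{t+1},g)+V_{t+1}^{\bs{\theta}}(S_{t+1},g)\mid S_t=s,A_t=a]$ into a separate lemma (proved by writing out the sum over $s_{t+1},a_{t+1},s_{t+2:T}$ and invoking $S_{t+1}\independent G,\bs{\Theta}\mid S_t,A_t$ together with the Markov property), and then obtains the theorem by subtracting $V_t^{\bs{\theta}}(s,g)$. Your tower-property phrasing and your careful remark about why $g,\bs{\theta}$ can be dropped from the conditioning are exactly the content of that lemma.
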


\section{Hindsight gradient estimators}
\label{sec:estimators}

This section details gradient estimation based on the results presented in the previous section. The corresponding proofs can be found in Appendix \ref{app:estimators}.

Consider a dataset (batch) $\mathcal{D} = \{ ( \bs{\tau}^{(i)}, g^{(i)} ) \}_{i=1}^N$ where each trajectory $\bs{\tau}^{(i)}$ is obtained using a policy parameterized by $\bs{\theta}$ in an attempt to achieve a goal $g^{(i)}$ chosen by the environment. 

The following result points to a straightforward estimator based on Theorem \ref{th:hpg}.
\begin{restatable}{theorem}{theorempdhpgestimator}
The per-decision hindsight policy gradient estimator, given by
\small
\begin{equation}
 \frac{1}{N} \sum_{i = 1}^N \sum_{g} p(g) \sum_{t=1}^{T-1} \nabla \log p(A_{t}^{(i)} \mid S_{t}^{(i)}, G^{(i)}=g, \bs{\theta}) \sum_{t' = t+1}^{T} \left[ \prod_{k=1}^{t' -1} \frac{p(A_k^{(i)} \mid S_k^{(i)}, G^{(i)}= g, \bs{\theta})}{p(A_k^{(i)} \mid S_k^{(i)}, G^{(i)}, \bs{\theta})} \right] r(S_{t'}^{(i)}, g),
  \label{eq:pdhpgestimator}
\end{equation}
\normalsize
is a consistent and unbiased estimator of the gradient $\nabla \eta(\bs{\theta})$ of the expected return.
\label{th:pdhpgestimator}
\end{restatable}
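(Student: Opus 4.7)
The plan is to reduce unbiasedness of the estimator to Theorem \ref{th:hpg}, and consistency to the strong law of large numbers applied to the i.i.d.\ average. By linearity of expectation, it suffices to show that a single summand (for one fixed $i$) has expectation $\nabla \eta(\bs{\theta})$; the outer $\tfrac{1}{N} \sum_{i=1}^{N}$ then contributes nothing to the bias and yields convergence as $N \to \infty$.

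For unbiasedness, I would first identify the random environment goal $G^{(i)}$ with the ``arbitrary original goal'' $g'$ of Theorem \ref{th:hpg}. This identification is natural because in $\mathcal{D}$ the trajectory $\bs{\tau}^{(i)}$ is drawn from $p(\bs{\tau} \mid G^{(i)}, \bs{\theta})$ and the denominator of the importance weight uses precisely $G^{(i)}$. Writing the expectation of one summand by conditioning on $G^{(i)} = g'$ then produces
\begin{align*}
\sum_{g'} p(g') \sum_{\bs{\tau}} p(\bs{\tau} \mid g', \bs{\theta}) \sum_{g} p(g) \sum_{t=1}^{T-1} \nabla \log p(a_t \mid s_t, g, \bs{\theta}) \sum_{t'=t+1}^{T} \left[ \prod_{k=1}^{t'-1} \frac{p(a_k \mid s_k, g, \bs{\theta})}{p(a_k \mid s_k, g', \bs{\theta})} \right] r(s_{t'}, g).
\end{align*}
For each fixed $g'$, everything from $\sum_{\bs{\tau}}$ onward is exactly the right-hand side of Theorem \ref{th:hpg}, hence equals $\nabla \eta(\bs{\theta})$. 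Marginalizing via $\sum_{g'} p(g') = 1$ yields $\nabla \eta(\bs{\theta})$, establishing unbiasedness. Consistency then follows from the strong law of large numbers: the summands indexed by $i$ are i.i.d.\ copies of a random vector with finite mean (finiteness is ensured by the finiteness of $\Val(G)$, $\Val(S_t)$, $\Val(A_t)$, the boundedness of rewards on this finite domain, and the hypothesis $p(\bs{\tau} \mid g, \bs{\theta}) > 0$, which keeps every likelihood ratio bounded), so the sample mean converges almost surely to $\nabla \eta(\bs{\theta})$.

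The main obstacle is conceptual rather than computational: because $G^{(i)}$ appears both as the goal driving the behavior distribution $p(\bs{\tau} \mid G^{(i)}, \bs{\theta})$ and as the denominator of the importance weight, one might worry that its randomness introduces bias into the hindsight reweighting. The resolution is the \emph{universality} of Theorem \ref{th:hpg} over $g'$: the inner expression is literally constant in $g'$ (equal to $\nabla \eta(\bs{\theta})$), so the outer marginalization over $G^{(i)}$ is trivial. This is the one step that deserves careful narration in the written proof; everything else is bookkeeping.
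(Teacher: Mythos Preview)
Your proposal is correct and matches the paper's own proof essentially step for step: condition on the environment-drawn goal $G^{(i)}=g'$, invoke Theorem~\ref{th:hpg} to conclude that the inner expectation equals $\nabla\eta(\bs{\theta})$ for every fixed $g'$, marginalize over $g'$ to obtain unbiasedness, and then apply the strong law of large numbers to the i.i.d.\ average for consistency. Your added remark justifying finiteness of the mean (via finiteness of the spaces and positivity of $p(\bs{\tau}\mid g,\bs{\theta})$) is a welcome detail that the paper leaves implicit.
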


In preliminary experiments, we found that this estimator leads to unstable learning progress, which is probably due to its potential high variance. The following result, inspired by weighted importance sampling \citep{bishop2013pattern}, represents our attempt to trade variance for bias.
\begin{restatable}{theorem}{theoremwhpgconsistent}
 The weighted per-decision hindsight policy gradient estimator, given by
 \small
\begin{equation}
 \sum_{i = 1}^N \sum_{g} p(g) \sum_{t=1}^{T-1} \nabla \log p(A_{t}^{(i)} \mid S_{t}^{(i)}, G^{(i)}=g, \bs{\theta}) \sum_{t' = t+1}^{T} \frac{\left[ \prod_{k=1}^{t' -1} \frac{p(A_k^{(i)} \mid S_k^{(i)}, G^{(i)}= g, \bs{\theta})}{p(A_k^{(i)} \mid S_k^{(i)}, G^{(i)}, \bs{\theta})} \right]r(S_{t'}^{(i)}, g)}{\sum_{j=1}^N \left[ \prod_{k=1}^{t' -1} \frac{p(A_k^{(j)} \mid S_k^{(j)}, G^{(j)}= g, \bs{\theta})}{p(A_k^{(j)} \mid S_k^{(j)}, G^{(j)}, \bs{\theta})} \right]},
 \label{eq:wpdhpgestimator}
\end{equation}
\normalsize
is a consistent estimator of the gradient $\nabla \eta(\bs{\theta})$ of the expected return.
 \label{th:pdwe}
\end{restatable}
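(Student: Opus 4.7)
The plan is to recognize the estimator as a finite sum, indexed by $(g,t,t')$, of ratios of empirical means, apply the weak law of large numbers to numerator and denominator separately, and combine them via Slutsky's theorem (equivalently, the continuous mapping theorem). Since $\sum_{j=1}^N w_j^{(g,t')}$ does not depend on $i$, where $w_i^{(g,t')} = \prod_{k=1}^{t'-1} p(A_k^{(i)}\mid S_k^{(i)},g,\bs{\theta})/p(A_k^{(i)}\mid S_k^{(i)},G^{(i)},\bs{\theta})$, the estimator can be rewritten as
\[
\hat{\nabla}_N = \sum_g p(g) \sum_{t=1}^{T-1} \sum_{t'=t+1}^{T} \frac{\frac{1}{N}\sum_{i=1}^N \nabla \log p(A_t^{(i)}\mid S_t^{(i)},g,\bs{\theta}) \, w_i^{(g,t')} \, r(S_{t'}^{(i)},g)}{\frac{1}{N}\sum_{j=1}^N w_j^{(g,t')}}.
\]
Because the sums over $g$, $t$, and $t'$ are finite, it suffices to establish convergence in probability for each $(g,t,t')$-term.

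Next I would show the denominator converges in probability to $1$. By the weak LLN, $\frac{1}{N}\sum_j w_j^{(g,t')} \to \mathbb{E}[w^{(g,t')}]$, so the task is to verify that this expectation equals one. Conditioning on $G^{(j)}=g'$ and substituting the factorization in Eq.~(\ref{eq:trajectory}), the importance ratio cancels the policy factors $p(a_k\mid s_k,g',\bs{\theta})$ for $k=1,\ldots,t'-1$ and replaces them with $p(a_k\mid s_k,g,\bs{\theta})$; then summing out $a_{t'},s_{t'+1},\ldots,s_T$ in order yields $1$, and the remaining sum over $s_1,a_1,\ldots,s_{t'}$ is the total mass of a partial trajectory distribution under $g$, which also equals $1$. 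Taking the outer expectation over $G^{(j)}\sim p(g')$ preserves the value, so $\mathbb{E}[w^{(g,t')}]=1$. Positivity of $p(\bs{\tau}\mid g,\bs{\theta})$ ensures $w_j^{(g,t')}>0$ almost surely, so the denominator stays bounded away from zero with probability approaching one and the ratio is well-defined.

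Finally I would identify the limit of the numerator. By the weak LLN,
\[
\frac{1}{N}\sum_i \nabla \log p(A_t^{(i)}\mid S_t^{(i)},g,\bs{\theta}) \, w_i^{(g,t')} \, r(S_{t'}^{(i)},g) \;\longrightarrow\; \mathbb{E}\!\left[\nabla \log p(A_t\mid S_t,g,\bs{\theta}) \, w^{(g,t')} \, r(S_{t'},g)\right]
\]
in probability, with the expectation taken over $G\sim p(g')$ and $\bs{\tau}\mid G \sim p(\bs{\tau}\mid G,\bs{\theta})$. Expanding the expectation as $\sum_{g'}p(g')\sum_{\bs{\tau}}p(\bs{\tau}\mid g',\bs{\theta})(\cdot)$ and summing over $g,t,t'$ reproduces the right-hand side of Eq.~(\ref{eq:hpg}) averaged over $g'\sim p(g')$; since Theorem~\ref{th:hpg} asserts this expression equals $\nabla \eta(\bs{\theta})$ for \emph{every} $g'$, the average also equals $\nabla \eta(\bs{\theta})$. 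Slutsky's theorem applied termwise then yields $\hat{\nabla}_N \to \nabla \eta(\bs{\theta})$ in probability, which is the desired consistency. The main obstacle is the telescoping step in the denominator: confirming that $\mathbb{E}[w^{(g,t')}]=1$ despite the additional randomness in the pursued goal $G^{(j)}$, which is exactly what allows the weighted normalization to serve as a drop-in self-normalizing correction without further adjustment.
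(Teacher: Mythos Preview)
Your proposal is correct and follows essentially the same structure as the paper's proof: decompose the estimator into a finite sum over $(g,t,t')$ of ratios of empirical means, apply the law of large numbers to numerator and denominator separately, and combine. The only minor differences are that the paper uses the strong law (establishing almost sure convergence) rather than the weak law plus Slutsky, and that the paper simplifies each numerator's limiting expectation to one taken directly under goal $g$ (then invokes Theorem~\ref{th:pg}) rather than leaving the outer expectation over the behavior goal $g'$ and invoking Theorem~\ref{th:hpg}.
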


In simple terms, the likelihood ratio for every combination of trajectory, (alternative) goal, and time step is normalized across trajectories by this estimator. In Appendix \ref{app:wbaselineestimator}, we present a result that enables the corresponding consistency-preserving \emph{weighted baseline}.

Consider a set $\mathcal{G}^{(i)} = \{g \in \Val(G) \mid \text{ exists a $t$ such that $r(s_{t}^{(i)}, g) \neq 0$} \}$ composed of so-called \emph{active goals} during the $i$-th episode. The feasibility of the proposed estimators relies on the fact that only active goals correspond to non-zero terms inside the expectation over goals in Expressions \ref{eq:pdhpgestimator} and \ref{eq:wpdhpgestimator}. In many natural sparse-reward environments, active goals will correspond directly to states visited during episodes (for instance, the cities visited while trying to reach other cities), which enables computing said expectation exactly when the goal distribution is known. 

The proposed estimators have remarkable properties that differentiate them from previous (weighted) importance sampling estimators for off-policy learning. For instance, although a trajectory is often more likely under the original goal than under an alternative goal, in policies with strong optimal substructure, a high probability of a trajectory between the state $a$ and the goal (state) $c$ that goes through the state $b$ may naturally allow for a high probability of the corresponding (sub)trajectory between the state $a$ and the goal (state) $b$. In other cases, the (unnormalized) likelihood ratios may become very small for some (alternative) goals after a few time steps across all trajectories. After normalization, in the worst case, this may even lead to equivalent ratios for such goals for a given time step across all trajectories. In any case, it is important to note that only likelihood ratios associated to active goals for a given episode will affect the gradient estimate. Additionally, an original goal will always have (unnormalized) likelihood ratios equal to one for the corresponding episode. 

Under mild additional assumptions, the proposed estimators also allow using a dataset containing goals chosen arbitrarily (instead of goals drawn from the goal distribution). Although this feature is not required by our experiments, we believe that it may be useful to circumvent \emph{catastrophic forgetting} during curriculum learning \citep{mccloskey1989catastrophic, kirkpatrick2017overcoming}.

\section{Experiments}
\label{sec:experiments}

\newcommand{\gcpg}[0]{GCPG\xspace}
\newcommand{\gcpgplusb}[0]{GCPG+B\xspace}
\newcommand{\hpg}[0]{HPG\xspace}
\newcommand{\hpgplusb}[0]{HPG+B\xspace}

\newcommand{\batchsize}{batch size }

This section reports results of an empirical comparison between goal-conditional policy gradient estimators and hindsight policy gradient estimators.\footnote{An open-source implementation of these estimators is available on \url{http://paulorauber.com/hpg}.} Because there are no well-established sparse-reward environments intended to test agents under multiple goals, this comparison focuses on our own selection of environments. These environments are diverse in terms of stochasticity, state space dimensionality and size, relationship between goals and states, and number of actions. In every one of these environments, the agent receives the remaining number of time steps plus one as a reward for reaching the goal state, which also ends the episode. In every other situation, the agent receives no reward.

\begin{figure}[hb]
\centering
    \begin{floatrow}[3]
      \ffigbox{\caption{Four rooms.}\label{fig:illustration_fourrooms}}{%
        \includegraphics[height=0.31\textwidth]{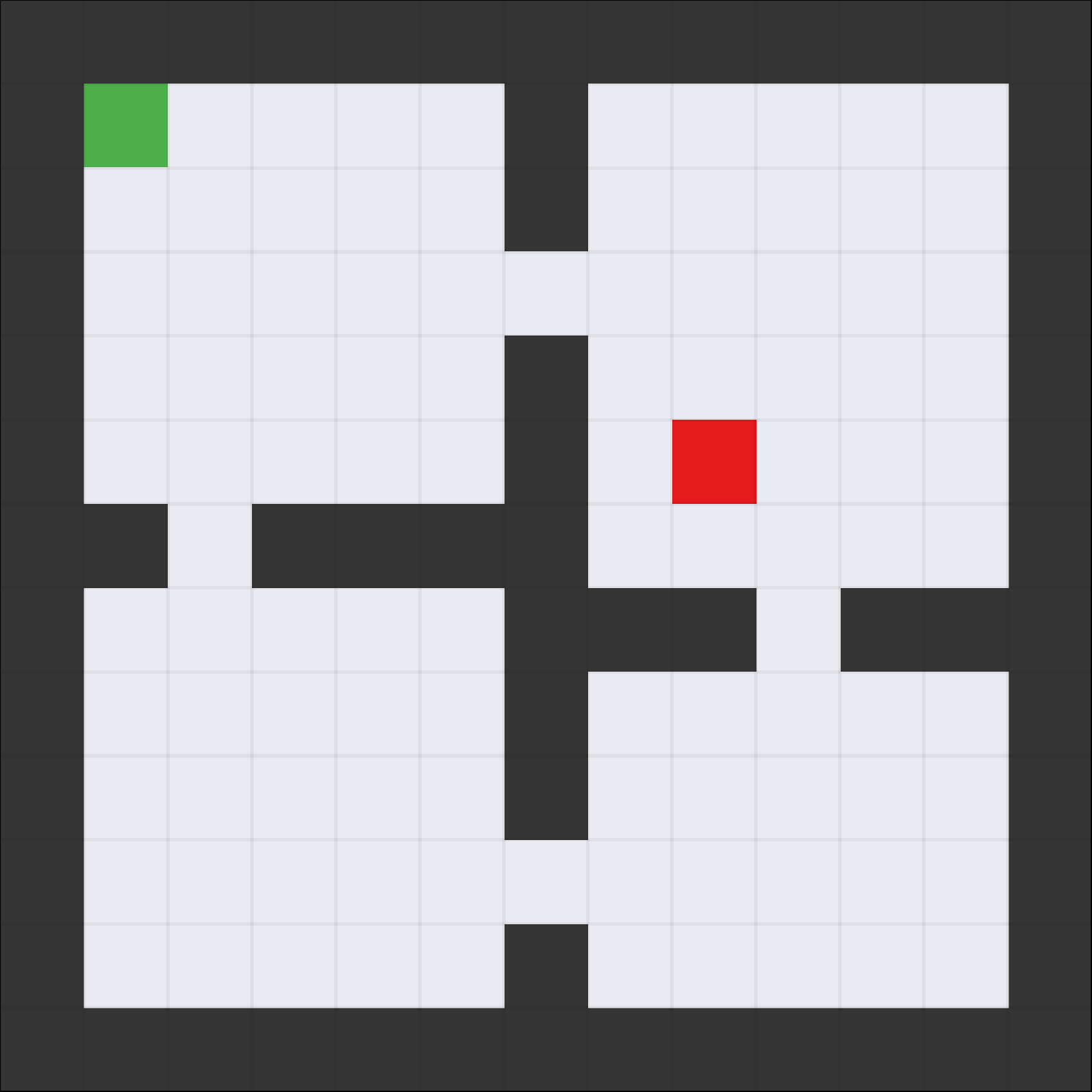}
      }
      \ffigbox{\caption{Ms. Pac-man.}\label{fig:illustration_mspacman}}{%
        \includegraphics[height=0.31\textwidth]{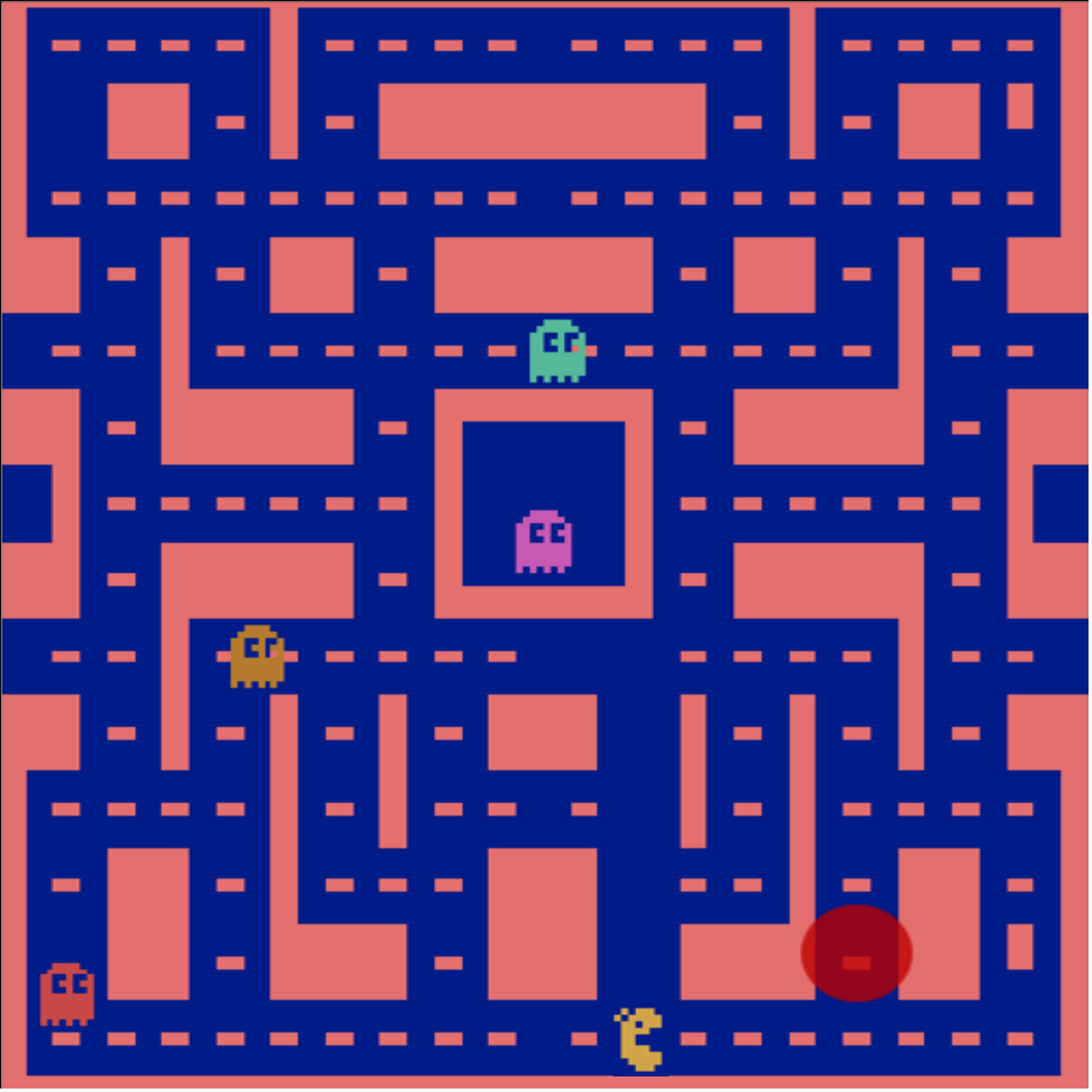}
      }
      \ffigbox{\caption{FetchPush.}\label{fig:illustration_fetchpush}}{%
        \includegraphics[height=0.31\textwidth]{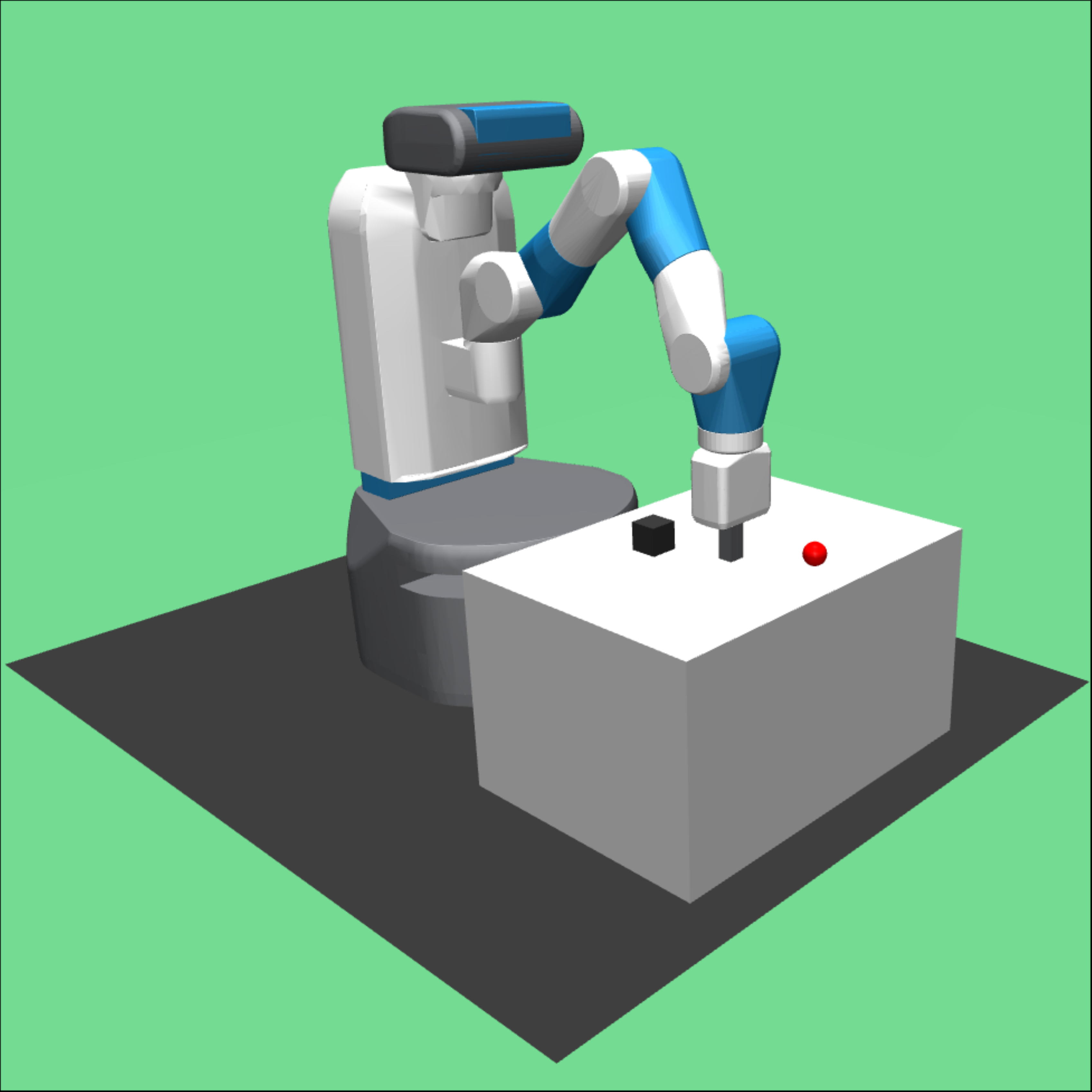}
      }
    \end{floatrow}%
\end{figure}

Importantly, the weighted per-decision hindsight policy gradient estimator used in our experiments (\emph{\hpg}) does not precisely correspond to Expression \ref{eq:wpdhpgestimator}. Firstly, the original estimator requires a constant number of time steps $T$, which would often require the agent to act \emph{after} the end of an episode in the environments that we consider. Secondly, although it is feasible to compute Expression \ref{eq:wpdhpgestimator} exactly when the goal distribution is known (as explained in Sec. \ref{sec:estimators}), we sometimes subsample the sets of active goals per episode. Furthermore, when including a baseline that approximates the value function, we again consider only active goals, which by itself generally results in an inconsistent estimator (\emph{\hpgplusb}). As will become evident in the following sections, these \emph{compromised} estimators still lead to remarkable sample efficiency.

We assess sample efficiency through \emph{learning curves} and \emph{average performance} scores, which are obtained as follows. After collecting a number of batches (composed of trajectories and goals), each of which enables one step of gradient ascent, an agent undergoes \emph{evaluation}. During evaluation, the agent interacts with the environment for a number of episodes, selecting actions with maximum probability according to its policy.  A learning curve shows the average return obtained during each evaluation step, averaged across multiple \emph{runs} (independent learning procedures). The curves presented in this text also include a $95\%$ bootstrapped confidence interval. The average performance is given by the average return across evaluation steps, averaged across runs. During both training and evaluation, goals are drawn uniformly at random. Note that there is no held-out set of goals for evaluation, since we are interested in evaluating sample efficiency instead of generalization.

For every combination of environment and batch size, grid search is used to select hyperparameters for each estimator according to average performance scores (after the corresponding standard deviation across runs is subtracted, as suggested by \citet{duan2016benchmarking}). \emph{Definitive results} are obtained by using the best hyperparameters found for each estimator in additional runs. In this section, we discuss definitive results for small ($2$) and medium ($16$) batch sizes. 

More details about our experiments can be found in Appendices \ref{app:experiments:architectures} and \ref{app:experiments:settings}. Appendix \ref{app:experiments:results} contains unabridged results, a supplementary empirical study of likelihood ratios (Appendix \ref{app:experiments:lra}), and an empirical comparison with hindsight experience replay (Appendix \ref{app:experiments:her}).

\subsection{Bit flipping environments}
\label{sec:experiments:bf}

In a \emph{bit flipping} environment, the agent starts every episode in the same state ($\mathbf{0}$, represented by $k$ bits), and its goal is to reach a randomly chosen state. The actions allow the agent to toggle (flip) each bit individually. The maximum number of time steps is $k + 1$. Despite its apparent simplicity, this environment is an ideal testbed for reinforcement learning algorithms intended to deal with sparse rewards, since obtaining a reward by chance is unlikely even for a relatively small $k$. \citet{andrychowicz2017hindsight} employed a similar environment to evaluate their hindsight approach.

Figure \ref{fig:bf8bs16} presents the learning curves for $k=8$. Goal-conditional policy gradient estimators with and without an approximate value function baseline (\emph{\gcpgplusb} and \emph{\gcpg}, respectively) obtain excellent policies and lead to comparable sample efficiency. \hpgplusb obtains excellent policies more than $400$ batches earlier than these estimators, but its policies degrade upon additional training. Additional experiments strongly suggest that the main cause of this issue is the fact that the value function baseline is still very poorly fit by the time that the policy exhibits desirable behavior. In comparison, \emph{\hpg} obtains excellent policies as early as \hpgplusb, but its policies remain remarkably stable upon additional training.
\begin{figure}[h]
    \centering
    \begin{floatrow}
      \ffigbox{\caption{Bit flipping ($k=8$, \batchsize $16$).}\label{fig:bf8bs16}}{%
        \includegraphics[width=1.0\linewidth]{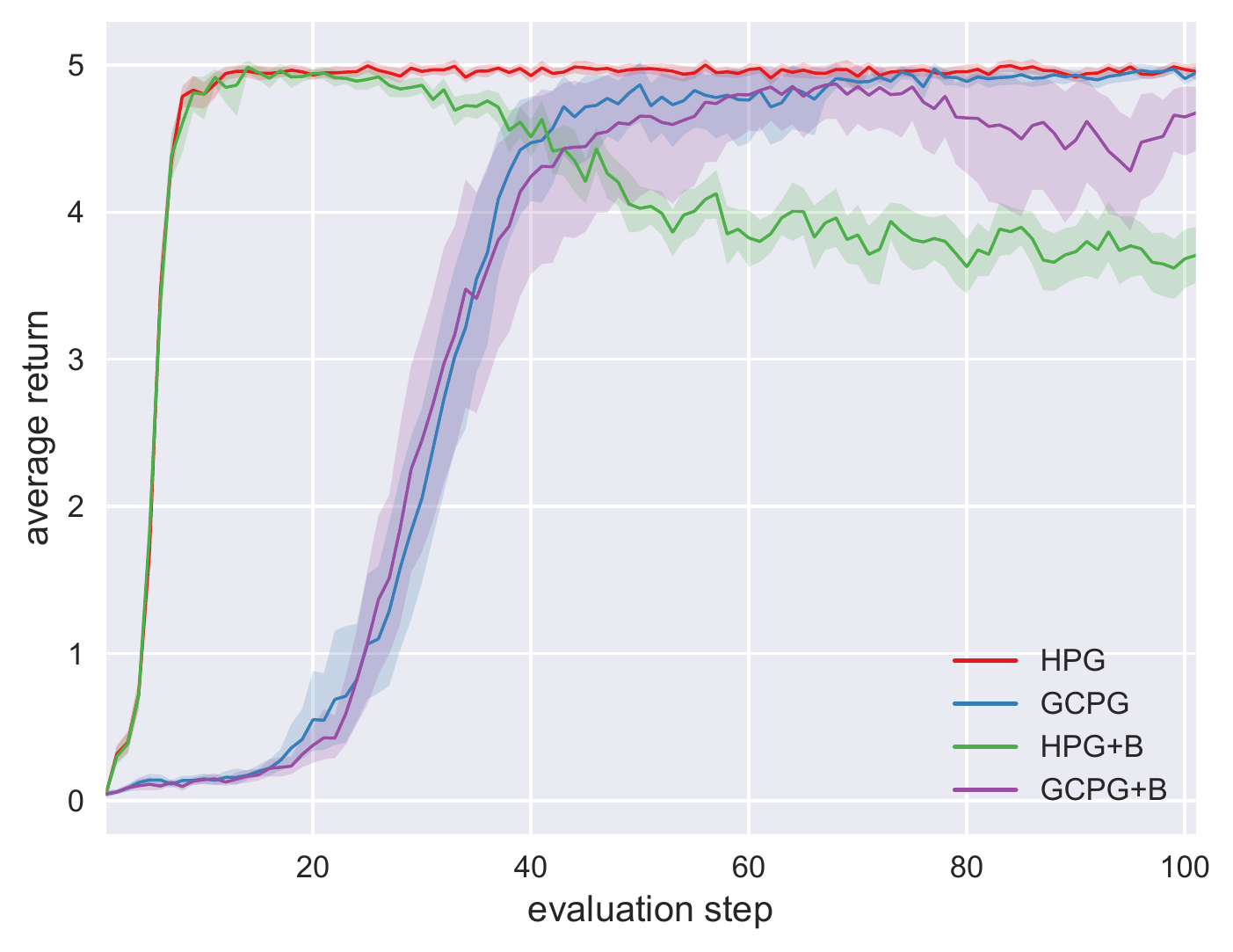}
      }
      \ffigbox{\caption{Bit flipping ($k=16$, \batchsize $16$).}\label{fig:bf16bs16}}{%
        \includegraphics[width=1.0\linewidth]{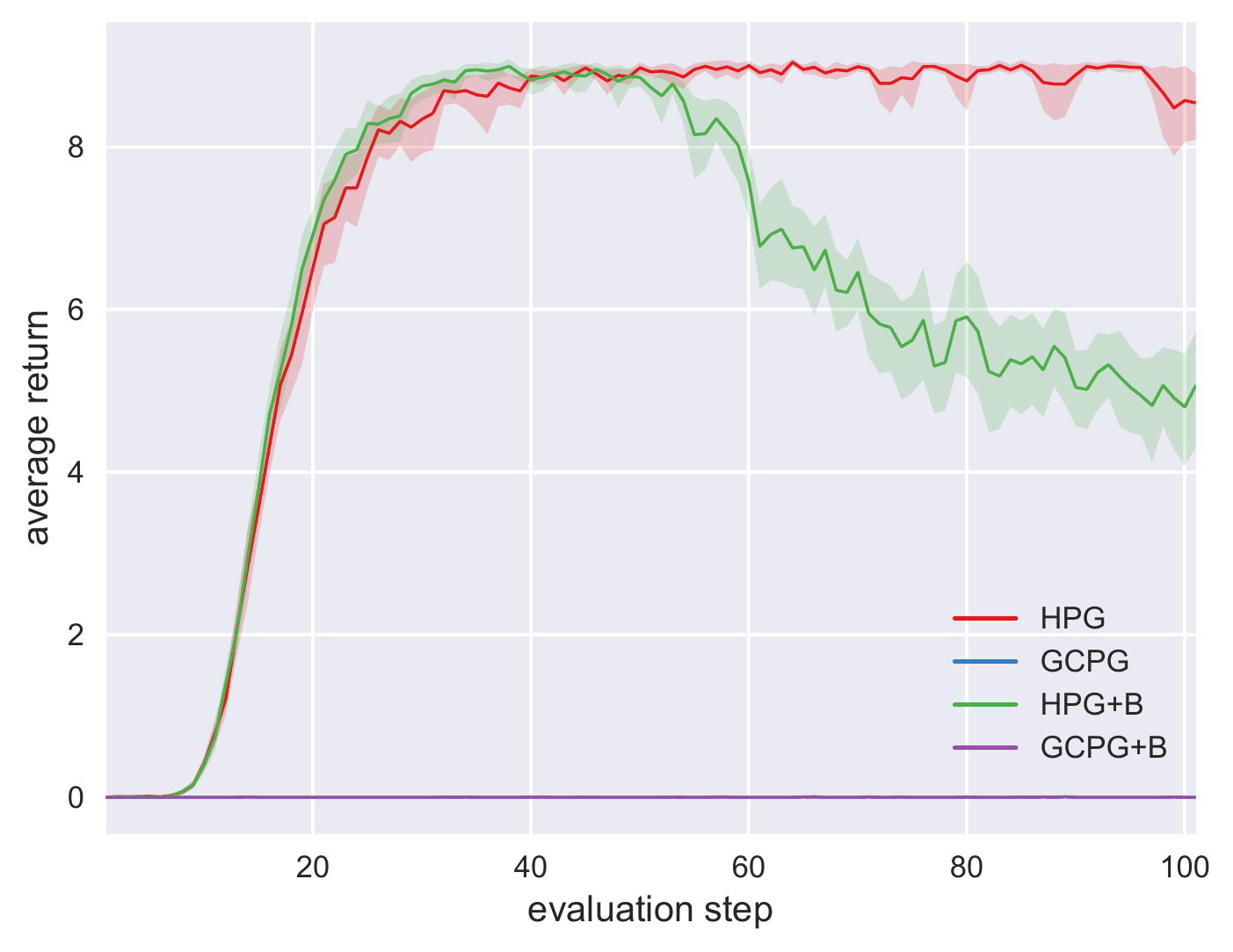}
      }
    \end{floatrow}
\end{figure}

The learning curves for $k=16$ are presented in Figure \ref{fig:bf16bs16}. Clearly, both \gcpg and \gcpgplusb are unable to obtain policies that perform better than chance, which is explained by the fact that they rarely incorporate reward signals during training. Confirming the importance of hindsight, \hpg leads to stable and sample efficient learning. Although \hpgplusb also obtains excellent policies, they deteriorate upon additional training.

Similar results can be observed for a small batch size (see App. \ref{app:experiments:results:lcbs2}). The average performance results documented in Appendix \ref{app:experiments:results:ap} confirm that \hpg leads to remarkable sample efficiency. Importantly, Appendices \ref{app:experiments:results:sensitivitybs2} and \ref{app:experiments:results:sensitivitybs16} present hyperparameter sensitivity graphs suggesting that \hpg is less sensitive to hyperparameter settings than the other estimators. The same two appendices also document an ablation study where the likelihood ratios are removed from \hpg, which notably promotes increased hyperparameter sensitivity. This study confirms the usefulness of the correction prescribed by importance sampling.

\subsection{Grid world environments}
\label{sec:experiments:gw}

In the \emph{grid world} environments that we consider, the agent starts every episode in a (possibly random) position on an $11 \times 11$ grid, and its goal is to reach a randomly chosen (non-initial) position. Some of the positions on the grid may contain impassable obstacles (walls). The actions allow the agent to move in the four cardinal directions. Moving towards walls causes the agent to remain in its current position. A state or goal is represented by a pair of integers between $0$ and $10$. The maximum number of time steps is $32$. In the \emph{empty room} environment, the agent starts every episode in the upper left corner of the grid, and there are no walls. In the \emph{four rooms} environment \citep{sutton1999between}, the agent starts every episode in one of the four corners of the grid (see Fig. \ref{fig:illustration_fourrooms}). There are walls that partition the grid into four rooms, such that each room provides access to two other rooms through single openings (doors). With probability $0.2$, the action chosen by the agent is ignored and replaced by a random action.

Figure \ref{fig:erbs16} shows the learning curves for the empty room environment. Clearly, every estimator obtains excellent policies, although \hpg and \hpgplusb improve sample efficiency by at least $200$ batches. The learning curves for the four rooms environment are presented in Figure \ref{fig:frbs16}. In this surprisingly challenging environment, every estimator obtains unsatisfactory policies. However, it is still clear that \hpg and \hpgplusb improve sample efficiency. In contrast to the experiments presented in the previous section, \hpgplusb does not give rise to instability, which we attribute to easier value function estimation. Similar results can be observed for a small batch size (see App. \ref{app:experiments:results:lcbs2}). \hpg achieves the best average performance in every grid world experiment except for a single case, where the best average performance is achieved by \hpgplusb (see App. \ref{app:experiments:results:ap}). The hyperparameter sensitivity graphs presented in Appendices \ref{app:experiments:results:sensitivitybs2} and \ref{app:experiments:results:sensitivitybs16} once again suggest that \hpg is less sensitive to hyperparameter choices, and that ignoring likelihood ratios promotes increased sensitivity (at least in the four rooms environment).
\begin{figure}[h]
    \centering
    \begin{floatrow}
      \ffigbox{\caption{Empty room (\batchsize $16$).}\label{fig:erbs16}}{%
        \includegraphics[width=1.0\linewidth]{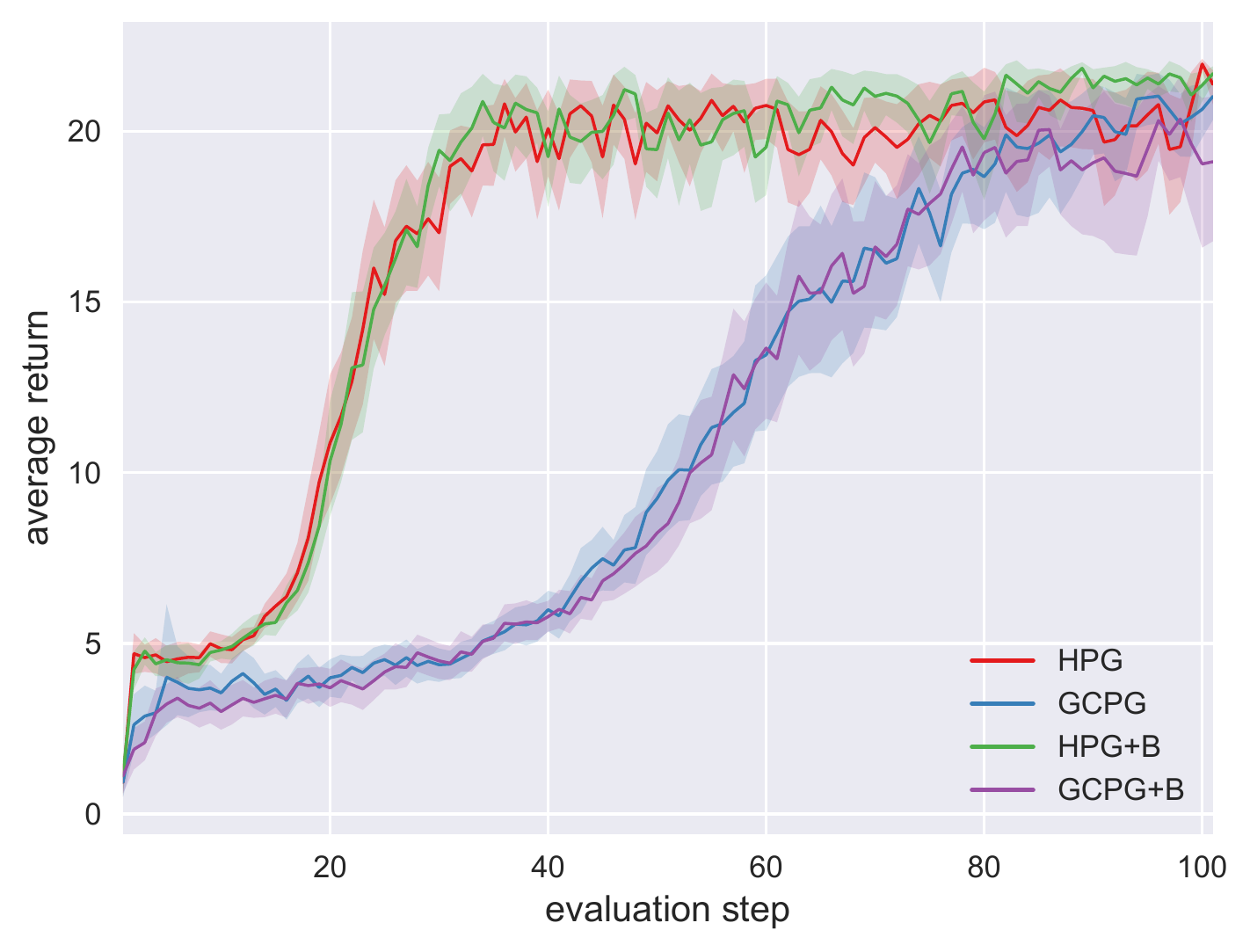}
      }
      \ffigbox{\caption{Four rooms (\batchsize $16$).}\label{fig:frbs16}}{%
        \includegraphics[width=1.0\linewidth]{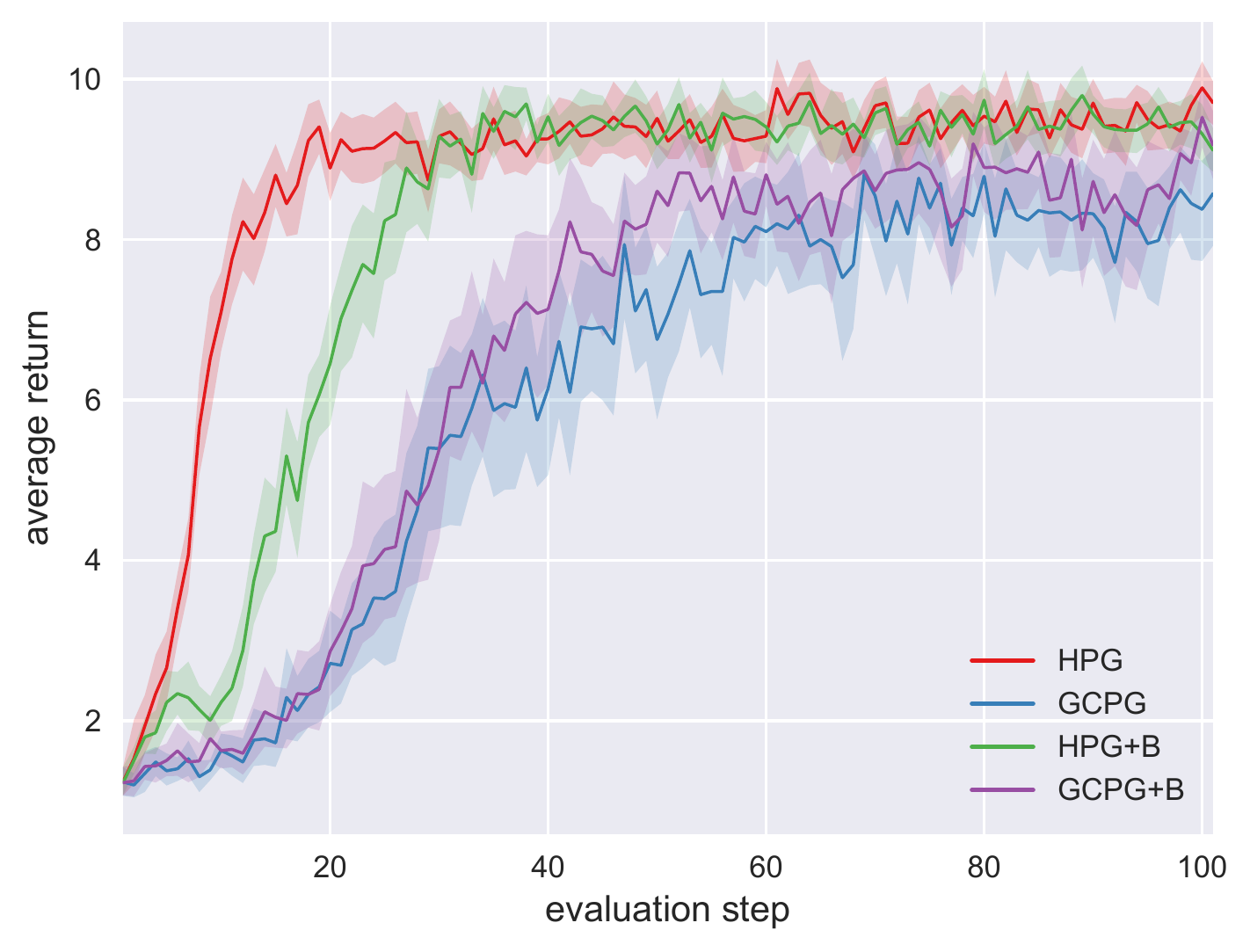}
      }
    \end{floatrow}
\end{figure}

\subsection{Ms. Pac-man environment}
\label{sec:experiments:mp}

The \emph{Ms. Pac-man} environment is a variant of the homonymous game for ATARI 2600 (see Fig. \ref{fig:illustration_mspacman}). The agent starts every episode close to the center of the map, and its goal is to reach a randomly chosen (non-initial) position on a $14 \times 19$ grid defined on the game screen. The actions allow the agent to move in the four cardinal directions for $13$ game ticks. A state is represented by the result of preprocessing a sequence of game screens (images) as described in Appendix \ref{app:experiments:architectures}. A goal is represented by a pair of integers. The maximum number of time steps is $28$, although an episode will also end if the agent is captured by an enemy. In comparison to the grid world environments considered in the previous section, this environment is additionally challenging due to its high-dimensional states and the presence of enemies. 

Figure \ref{fig:mpbs16} presents the learning curves for a medium batch size. Approximate value function baselines are excluded from this experiment due to the significant cost of systematic hyperparameter search. Although \hpg obtains better policies during early training, \gcpg obtains better final policies. However, for such a medium batch size, only $3$ active goals per episode (out of potentially $28$) are subsampled for \hpg. Although this harsh subsampling brings computational efficiency, it also appears to handicap the estimator. This hypothesis is supported by the fact that \hpg outperforms \gcpg for a small batch size, when all active goals are used (see Apps. \ref{app:experiments:results:lcbs2} and \ref{app:experiments:results:ap}). Policies obtained using each estimator are illustrated by videos included on the project website.
\begin{figure}[h]
    \centering
    \begin{floatrow}
      \ffigbox{\caption{Ms. Pac-man (\batchsize $16$).}\label{fig:mpbs16}}{%
        \includegraphics[width=1.0\linewidth]{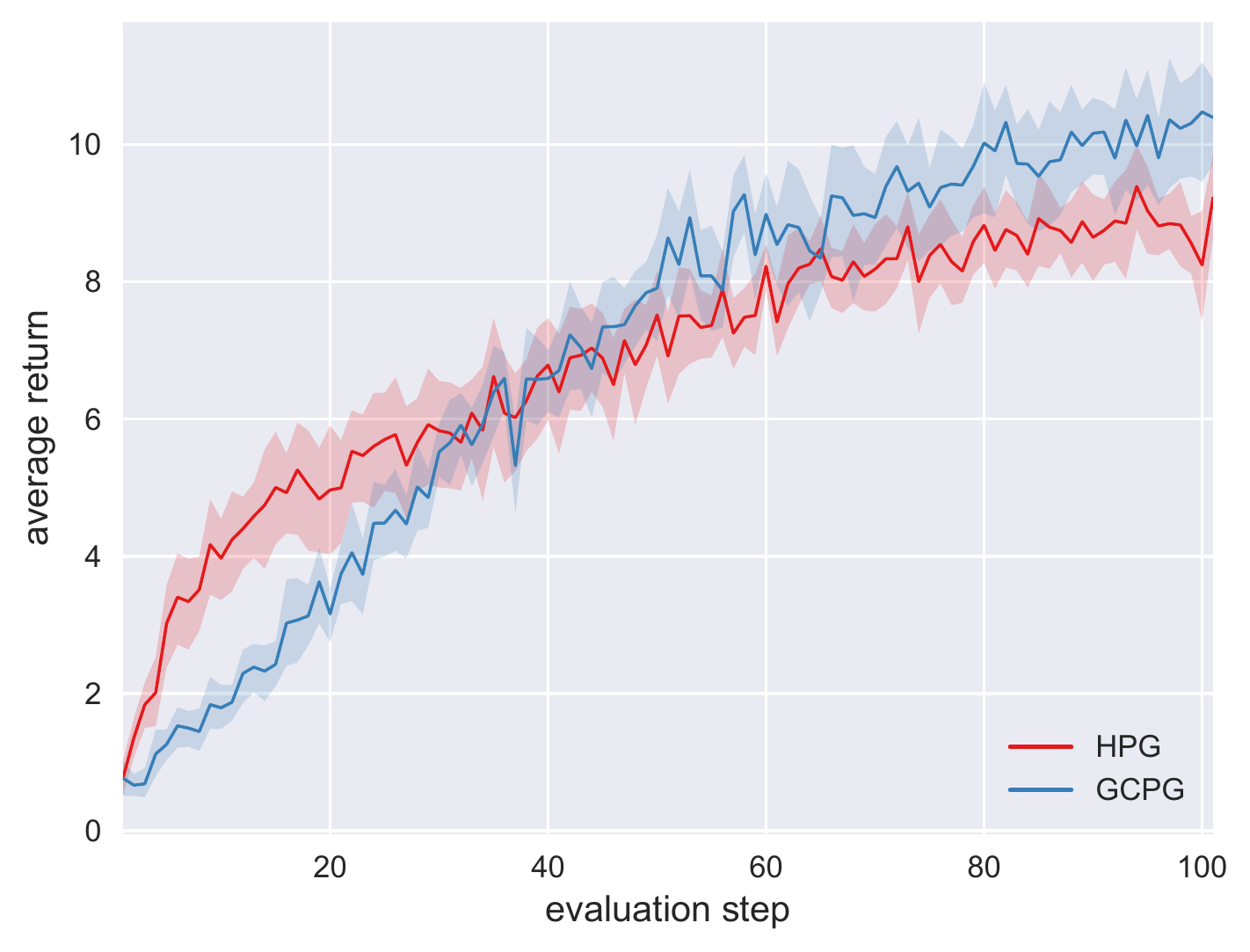}
      }
      \ffigbox{\caption{FetchPush  (\batchsize $16$).}\label{fig:fpbs16}}{%
        \includegraphics[width=1.0\linewidth]{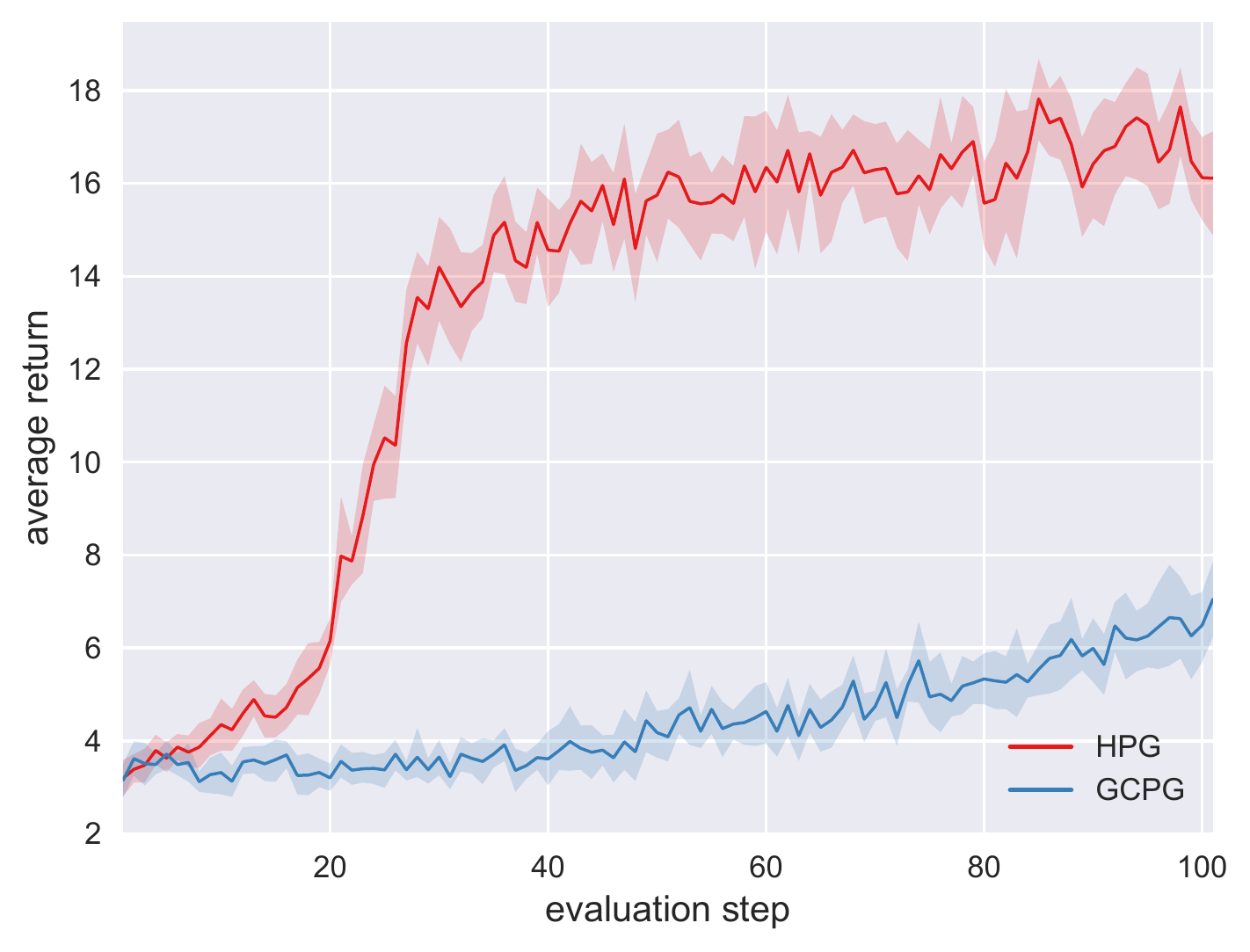}
      }
    \end{floatrow}
\end{figure}

\subsection{FetchPush environment}
\label{sec:experiments:fp}

The \emph{FetchPush} environment is a variant of the environment recently proposed by \citet{plappert2018multi} to assess goal-conditional policy learning algorithms in a challenging task of practical interest (see Fig. \ref{fig:illustration_fetchpush}). In a simulation, a robotic arm with seven degrees of freedom is required to push a randomly placed object (block) towards a randomly chosen position. The arm starts every episode in the same configuration. In contrast to the original environment, the actions in our variant allow increasing the desired velocity of the gripper along each of two orthogonal directions by $\pm 0.1$ or $\pm 1$, leading to a total of eight actions. A state is represented by a $28$-dimensional real vector that contains the following information: positions of the gripper and block; rotational and positional velocities of the gripper and block; relative position of the block with respect to the gripper; state of the gripper; and current desired velocity of the gripper along each direction. A goal is represented by three coordinates. The maximum number of time steps is $50$.

Figure \ref{fig:fpbs16} presents the learning curves for a medium batch size. \hpg obtains good policies after a reasonable number of batches, in sharp contrast to \gcpg. For such a medium batch size, only $3$ active goals per episode (out of potentially $50$) are subsampled for \hpg, showing that subsampling is a viable alternative to reduce the computational cost of hindsight. Similar results are observed for a small batch size, when all active goals are used (see Apps. \ref{app:experiments:results:lcbs2} and \ref{app:experiments:results:ap}). Policies obtained using each estimator are illustrated by videos included on the project website.

\section{Conclusion}
\label{sec:conclusion}

We introduced techniques that enable learning goal-conditional policies using hindsight. In this context, hindsight refers to the capacity to exploit information about the degree to which an arbitrary goal has been achieved while another goal was intended.  Prior to our work, hindsight has been limited to off-policy reinforcement learning algorithms that rely on experience replay \citep{andrychowicz2017hindsight} and policy search based on Bayesian optimization \citep{karkus2016factored}.

In addition to the fundamental hindsight policy gradient, our technical results include its baseline and advantage formulations. These results are based on a self-contained goal-conditional policy framework that is also introduced in this text. Besides the straightforward estimator built upon the per-decision hindsight policy gradient, we also presented a consistent estimator inspired by weighted importance sampling, together with the corresponding baseline formulation. A variant of this estimator leads to remarkable comparative sample efficiency on a diverse selection of sparse-reward environments, especially in cases where direct reward signals are extremely difficult to obtain. This crucial feature allows natural task formulations that require just trivial reward shaping.

The main drawback of hindsight policy gradient estimators appears to be their computational cost, which is directly related to the number of active goals in a batch. This issue may be mitigated by subsampling active goals, which generally leads to inconsistent estimators. Fortunately, our experiments suggest that this is a viable alternative. Note that the success of hindsight experience replay also depends on an active goal subsampling heuristic \citep[Sec. 4.5]{andrychowicz2017hindsight}. The inconsistent hindsight policy gradient estimator with a value function baseline employed in our experiments sometimes leads to unstable learning, which is likely related to the difficulty of fitting such a value function without hindsight. This hypothesis is consistent with the fact that such instability is observed only in the most extreme examples of sparse-reward environments. Although our preliminary experiments in using hindsight to fit a value function baseline have been successful, this may be accomplished in several ways, and requires a careful study of its own. Further experiments are also required to evaluate hindsight on dense-reward environments.

There are many possibilities for future work besides integrating hindsight policy gradients into systems that rely on goal-conditional policies: deriving additional estimators; implementing and evaluating hindsight (advantage) actor-critic methods; assessing whether hindsight policy gradients can successfully circumvent catastrophic forgetting during curriculum learning of goal-conditional policies; approximating the reward function to reduce required supervision; analysing the variance of the proposed estimators; studying the impact of active goal subsampling; and evaluating every technique on continuous action spaces.

\subsubsection*{Acknowledgments}

We thank Sjoerd van Steenkiste, Klaus Greff, Imanol Schlag, and the anonymous reviewers for their valuable feedback. This research was supported by the Swiss National Science Foundation (grant 200021\_165675/1) and CAPES (Filipe Mutz, PDSE, 88881.133206/2016-01). We are grateful to Nvidia Corporation for donating a \emph{DGX-1} machine and to IBM for donating a \emph{Minsky} machine.


\bibliographystyle{iclr2019_conference}

\newpage
\appendix

\section{Goal-conditional policy gradients}
\label{app:policy_gradients}

This appendix contains proofs related to the results presented in Section \ref{sec:gcpg}: Theorem \ref{th:pg} (App. \ref{app:gcpg:pg}), Theorem \ref{th:pgbaseline} (App. \ref{app:gcpg:pgbaseline}), and Theorem \ref{th:pgadvantage} (App. \ref{app:gcpg:pgadvantage}). Appendix \ref{app:gcpg:pgoptimalbaselines} presents optimal constant baselines for goal-conditional policies. The remaining subsections contain auxiliary results. 

\subsection{Theorem \ref{th:intermediary_pg}}

\begin{theorem}
 The gradient $\nabla \eta(\bs{\theta})$ of the expected return with respect to $\bs{\theta}$ is given by
 \begin{equation}
  \nabla \eta(\bs{\theta}) = \sum_{g} p(g) \sum_{\bs{\tau}} p(\bs{\tau} \mid g, \bs{\theta}) \left[ \sum_{t=1}^{T-1} \nabla \log p(a_t \mid s_t, g, \bs{\theta})\right] \left[ \sum_{t = 1}^T r(s_t, g) \right].
 \end{equation}
 \label{th:intermediary_pg}
\end{theorem}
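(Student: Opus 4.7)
The plan is to follow the standard policy gradient derivation adapted to include the goal variable, exploiting two simple facts: $p(g)$ is independent of $\bs{\theta}$ by the assumption $G \independent \bs{\Theta}$, and $r(s_t,g)$ is a fixed reward function with no $\bs{\theta}$-dependence. So the only term that $\nabla$ can act on is $p(\bs{\tau}\mid g,\bs{\theta})$.

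First, since every sum in the definition of $\eta(\bs{\theta})$ is finite (by the assumption that $\Val(G)$, $\Val(S_t)$, $\Val(A_t)$ are finite), I would interchange $\nabla$ with the sums to obtain
\begin{equation*}
 \nabla \eta(\bs{\theta}) = \sum_{g} p(g) \sum_{\bs{\tau}} \nabla p(\bs{\tau} \mid g, \bs{\theta}) \sum_{t = 1}^T r(s_t, g).
\end{equation*}
Then I would apply the log-derivative identity $\nabla p = p\,\nabla \log p$, which is legitimate because the preliminaries assume $p(\bs{\tau}\mid g,\bs{\theta})$ is strictly positive and differentiable in $\bs{\theta}$. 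This gives
\begin{equation*}
 \nabla \eta(\bs{\theta}) = \sum_{g} p(g) \sum_{\bs{\tau}} p(\bs{\tau} \mid g, \bs{\theta})\, \nabla \log p(\bs{\tau} \mid g, \bs{\theta}) \sum_{t = 1}^T r(s_t, g).
\end{equation*}

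Next I would unfold $\nabla \log p(\bs{\tau}\mid g,\bs{\theta})$ using the trajectory factorization in Eq.~\ref{eq:trajectory}: taking logs turns the product into
\begin{equation*}
 \log p(\bs{\tau}\mid g,\bs{\theta}) = \log p(s_1) + \sum_{t=1}^{T-1} \log p(a_t\mid s_t,g,\bs{\theta}) + \sum_{t=1}^{T-1} \log p(s_{t+1}\mid s_t,a_t).
\end{equation*}
The initial-state distribution $p(s_1)$ and the transition probabilities $p(s_{t+1}\mid s_t,a_t)$ carry no dependence on $\bs{\theta}$ (this is precisely where the conditional-independence assumption $S_{t+1}\independent G \mid S_t,A_t$ and the standing modeling assumptions matter), so their gradients vanish, leaving $\nabla \log p(\bs{\tau}\mid g,\bs{\theta}) = \sum_{t=1}^{T-1} \nabla \log p(a_t\mid s_t,g,\bs{\theta})$. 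Substituting this into the previous display yields the claimed identity.

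There is no real obstacle here: the entire argument reduces to the finite-sum gradient/sum swap plus the log-derivative trick, both justified by the assumptions stated in Section~\ref{sec:preliminaries}. The only bookkeeping point worth flagging in the written proof is making explicit which factors are independent of $\bs{\theta}$, since this is exactly what distinguishes the goal-conditional case from the conventional one and will be reused in the proofs of Theorems~\ref{th:pgbaseline} and~\ref{th:pgadvantage}.
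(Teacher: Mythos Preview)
Your proposal is correct and follows essentially the same route as the paper: push the gradient through the finite sums, apply the log-derivative (likelihood-ratio) trick, expand $\log p(\bs{\tau}\mid g,\bs{\theta})$ via the trajectory factorization, and drop the $\bs{\theta}$-independent factors. The only cosmetic difference is that the paper works componentwise with $\partial/\partial\theta_j$ while you write $\nabla$ directly; one small aside worth tightening is that the vanishing of $\nabla\log p(s_{t+1}\mid s_t,a_t)$ follows simply from the modeling assumption that dynamics do not depend on $\bs{\theta}$, not from the conditional independence $S_{t+1}\independent G\mid S_t,A_t$.
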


\begin{proof}
 The partial derivative $\partial \eta(\bs{\theta})/\partial \theta_j$ of the expected return $\eta(\bs{\theta})$ with respect to $\theta_j$ is given by
\begin{equation}
 \frac{\partial }{\partial \theta_j}\eta(\bs{\theta}) = \sum_{g} p(g) \sum_{\bs{\tau}} \frac{\partial }{\partial \theta_j} p(\bs{\tau} \mid g, \bs{\theta}) \sum_{t = 1}^T r(s_t, g).
\end{equation}

The \emph{likelihood-ratio trick} allows rewriting the previous equation as
\begin{equation}
 \frac{\partial }{\partial \theta_j}\eta(\bs{\theta}) = \sum_{g} p(g) \sum_{\bs{\tau}} p(\bs{\tau} \mid g, \bs{\theta}) \frac{\partial }{\partial \theta_j} \log p(\bs{\tau} \mid g, \bs{\theta}) \sum_{t = 1}^T r(s_t, g).
\end{equation}

Note that
\begin{equation}
    \log p(\bs{\tau} \mid g, \bs{\theta}) = \log p(s_1) + \sum_{t=1}^{T-1} \log p(a_t \mid s_t, g, \bs{\theta}) + \sum_{t = 1}^{T - 1} \log p(s_{t+1} \mid s_t, a_t).
\end{equation}

Therefore, 
\begin{equation}
 \frac{\partial }{\partial \theta_j}\eta(\bs{\theta}) = \sum_{g} p(g) \sum_{\bs{\tau}} p(\bs{\tau} \mid g, \bs{\theta}) \left[ \sum_{t=1}^{T-1} \frac{\partial }{\partial \theta_j}  \log p(a_t \mid s_t, g, \bs{\theta})\right] \left[ \sum_{t = 1}^T r(s_t, g) \right].
 \label{eq:intermediary_pg}
\end{equation}

\end{proof}

\subsection{Theorem \ref{th:pg}}
\label{app:gcpg:pg}
\theorempg*

\begin{proof}
 Starting from Eq. \ref{eq:intermediary_pg}, the partial derivative $\partial \eta(\bs{\theta})/\partial \theta_j$ of $\eta(\bs{\theta})$ with respect to $\theta_j$ is given by
\begin{equation}
 \frac{\partial }{\partial \theta_j}\eta(\bs{\theta})  = \sum_{g} p(g \mid \bs{\theta}) \sum_{\bs{\tau}} p(\bs{\tau} \mid g, \bs{\theta}) \sum_{t = 1}^T r(s_t, g) \sum_{t'=1}^{T-1} \frac{\partial }{\partial \theta_j}  \log p(a_{t'} \mid s_{t'}, g, \bs{\theta}).
\end{equation}
 
The previous equation can be rewritten as
\begin{equation}
  \frac{\partial }{\partial \theta_j}\eta(\bs{\theta}) = \sum_{t = 1}^T \sum_{t' = 1}^{T - 1} \mathbb{E} \left[ r(S_t, G) \frac{\partial }{\partial \theta_j}  \log p(A_{t'} \mid S_{t'}, G, \bs{\theta}) \mid \bs{\theta} \right]. \label{eq:pg_expectation}
\end{equation}

Let $c$ denote an expectation inside Eq. \ref{eq:pg_expectation} for $t' \geq t$. In that case, $A_{t'} \independent S_t \mid S_{t'}, G, \bs{\Theta}$, and so
\begin{equation}
 c =  \sum_{s_t} \sum_{s_{t'}} \sum_{g} \sum_{a_{t'}}  p(a_{t'} \mid s_{t'}, g, \bs{\theta}) p(s_t, s_{t'}, g \mid \bs{\theta}) r(s_t, g) \frac{\partial }{\partial \theta_j}  \log p(a_{t'} \mid s_{t'},g, \bs{\theta}).
 \end{equation}
 
Reversing the likelihood-ratio trick,
 \begin{equation}
 c = \sum_{s_t} \sum_{s_{t'}} \sum_{g}   p(s_t, s_{t'}, g \mid \bs{\theta}) r(s_t, g) \frac{\partial }{\partial \theta_j} \sum_{a_{t'}}  p(a_{t'} \mid s_{t'},g, \bs{\theta}) = 0.
\end{equation}

Therefore, the terms where $t' \geq t$ can be dismissed from Eq. \ref{eq:pg_expectation}, leading to
\begin{align}
 \frac{\partial }{\partial \theta_j}\eta(\bs{\theta}) &= \mathbb{E} \left[ \sum_{t = 1}^T r(S_t, G) \sum_{t' = 1}^{t - 1}\frac{\partial }{\partial \theta_j} \log p(A_{t'} \mid S_{t'}, G, \bs{\theta}) \mid \bs{\theta} \right].
\end{align}

The previous equation can be conveniently rewritten as
\begin{align}
 \frac{\partial }{\partial \theta_j}\eta(\bs{\theta}) &= \mathbb{E} \left[ \sum_{t = 1}^{T-1} \frac{\partial }{\partial \theta_j} \log p(A_{t} \mid S_{t}, G, \bs{\theta}) \sum_{t' = t+1}^{T} r(S_{t'}, G) \mid \bs{\theta} \right].
 \label{eq:expectation_gpg}
\end{align}

\end{proof}

\subsection{Lemma \ref{lm:pgbaseline}}

\begin{lemma}
 For every $j, t, \bs{\theta}$, and associated real-valued (baseline) function $b_t^{\bs{\theta}}$,
  \begin{equation}
  \sum_{t = 1}^{T-1} \mathbb{E} \left[ \frac{\partial}{\partial \theta_j} \log p(A_t \mid S_t, G, \bs{\theta}) b_{t}^{\bs{\theta}}(S_t, G)  \mid \bs{\theta} \right] = 0.
  \label{eq:pgbaseline}
 \end{equation}
 \label{lm:pgbaseline}
\end{lemma}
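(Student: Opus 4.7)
The plan is to show that each summand in the sum over $t$ is individually zero, by marginalizing out the trajectory variables irrelevant to the integrand and then reversing the likelihood-ratio trick (the same tool already used in the proofs of Theorems \ref{th:intermediary_pg} and \ref{th:pg}).

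Fix an arbitrary $t \in \{1,\dots,T-1\}$ and $j$, and write the $t$-th expectation explicitly as a sum over goals and trajectories,
\[
\sum_{g} p(g) \sum_{\bs{\tau}} p(\bs{\tau} \mid g, \bs{\theta}) \, \frac{\partial}{\partial \theta_j} \log p(a_t \mid s_t, g, \bs{\theta}) \, b_{t}^{\bs{\theta}}(s_t, g).
\]
Since the integrand depends on $\bs{\tau}$ only through the pair $(s_t, a_t)$, the next step is to marginalize over all other trajectory variables, collapsing the expression to
\[
\sum_{g} \sum_{s_t} \sum_{a_t} p(s_t, a_t, g \mid \bs{\theta}) \, \frac{\partial}{\partial \theta_j} \log p(a_t \mid s_t, g, \bs{\theta}) \, b_{t}^{\bs{\theta}}(s_t, g).
\]
Then I factor $p(s_t, a_t, g \mid \bs{\theta}) = p(s_t, g \mid \bs{\theta})\, p(a_t \mid s_t, g, \bs{\theta})$ and pull $p(s_t, g \mid \bs{\theta})\, b_{t}^{\bs{\theta}}(s_t, g)$ out of the innermost sum over $a_t$, which is legitimate precisely because the baseline depends only on $(S_t, G)$ and not on $A_t$.

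The key step is reversing the likelihood-ratio trick on the inner sum:
\[
\sum_{a_t} p(a_t \mid s_t, g, \bs{\theta}) \frac{\partial}{\partial \theta_j} \log p(a_t \mid s_t, g, \bs{\theta}) = \frac{\partial}{\partial \theta_j} \sum_{a_t} p(a_t \mid s_t, g, \bs{\theta}) = \frac{\partial}{\partial \theta_j} 1 = 0,
\]
where the interchange of $\partial/\partial \theta_j$ with the finite sum is justified by finiteness of $\Val(A_t)$ and the assumed differentiability of the policy. Hence each term in the outer $t$-sum vanishes, and summing over $t$ gives Eq.~\ref{eq:pgbaseline}.

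There is no real obstacle here; the only subtlety is making sure the marginalization and factorization steps are written carefully enough that it is clear why the baseline, which depends on $(S_t, G)$ only, can be extracted before the sum over $a_t$ is carried out. Everything else is a direct application of the same likelihood-ratio manipulations already used earlier in Appendix \ref{app:policy_gradients}.
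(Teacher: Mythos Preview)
Your proposal is correct and follows essentially the same approach as the paper: fix $t$, reduce the expectation to a sum over $(s_t, a_t, g)$, factor $p(s_t, a_t, g \mid \bs{\theta}) = p(a_t \mid s_t, g, \bs{\theta})\, p(s_t, g \mid \bs{\theta})$, and reverse the likelihood-ratio trick on the inner sum over $a_t$. The paper's proof is slightly terser in that it writes the factored joint directly rather than first summing over trajectories and then marginalizing, but the argument is the same.
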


\begin{proof}
 Letting $c$ denote an expectation inside Eq. \ref{eq:pgbaseline},
\begin{equation}
 c = \sum_{s_t} \sum_{g} \sum_{a_t} p(a_t \mid s_t, g, \bs{\theta}) p(s_t, g \mid \bs{\theta}) \frac{\partial}{\partial \theta_j} \log p(a_t \mid s_t, g, \bs{\theta}) b_{t}^{\bs{\theta}}(s_t, g).
 \end{equation}
 
 Reversing the likelihood-ratio trick,
 \begin{equation}
 c =  \sum_{s_t} \sum_{g}  p(s_t, g \mid \bs{\theta}) b_{t}^{\bs{\theta}}(s_t, g) \frac{\partial}{\partial \theta_j} \sum_{a_t}  p(a_t \mid s_t, g, \bs{\theta}) = 0.
\end{equation}

\end{proof}

\subsection{Theorem \ref{th:pgbaseline}}
\label{app:gcpg:pgbaseline}

\theorempgbaseline*
\begin{proof}
The result is obtained by subtracting Eq. \ref{eq:pgbaseline} from Eq. \ref{eq:expectation_gpg}. Importantly, for every combination of $\bs{\theta}$ and $t$, it would also be possible to have a distinct baseline function for each parameter in $\bs{\theta}$.
\end{proof}

\subsection{Lemma \ref{lm:pgactionvalue}}
\begin{lemma}
The gradient $\nabla \eta(\bs{\theta})$ of the expected return with respect to $\bs{\theta}$ is given by
 \begin{align}
  \nabla \eta(\bs{\theta}) = \sum_{g} p(g) \sum_{\bs{\tau}} p(\bs{\tau} \mid g, \bs{\theta}) \sum_{t = 1}^{T -1} \nabla \log p(a_t \mid s_t, g, \bs{\theta}) Q_{t}^{\bs{\theta}}(s_t, a_t, g).
\end{align}
\label{lm:pgactionvalue}
\end{lemma}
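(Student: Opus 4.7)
The plan is to start from Theorem \ref{th:pg} and introduce the action-value function by conditioning. Written as an expectation under $\bs{\theta}$ (see Eq.~\ref{eq:expectation_gpg}), the goal-conditional policy gradient is
\[
\nabla \eta(\bs{\theta}) = \sum_{t=1}^{T-1} \mathbb{E}\left[\nabla \log p(A_t \mid S_t, G, \bs{\theta}) \sum_{t' = t+1}^{T} r(S_{t'}, G) \;\middle|\; \bs{\theta}\right].
\]
Inside each term, the score $\nabla \log p(A_t \mid S_t, G, \bs{\theta})$ is a function of $(S_t, A_t, G)$, so by the tower property I can pull it outside an inner expectation conditioned on $(S_t, A_t, G, \bs{\theta})$:
\[
\mathbb{E}\left[\nabla \log p(A_t \mid S_t, G, \bs{\theta}) \,\mathbb{E}\!\left[ \sum_{t' = t+1}^{T} r(S_{t'}, G) \;\middle|\; S_t, A_t, G, \bs{\theta} \right] \;\middle|\; \bs{\theta}\right].
\]

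The next step is to identify the inner conditional expectation with $Q_t^{\bs{\theta}}(S_t, A_t, G)$. This requires verifying that conditioning additionally on the past $S_{1:t-1}, A_{1:t-1}$ would not change the value — i.e., that the distribution of $(S_{t+1}, A_{t+1}, \ldots, S_T)$ given the full past collapses to its distribution given only $(S_t, A_t, G, \bs{\theta})$. This is the one step that requires a small auxiliary argument, and it is the main (though routine) obstacle: it follows from the factorization in Eq.~\ref{eq:trajectory}, which makes the trajectory Markov in $(S_t, A_t)$ given $(G, \bs{\theta})$. Once this is established, the inner expectation equals $Q_t^{\bs{\theta}}(S_t, A_t, G)$ by its definition given in Section \ref{sec:preliminaries}.

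Finally, I would unfold the outer expectation back into an explicit sum over $g$ and $\bs{\tau}$ weighted by $p(g)\,p(\bs{\tau} \mid g, \bs{\theta})$, obtaining exactly the claimed expression. No new machinery beyond the likelihood-ratio identity already used in Theorem~\ref{th:pg} and the Markov factorization of trajectories is needed, so the proof is essentially a reorganization of Theorem~\ref{th:pg} via the tower property.
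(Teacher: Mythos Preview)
Your proposal is correct and follows essentially the same approach as the paper: both start from Eq.~\ref{eq:expectation_gpg}, isolate the score at time $t$, and identify the remaining expectation over future rewards with $Q_t^{\bs{\theta}}(s_t,a_t,g)$. The paper performs this step by explicitly splitting the sum over trajectories into $(s_t,a_t,g)$ and $s_{t+1:T}$ and invoking the definition of the action-value function, whereas you phrase the same manipulation via the tower property; these are the same argument in different notation.
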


\begin{proof}
 Starting from Eq. \ref{eq:expectation_gpg} and rearranging terms,
\scriptsize
\begin{equation}
 \frac{\partial }{\partial \theta_j}\eta(\bs{\theta}) = \sum_{t=1}^{T-1} \sum_{g} \sum_{s_t} \sum_{a_t}  p(s_t, a_t, g \mid \bs{\theta}) \frac{\partial}{\partial \theta_j} \log p(a_{t} \mid s_{t}, g, \bs{\theta}) \sum_{s_{t+1:T}} p(s_{t+1:T} \mid s_t, a_t, g, \bs{\theta})  \sum_{t' = t+1}^{T} r(s_{t'}, g).
\end{equation}
\normalsize

By the definition of action-value function,
\begin{equation}
\frac{\partial }{\partial \theta_j}\eta(\bs{\theta}) =  \mathbb{E} \left[ \sum_{t = 1}^{T-1} \frac{\partial }{\partial \theta_j} \log p(A_{t} \mid S_{t}, G, \bs{\theta}) Q_{t}^{\bs{\theta}}(S_t, A_t, G) \mid \bs{\theta} \right].
\label{eq:pgactionvalue}
\end{equation}

\end{proof}

\subsection{Theorem \ref{th:pgadvantage}}
\label{app:gcpg:pgadvantage}

\theorempgadvantage*

\begin{proof}
The result is obtained by choosing $b_t^{\bs{\theta}} = V_t^{\bs{\theta}}$ and subtracting Eq. \ref{eq:pgbaseline} from Eq. \ref{eq:pgactionvalue}.
\end{proof}

\subsection{Theorem \ref{th:pgoptimalbaseline}}
\label{app:gcpg:pgoptimalbaselines}

For arbitrary $j$ and $\bs{\theta}$, consider the following definitions of $f$ and $h$.
\begin{align}
 f(\bs{\tau}, g) &= \sum_{t=1}^{T-1} \frac{\partial}{ \partial \theta_j} \log p(a_t \mid s_t, g, \bs{\theta}) \sum_{t' = t+1}^T r(s_{t'}, g), \\
  h(\bs{\tau}, g) &= \sum_{t=1}^{T-1} \frac{\partial}{ \partial \theta_j} \log p(a_t \mid s_t, g, \bs{\theta}).
\end{align}

For every $b_j \in \mathbb{R}$, using Theorem \ref{th:pg} and the fact that $\mathbb{E} \left[ h(\bs{\Tau}, G) \mid \bs{\theta} \right] = 0$ by Lemma $\ref{lm:pgbaseline}$, 
\begin{equation}
 \frac{\partial }{\partial \theta_j} \eta (\bs{\theta}) = \mathbb{E} \left[ f(\bs{\Tau}, G) \mid \bs{\theta} \right] = \mathbb{E} \left[ f(\bs{\Tau}, G) - b_j h(\bs{\Tau}, G) \mid \bs{\theta} \right].
\end{equation}

\begin{theorem}
 Assuming $\Var \left[ h(\bs{\Tau}, G) \mid \bs{\theta} \right] > 0$, the (optimal constant baseline) $b_j$ that minimizes $\Var \left[ f(\bs{\Tau}, G) - b_j h(\bs{\Tau}, G) \mid \bs{\theta} \right]$ is given by
 \begin{equation}
  b_j = \frac{\mathbb{E} \left[ f(\bs{\Tau}, G) h(\bs{\Tau}, G) \mid \bs{\theta} \right]}{ \mathbb{E} \left[ h(\bs{\Tau}, G)^2 \mid \bs{\theta} \right] }.
 \end{equation}
\label{th:pgoptimalbaseline}
\end{theorem}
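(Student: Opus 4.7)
The plan is to treat this as a one-dimensional quadratic minimization in $b_j$. The key simplification is that $h$ has mean zero under the sampling distribution: by Lemma \ref{lm:pgbaseline} applied with the constant baseline function $b_t^{\bs{\theta}} \equiv 1$, we get $\mathbb{E}[h(\bs{\Tau}, G) \mid \bs{\theta}] = 0$. Consequently, $\mathbb{E}[f(\bs{\Tau}, G) - b_j h(\bs{\Tau}, G) \mid \bs{\theta}] = \mathbb{E}[f(\bs{\Tau}, G) \mid \bs{\theta}]$ does not depend on $b_j$, so minimizing the variance is equivalent to minimizing the second moment $\mathbb{E}[(f - b_j h)^2 \mid \bs{\theta}]$.

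Next, I would expand the second moment as a quadratic function of $b_j$:
\begin{equation*}
\mathbb{E}\left[(f - b_j h)^2 \mid \bs{\theta}\right] = \mathbb{E}[f^2 \mid \bs{\theta}] - 2 b_j \, \mathbb{E}[f h \mid \bs{\theta}] + b_j^2 \, \mathbb{E}[h^2 \mid \bs{\theta}],
\end{equation*}
where I suppress the arguments $(\bs{\Tau}, G)$ for brevity. Differentiating with respect to $b_j$ and setting the derivative to zero yields the first-order condition $-2 \mathbb{E}[fh \mid \bs{\theta}] + 2 b_j \mathbb{E}[h^2 \mid \bs{\theta}] = 0$, which rearranges to the claimed expression for $b_j$, provided $\mathbb{E}[h^2 \mid \bs{\theta}] \neq 0$.

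To justify that division, I would invoke the hypothesis $\Var[h(\bs{\Tau}, G) \mid \bs{\theta}] > 0$ together with $\mathbb{E}[h \mid \bs{\theta}] = 0$, which together give $\mathbb{E}[h^2 \mid \bs{\theta}] = \Var[h \mid \bs{\theta}] > 0$. The same inequality shows that the second derivative $2 \mathbb{E}[h^2 \mid \bs{\theta}]$ is strictly positive, so the stationary point is indeed a (unique) minimizer.

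I do not expect a genuine obstacle here; the argument is a routine quadratic optimization, and the only subtlety is recognizing that the $\mathbb{E}[h \mid \bs{\theta}] = 0$ identity from Lemma \ref{lm:pgbaseline} collapses the variance minimization to a second-moment minimization, which is what makes the closed form so clean.
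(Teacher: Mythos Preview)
Your proposal is correct and matches the paper's approach essentially line for line: the paper also invokes Lemma~\ref{lm:pgbaseline} to get $\mathbb{E}[h(\bs{\Tau},G)\mid\bs{\theta}]=0$, then applies the general quadratic-minimization Lemma~\ref{lm:optimalconstantbaseline}, whose proof is exactly the expansion, first-order condition, and second-derivative test you outline. The only cosmetic difference is that the paper packages the calculus into a separate lemma rather than doing it inline.
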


\begin{proof}
 The result is an application of Lemma \ref{lm:optimalconstantbaseline}.
\end{proof}

\newpage

\section{Hindsight policy gradients}
\label{app:hindsight_policy_gradients}

This appendix contains proofs related to the results presented in Section \ref{sec:hpg}: Theorem \ref{th:intermediary_hpg} (App. \ref{sec:intermediary_hpg}), Theorem \ref{th:hpg} (App. \ref{sec:pdhpg}), Lemma \ref{lm:bchpg} (App. \ref{sec:lemmabchpg}), Theorem \ref{th:baselinehpg} (App. \ref{sec:baselinehpg}), and Theorem \ref{th:hpgadvantage} (App. \ref{sec:hpgadvantage}). Appendix \ref{app:hpg:optimalbaseline} presents optimal constant baselines for hindsight policy gradients. Appendix \ref{sec:hpgactionvalue} contains an auxiliary result.

\subsection{Theorem \ref{th:intermediary_hpg}}
\label{sec:intermediary_hpg}

The following theorem relies on importance sampling, a traditional technique used to obtain estimates related to a random variable $X \sim p$ using samples from an arbitrary positive distribution $q$. This technique relies on the following equalities:
\begin{equation}
 \mathbb{E}_{p(X)}\left[ f(X) \right] = \sum_{x} p(x) f(x) = \sum_{x} \frac{q(x)}{q(x)} p(x) f(x) = \mathbb{E}_{q(X)} \left[ \frac{p(X)}{q(X)} f(X) \right].
\end{equation}

\theoremadhpg*

\begin{proof}
Starting from Theorem \ref{th:pg}, importance sampling allows rewriting the partial derivative $\partial \eta(\bs{\theta})/\partial \theta_j$ as
\begin{align}
  \frac{\partial }{\partial \theta_j}\eta(\bs{\theta}) &= \sum_{g} p(g) \sum_{\bs{\tau}} \frac{p(\bs{\tau} \mid g', \bs{\theta})}{p(\bs{\tau} \mid g', \bs{\theta})} p(\bs{\tau} \mid g, \bs{\theta}) \sum_{t = 1}^{T -1} \frac{\partial}{\partial \theta_j} \log p(a_t \mid s_t, g, \bs{\theta}) \sum_{t' = t+1}^{T} r(s_{t'}, g).
\end{align}

Using Equation \ref{eq:trajectory},
\small
\begin{align}
   \frac{\partial }{\partial \theta_j}\eta(\bs{\theta}) &= \sum_{g} p(g) \sum_{\bs{\tau}} p(\bs{\tau} \mid g', \bs{\theta}) \left[ \prod_{k=1}^{T - 1} \frac{p(a_k \mid s_k, g, \bs{\theta})}{p(a_k \mid s_k, g', \bs{\theta})}  \right] \sum_{t = 1}^{T -1} \frac{\partial}{\partial \theta_j} \log p(a_t \mid s_t, g, \bs{\theta}) \sum_{t' = t+1}^{T} r(s_{t'}, g). \label{eq:intermediary_hpg}
\end{align}
\normalsize

\end{proof}

\subsection{Theorem \ref{th:hpg}}
\label{sec:pdhpg}

\theorempdhpg*

\begin{proof}
 Starting from Eq. \ref{eq:intermediary_hpg}, the partial derivative $\partial \eta(\bs{\theta})/\partial \theta_j$ can be rewritten as
 \small
\begin{align}
 \frac{\partial }{\partial \theta_j}\eta(\bs{\theta}) &= \sum_{g} p(g) \sum_{t = 1}^{T -1}  \sum_{t' = t+1}^{T} \sum_{\bs{\tau}} p(\bs{\tau} \mid g', \bs{\theta}) \left[ \prod_{k=1}^{T - 1} \frac{p(a_k \mid s_k, g, \bs{\theta})}{p(a_k \mid s_k, g', \bs{\theta})}  \right] \frac{\partial}{\partial \theta_j} \log p(a_t \mid s_t, g, \bs{\theta}) r(s_{t'}, g).
\end{align}
\normalsize

If we split every trajectory into states and actions before and after $t'$, then $\partial \eta(\bs{\theta})/\partial \theta_j$ is given by
\scriptsize
\begin{equation}
  \sum_{g} p(g) \sum_{t=1}^{T - 1} \sum_{t' = t+1}^{T} \sum_{s_{1:t' - 1}} \sum_{a_{1: t' - 1}} p(s_{1:t' - 1}, a_{1:t' - 1} \mid g', \bs{\theta}) \left[ \prod_{k=1}^{t' - 1} \frac{p(a_k \mid s_k, g, \bs{\theta})}{p(a_k \mid s_k, g', \bs{\theta})} \right] \frac{\partial}{\partial \theta_j} \log p(a_t \mid s_t, g, \bs{\theta}) z,
  \label{eq:intermediary_hpg_split}
\end{equation}
\normalsize

where $z$ is defined by
\begin{align}
 z &= \sum_{s_{t':T}} \sum_{a_{t':T-1}} p(s_{t':T}, a_{t':T-1} \mid s_{1:t' - 1}, a_{1: t' - 1}, g', \bs{\theta}) \left[ \prod_{k=t'}^{T - 1} \frac{p(a_k \mid s_k, g, \bs{\theta})}{p(a_k \mid s_k, g', \bs{\theta})} \right] r(s_{t'}, g).
\end{align}

Using Lemma \ref{th:lemma2} and canceling terms,
\begin{align}
 z &= \sum_{s_{t':T}} \sum_{a_{t':T-1}} p(s_{t'} \mid s_{t'-1}, a_{t'-1}) \left[ \prod_{k=t'}^{T - 1} p(a_k \mid s_k, g, \bs{\theta}) p(s_{k+1} \mid s_k, a_k)  \right] r(s_{t'}, g).
\end{align}
 
Using Lemma \ref{th:lemma2} once again,
\begin{align}
 z &= \sum_{s_{t':T}} \sum_{a_{t':T-1}} p(s_{t':T}, a_{t':T-1} \mid s_{1:t'-1}, a_{1:t'-1}, g, \bs{\theta}) r(s_{t'}, g).
\end{align}

Using the fact that $S_{t'} \independent G \mid S_{1:t'-1}, A_{1:t'-1}, \bs{\Theta}$,
\begin{align}
 z &= \sum_{s_{t'}} r(s_{t'}, g) p(s_{t'} \mid s_{1:t'-1}, a_{1:t'-1}, g, \bs{\theta}) = \sum_{s_{t'}} r(s_{t'}, g) p(s_{t'} \mid s_{1:t'-1}, a_{1:t'-1}, g', \bs{\theta}).
\end{align}

Substituting $z$ into Expression \ref{eq:intermediary_hpg_split} and returning to an expectation over trajectories,
\small
\begin{align}
 \frac{\partial }{\partial \theta_j}\eta(\bs{\theta}) &= \sum_{\bs{\tau}} p(\bs{\tau} \mid g', \bs{\theta}) \sum_{g}  p(g) \sum_{t=1}^{T - 1} \frac{\partial}{\partial \theta_j} \log p(a_t \mid s_t, g, \bs{\theta}) \sum_{t' = t+1}^{T} \left[ \prod_{k=1}^{t' - 1} \frac{p(a_k \mid s_k, g, \bs{\theta})}{p(a_k \mid s_k, g', \bs{\theta})} \right]  r(s_{t'}, g).
\end{align}
\normalsize

\end{proof}

\subsection{Lemma \ref{lm:bchpg}}
\label{sec:lemmabchpg}

\lemmahpgbaseline*

\begin{proof}
Let $c$ denote the $j$-th element of the vector in the left-hand side of Eq. \ref{eq:bchpg}, such that
\begin{align}
 c &= \sum_{g} p(g) \sum_{t=1}^{T-1}  \mathbb{E} \left[ \frac{\partial }{\partial \theta_j} \log p(A_{t} \mid S_{t}, g, \bs{\theta}) \left[ \prod_{k=1}^t \frac{p(A_k \mid S_k, g, \bs{\theta})}{p(A_k \mid S_k, g', \bs{\theta})} \right]  b_{t}^{\bs{\theta}}(S_t, g) \mid g', \bs{\theta} \right].
 \label{eq:expectationhpgbaseline}
\end{align}

Using Lemma \ref{th:lemma1} and writing the expectations explicitly,
\begin{align}
 c &= \sum_{g} p(g) \sum_{t=1}^{T-1} \sum_{s_{1:t}} \sum_{a_{1:t}} p(s_{1:t}, a_{1:t} \mid g', \bs{\theta}) \frac{\partial }{\partial \theta_j} \log p(a_{t} \mid s_{t}, g, \bs{\theta}) \frac{p(s_{1:t}, a_{1:t} \mid g, \bs{\theta})}{p(s_{1:t}, a_{1:t} \mid g', \bs{\theta})}  b_{t}^{\bs{\theta}}(s_t, g).
\end{align}
 
Canceling terms, using Lemma \ref{th:lemma1} once again, and reversing the likelihood-ratio trick,
\small
\begin{align}
  c &= \sum_{g} p(g) \sum_{t=1}^{T-1} \sum_{s_{1:t}} \sum_{a_{1:t}} \frac{\partial }{\partial \theta_j} p(a_{t} \mid s_{t}, g, \bs{\theta}) \left[ p(s_1) \prod_{k = 1}^{t - 1} p(a_k \mid s_k, g, \bs{\theta}) p(s_{k+1} \mid s_k, a_k)   \right] b_{t}^{\bs{\theta}}(s_t, g).
\end{align} 
\normalsize
 
Pushing constants outside the summation over actions at time step $t$,
\scriptsize
\begin{align}
  c &= \sum_{g} p(g) \sum_{t=1}^{T-1} \sum_{s_{1:t}} \sum_{a_{1:t-1}}   \left[ p(s_1) \prod_{k = 1}^{t - 1} p(a_k \mid s_k, g, \bs{\theta}) p(s_{k+1} \mid s_k, a_k)   \right] b_{t}^{\bs{\theta}}(s_t, g) \frac{\partial }{\partial \theta_j} \sum_{a_t} p(a_{t} \mid s_{t}, g, \bs{\theta}) = 0.
  \label{eq.hpg_baseline}
\end{align}
\normalsize

\end{proof}

\subsection{Theorem \ref{th:baselinehpg}}
\label{sec:baselinehpg}

\begin{theorem}[Hindsight policy gradient, baseline formulation] For every $g'$, $t, \bs{\theta}$, and associated real-valued (baseline) function $b_{t}^{\bs{\theta}}$, the gradient $\nabla \eta(\bs{\theta})$ of the expected return with respect to $\bs{\theta}$ is given by
\begin{align}
 \nabla \eta(\bs{\theta}) = \sum_{\bs{\tau}} p(\bs{\tau} \mid g', \bs{\theta}) \sum_{g} p(g)   \sum_{t = 1}^{T -1} \nabla \log p(a_t \mid s_t, g, \bs{\theta}) z,
\end{align}
where
\begin{align}
 z = \left[ \sum_{t' = t+1}^{T} \left[ \prod_{k=1}^{t' - 1} \frac{p(a_k \mid s_k, g, \bs{\theta})}{p(a_k \mid s_k, g', \bs{\theta})}  \right] r(s_{t'}, g) \right] - \left[ \prod_{k=1}^{t} \frac{p(a_k \mid s_k, g, \bs{\theta})}{p(a_k \mid s_k, g', \bs{\theta})} \right]  b_{t}^{\bs{\theta}}(s_t, g).
\end{align}
\label{th:baselinehpg}
\end{theorem}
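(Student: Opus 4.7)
The plan is to obtain this baseline formulation as a direct algebraic combination of the per-decision hindsight policy gradient (Theorem \ref{th:hpg}) with the zero-expectation identity from Lemma \ref{lm:bchpg}. The hindsight baseline plays exactly the role of a control variate: Lemma \ref{lm:bchpg} certifies that the baseline term integrates to zero under the appropriate likelihood ratio ending at time $t$, so subtracting it from the expression for $\nabla \eta(\bs{\theta})$ leaves the gradient unchanged while (typically) reducing the variance of any Monte Carlo estimator built from it.

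Concretely, I would start from equation \ref{eq:hpg}, which writes $\nabla \eta(\bs{\theta})$ as an expectation over trajectories drawn under $g'$, averaged over alternative goals $g$ weighted by $p(g)$, summed over time steps $t$, with the score $\nabla \log p(a_t \mid s_t, g, \bs{\theta})$ multiplied by the reward-sum factor that corresponds to the first bracket in $z$. Then I would invoke Lemma \ref{lm:bchpg} with the same baseline function $b_t^{\bs{\theta}}$, which guarantees that the analogous expression with the reward-sum factor replaced by $\bigl[\prod_{k=1}^{t} p(a_k\mid s_k, g, \bs{\theta})/p(a_k\mid s_k, g', \bs{\theta})\bigr] b_t^{\bs{\theta}}(s_t, g)$ equals $\mathbf{0}$. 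Subtracting this identity from equation \ref{eq:hpg} and applying linearity of summation consolidates the two integrands into the bracketed quantity $z$ shown in the statement.

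The step is trivial precisely because both expressions share the same outer structure (expectation over $\bs{\tau}$ under $g'$, sum over $g$ weighted by $p(g)$, sum over $t$, and the common score factor), so the subtraction can be performed under a single set of summations with no reorganization required. The only point worth checking explicitly is that the likelihood-ratio product ranges differ between the two subtracted terms (up to $t'-1$ in the reward term versus up to $t$ in the baseline term); these ranges match precisely those already established in Theorem \ref{th:hpg} and Lemma \ref{lm:bchpg}, respectively, so no further manipulation is needed. There is no substantive obstacle: the argument is a one-line application of linearity, directly paralleling the baseline-formulation proof for the conventional goal-conditional policy gradient given in Appendix \ref{app:gcpg:pgbaseline}.
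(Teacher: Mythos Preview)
Your proposal is correct and matches the paper's proof exactly: the paper also obtains the result by subtracting Eq.~\ref{eq:bchpg} (Lemma~\ref{lm:bchpg}) from Eq.~\ref{eq:hpg} (Theorem~\ref{th:hpg}), noting additionally that one could use a distinct baseline per parameter.
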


\begin{proof}
The result is obtained by subtracting Eq. \ref{eq:bchpg} from Eq. \ref{eq:hpg}. Importantly, for every combination of $\bs{\theta}$ and $t$, it would also be possible to have a distinct baseline function for each parameter in $\bs{\theta}$.
\end{proof}

\subsection{Lemma \ref{lm:hpgactionvalue}}
\label{sec:hpgactionvalue}

\begin{lemma}[Hindsight policy gradient, action-value formulation] For an arbitrary goal $g'$, the gradient $\nabla \eta(\bs{\theta})$ of the expected return with respect to $\bs{\theta}$ is given by
 \small
\begin{align}
 \nabla \eta(\bs{\theta}) = \sum_{\bs{\tau}} p(\bs{\tau} \mid g', \bs{\theta}) \sum_{g} p(g)   \sum_{t = 1}^{T -1} \nabla \log p(a_t \mid s_t, g, \bs{\theta}) \left[ \prod_{k=1}^{t} \frac{p(a_k \mid s_k, g, \bs{\theta})}{p(a_k \mid s_k, g', \bs{\theta})} \right] Q_{t}^{\bs{\theta}}(s_t, a_t, g).
\end{align}
\normalsize
\label{lm:hpgactionvalue}
\end{lemma}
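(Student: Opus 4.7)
The plan is to derive the action-value formulation by starting from Theorem \ref{th:hpg} and refactoring the likelihood-ratio product so that the piece corresponding to time steps after $t$ can be absorbed back into a conditional expectation that collapses to $Q_t^{\bs{\theta}}(s_t, a_t, g)$. Concretely, for each fixed $t$ in the outer sum, I split the product appearing in the $t'$-th summand as
\begin{equation*}
 \prod_{k=1}^{t'-1} \frac{p(a_k \mid s_k, g, \bs{\theta})}{p(a_k \mid s_k, g', \bs{\theta})} = \left[ \prod_{k=1}^{t} \frac{p(a_k \mid s_k, g, \bs{\theta})}{p(a_k \mid s_k, g', \bs{\theta})} \right] \left[ \prod_{k=t+1}^{t'-1} \frac{p(a_k \mid s_k, g, \bs{\theta})}{p(a_k \mid s_k, g', \bs{\theta})} \right],
\end{equation*}
with the right-hand factor equal to $1$ when $t' = t + 1$. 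The left factor depends only on $(s_{1:t}, a_{1:t})$, so it pulls out of the summation over $t'$ and out of the summations over $s_{t+1:T}$ and $a_{t+1:T-1}$.

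Next I would write the expectation over $\bs{\tau}$ under $g'$ as a nested sum: an outer sum over $(s_{1:t}, a_{1:t})$ weighted by $p(s_{1:t}, a_{1:t} \mid g', \bs{\theta})$, and an inner sum over $(s_{t+1:T}, a_{t+1:T-1})$ weighted by the conditional $p(s_{t+1:T}, a_{t+1:T-1} \mid s_{1:t}, a_{1:t}, g', \bs{\theta})$. The inner quantity to evaluate is then
\begin{equation*}
 \sum_{s_{t+1:T}} \sum_{a_{t+1:T-1}} p(s_{t+1:T}, a_{t+1:T-1} \mid s_{1:t}, a_{1:t}, g', \bs{\theta}) \sum_{t'=t+1}^{T} \left[ \prod_{k=t+1}^{t'-1} \frac{p(a_k \mid s_k, g, \bs{\theta})}{p(a_k \mid s_k, g', \bs{\theta})} \right] r(s_{t'}, g).
\end{equation*}
Using the factorization lemmas from the appendix (analogous to the use of Lemma \ref{th:lemma2} in the proof of Theorem \ref{th:hpg}) together with $S_{t+1} \independent G \mid S_t, A_t$, the policy ratios cancel the $g'$-action probabilities and reintroduce $g$-action probabilities, converting the inner distribution to $p(s_{t+1:T}, a_{t+1:T-1} \mid s_t, a_t, g, \bs{\theta})$. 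Once that conversion is in place, the remaining inner sum over $t'$ is exactly $\mathbb{E}\bigl[\sum_{t'=t+1}^T r(S_{t'}, g) \mid S_t = s_t, A_t = a_t, g, \bs{\theta}\bigr] = Q_t^{\bs{\theta}}(s_t, a_t, g)$.

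Substituting back, the outer expectation again becomes an expectation over $\bs{\tau} \sim p(\cdot \mid g', \bs{\theta})$, and the $t$-th term carries precisely the factor $\left[ \prod_{k=1}^{t} \frac{p(a_k \mid s_k, g, \bs{\theta})}{p(a_k \mid s_k, g', \bs{\theta})} \right] Q_t^{\bs{\theta}}(s_t, a_t, g)$ multiplying $\nabla \log p(a_t \mid s_t, g, \bs{\theta})$, yielding the claimed identity. The main obstacle is bookkeeping: I have to handle the empty-product convention at $t' = t+1$ carefully, and make sure the conditional-independence argument that converts the $g'$-tail to a $g$-tail is applied in the correct order (first canceling policy terms via the ratios, then reassembling the $g$-conditional trajectory distribution), exactly mirroring the manipulation already carried out in Appendix \ref{sec:pdhpg}.
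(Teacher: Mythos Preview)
Your argument is correct, but it takes a different route from the paper. The paper does not start from Theorem~\ref{th:hpg} at all: it begins from Lemma~\ref{lm:pgactionvalue} (the action-value formulation of the \emph{goal-conditional} policy gradient, Eq.~\ref{eq:pgactionvalue}), which already contains $Q_t^{\bs{\theta}}(s_t,a_t,g)$, then applies importance sampling directly on the joint $p(s_{1:t},a_{1:t}\mid g,\bs{\theta})$ and uses Lemma~\ref{th:lemma1} to reduce the likelihood ratio to the product $\prod_{k=1}^{t} p(a_k\mid s_k,g,\bs{\theta})/p(a_k\mid s_k,g',\bs{\theta})$. That is essentially a two-step proof once Lemma~\ref{lm:pgactionvalue} is in hand.

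Your route instead starts from the per-decision formulation and collapses the tail $t'$-by-$t'$ back into $Q_t^{\bs{\theta}}$, mirroring the mechanics of Appendix~\ref{sec:pdhpg}. This works, but be careful with one phrasing: the ratios $\prod_{k=t+1}^{t'-1}$ do \emph{not} convert the whole tail distribution into $p(s_{t+1:T},a_{t+1:T-1}\mid s_t,a_t,g,\bs{\theta})$ in one shot; for each fixed $t'$ you only convert the segment up to $t'-1$, and the remaining segment (with the $g'$-policy) must be marginalized out before reassembling. You acknowledge this in your last paragraph, so the bookkeeping is fine, but the sentence ``converting the inner distribution to $p(s_{t+1:T}, a_{t+1:T-1} \mid s_t, a_t, g, \bs{\theta})$'' should be stated per $t'$ and after the tail marginalization. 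The paper's approach avoids this entire tail computation by never unrolling $Q_t^{\bs{\theta}}$ in the first place; yours has the advantage of not depending on Lemma~\ref{lm:pgactionvalue} as a prerequisite.
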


\begin{proof}
Starting from Eq. \ref{eq:pgactionvalue}, the partial derivative $\partial \eta(\bs{\theta})/\partial \theta_j$ can be written as
\begin{align}
 \frac{\partial}{\partial \theta_j} \eta(\bs{\theta}) = \sum_{t=1}^{T-1} \sum_{g} p(g) \sum_{s_{1:t}} \sum_{a_{1:t}}  p(s_{1:t}, a_{1:t} \mid g, \bs{\theta}) \frac{\partial }{\partial \theta_j} \log p(a_{t} \mid s_{t}, g, \bs{\theta}) Q_{t}^{\bs{\theta}}(s_t, a_t, g).
\end{align}

Using importance sampling, for an arbitrary goal $g'$,
\scriptsize
\begin{align}
 \frac{\partial}{\partial \theta_j} \eta(\bs{\theta}) &=  \sum_{g} p(g) \sum_{t=1}^{T-1} \sum_{s_{1:t}} \sum_{a_{1:t}}  p(s_{1:t}, a_{1:t} \mid g', \bs{\theta}) \frac{p(s_{1:t}, a_{1:t} \mid g, \bs{\theta})}{p(s_{1:t}, a_{1:t} \mid g', \bs{\theta})} \frac{\partial }{\partial \theta_j} \log p(a_{t} \mid s_{t}, g, \bs{\theta}) Q_{t}^{\bs{\theta}}(s_t, a_t, g).
\end{align}
\normalsize

Using Lemma \ref{th:lemma1} and rewriting the previous equation using expectations,
\small
\begin{align}
 \frac{\partial}{\partial \theta_j} \eta(\bs{\theta}) &= \sum_{g} p(g) \mathbb{E} \left[ \sum_{t=1}^{T-1}  \frac{\partial }{\partial \theta_j} \log p(A_{t} \mid S_{t}, g, \bs{\theta}) \left[ \prod_{k=1}^t \frac{p(A_k \mid S_k, g, \bs{\theta})}{p(A_k \mid S_k, g', \bs{\theta})} \right]  Q_{t}^{\bs{\theta}}(S_t, A_t, g) \mid g', \bs{\theta} \right].
 \label{eq:expectationhpgactionvalue}
\end{align}
\normalsize

\end{proof}

\subsection{Theorem \ref{th:hpgadvantage}}
\label{sec:hpgadvantage}

\theoremhpgadvantage*

\begin{proof}
The result is obtained by choosing $b_t^{\bs{\theta}} = V_t^{\bs{\theta}}$ and subtracting Eq. \ref{eq:expectationhpgbaseline} from Eq. \ref{eq:expectationhpgactionvalue}.
\end{proof}

\subsection{Theorem \ref{th:hpgoptimalbaseline}}
\label{app:hpg:optimalbaseline}

For arbitrary $g', j,$ and $\bs{\theta}$, consider the following definitions of $f$ and $h$.
\begin{align}
 f(\bs{\tau}) &=  \sum_{g}  p(g) \sum_{t=1}^{T - 1} \frac{\partial}{\partial \theta_j} \log p(a_t \mid s_t, g, \bs{\theta}) \sum_{t' = t+1}^{T} \left[ \prod_{k=1}^{t' - 1} \frac{p(a_k \mid s_k, g, \bs{\theta})}{p(a_k \mid s_k, g', \bs{\theta})} \right]  r(s_{t'}, g), \\
 h(\bs{\tau}) &=  \sum_{g}  p(g) \sum_{t=1}^{T - 1} \frac{\partial}{\partial \theta_j} \log p(a_t \mid s_t, g, \bs{\theta}) \prod_{k=1}^t \frac{p(a_k \mid s_k, g, \bs{\theta})}{p(a_k \mid s_k, g', \bs{\theta})}.
\end{align}

For every $b_j \in \mathbb{R}$, using Theorem \ref{th:hpg} and the fact that $\mathbb{E} \left[ h(\bs{\Tau}) \mid g', \bs{\theta}\right] = 0$ by Lemma \ref{lm:bchpg},
\begin{align}
 \frac{\partial }{\partial \theta_j} \eta (\bs{\theta}) = \mathbb{E} \left[ f(\bs{\Tau}) \mid g', \bs{\theta}\right] = \mathbb{E} \left[ f(\bs{\Tau}) - b_j h(\bs{\Tau}) \mid g', \bs{\theta}\right].
\end{align}

\begin{theorem}
 Assuming $\Var \left[ h(\bs{\Tau}) \mid g', \bs{\theta}\right] > 0$, the (optimal constant baseline) $b_j$ that minimizes $\Var \left[ f(\bs{\Tau}) - b_j h(\bs{\Tau}) \mid g', \bs{\theta}\right] $ is given by
 \begin{align}
  b_j = \frac{\mathbb{E} \left[ f(\bs{\Tau}) h(\bs{\Tau}) \mid g', \bs{\theta}\right]}{\mathbb{E} \left[ h(\bs{\Tau})^2 \mid g', \bs{\theta}\right]}.
 \end{align}

 \label{th:hpgoptimalbaseline}
\end{theorem}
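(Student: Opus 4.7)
The plan is to treat the variance as a one-dimensional quadratic in the scalar $b_j$ and minimize it by standard calculus. The key ingredient, already established upstream, is that $\mathbb{E}[h(\bs{\Tau}) \mid g', \bs{\theta}] = 0$ by Lemma \ref{lm:bchpg}, which makes the quadratic particularly clean and is what lets the answer take the form of a ratio of simple second moments rather than a ratio of covariances.

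First, I would expand the variance as
\begin{align*}
\Var\bigl[f(\bs{\Tau}) - b_j h(\bs{\Tau}) \mid g', \bs{\theta}\bigr] &= \mathbb{E}\bigl[(f(\bs{\Tau}) - b_j h(\bs{\Tau}))^2 \mid g', \bs{\theta}\bigr] - \mathbb{E}\bigl[f(\bs{\Tau}) - b_j h(\bs{\Tau}) \mid g', \bs{\theta}\bigr]^2.
\end{align*}
Using linearity of expectation and $\mathbb{E}[h(\bs{\Tau}) \mid g', \bs{\theta}] = 0$, the second term reduces to $\mathbb{E}[f(\bs{\Tau}) \mid g', \bs{\theta}]^2$, which is independent of $b_j$. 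Expanding the square in the first term then gives a function of $b_j$ of the form $C - 2 b_j\, \mathbb{E}[f(\bs{\Tau}) h(\bs{\Tau}) \mid g', \bs{\theta}] + b_j^2\, \mathbb{E}[h(\bs{\Tau})^2 \mid g', \bs{\theta}]$, where $C$ collects all $b_j$-independent terms.

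Second, I would differentiate this quadratic with respect to $b_j$ and set the derivative to zero, obtaining the stated $b_j = \mathbb{E}[f(\bs{\Tau}) h(\bs{\Tau}) \mid g', \bs{\theta}] / \mathbb{E}[h(\bs{\Tau})^2 \mid g', \bs{\theta}]$. The denominator is nonzero because $\Var[h(\bs{\Tau}) \mid g', \bs{\theta}] > 0$ together with $\mathbb{E}[h(\bs{\Tau}) \mid g', \bs{\theta}] = 0$ forces $\mathbb{E}[h(\bs{\Tau})^2 \mid g', \bs{\theta}] = \Var[h(\bs{\Tau}) \mid g', \bs{\theta}] > 0$. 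The second derivative equals $2 \mathbb{E}[h(\bs{\Tau})^2 \mid g', \bs{\theta}] > 0$, so the critical point is indeed the global minimizer of a strictly convex quadratic.

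There is no real obstacle here; the argument is a direct specialization of the generic \textbf{optimal constant control variate} calculation (the same argument invoked via Lemma \ref{lm:optimalconstantbaseline} for Theorem \ref{th:pgoptimalbaseline}), and in fact the whole proof can be discharged by simply citing that lemma after identifying $f$ and $h$ with the random variables inside the expectations conditional on $g', \bs{\theta}$. The only point worth being careful about is ensuring that the quantity $\mathbb{E}[h(\bs{\Tau}) \mid g', \bs{\theta}] = 0$ is correctly imported from Lemma \ref{lm:bchpg} applied with the constant baseline $b_t^{\bs{\theta}} \equiv 1$, so that the $\mathbb{E}[f - b_j h]^2$ term drops out cleanly and the optimizer is a plain ratio of second moments rather than requiring a covariance/variance expression.
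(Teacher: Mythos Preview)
Your proposal is correct and matches the paper's approach: the paper's proof is the single line ``The result is an application of Lemma \ref{lm:optimalconstantbaseline},'' and you have both spelled out that lemma's argument and explicitly noted that citing it suffices once $\mathbb{E}[h(\bs{\Tau})\mid g',\bs{\theta}]=0$ is imported from Lemma \ref{lm:bchpg}.
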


\begin{proof}
 The result is an application of Lemma \ref{lm:optimalconstantbaseline}.
\end{proof}

\newpage

\section{Hindsight gradient estimators}
\label{app:estimators}

This appendix contains proofs related to the estimators presented in Section \ref{sec:estimators}: Theorem \ref{th:pdhpgestimator} (App. \ref{app:pdhpgestimator}) and Theorem \ref{th:pdwe} (App. \ref{app:wpdhpgestimator}). Appendix \ref{app:wbaselineestimator} presents a result that enables a consistency-preserving \emph{weighted baseline}.

In this appendix, we will consider a dataset $\mathcal{D} = \{ ( \bs{\tau}^{(i)}, g^{(i)} ) \}_{i=1}^N$ where each trajectory $\bs{\tau}^{(i)}$ is obtained using a policy parameterized by $\bs{\theta}$ in an attempt to achieve a goal $g^{(i)}$ chosen by the environment. Because $\mathcal{D}$ is an \emph{iid} dataset given $\bs{\Theta}$,
\begin{equation}
  p(\mathcal{D} \mid \bs{\theta}) = p(\bs{\tau}^{(1:N)}, g^{(1:N)} \mid \bs{\theta}) = \prod_{i = 1}^N p( \bs{\tau}^{(i)}, g^{(i)} \mid \bs{\theta}) = \prod_{i=1}^N p(g^{(i)}) p(\bs{\tau}^{(i)} \mid g^{(i)}, \bs{\theta}) .	
\end{equation}

\subsection{Theorem \ref{th:pdhpgestimator}}
\label{app:pdhpgestimator}

\theorempdhpgestimator*

\begin{proof}
Let $I^{(N)}_j$ denote the $j$-th element of the estimator, which can be written as
\begin{align}
 I^{(N)}_j = \frac{1}{N} \sum_{i = 1}^N I(\bs{\Tau}^{(i)}, G^{(i)}, \bs{\theta})_j,
\end{align}
where
\begin{align}
 I(\bs{\tau}, g', \bs{\theta})_j = \sum_{g} p(g) \sum_{t=1}^{T-1} \frac{\partial}{\partial \theta_j} \log p(a_{t} \mid s_{t}, g, \bs{\theta}) \sum_{t' = t+1}^{T} \left[ \prod_{k=1}^{t' -1} \frac{p(a_k \mid s_k, g, \bs{\theta})}{p(a_k \mid s_k, g', \bs{\theta})} \right] r(s_{t'}, g).
\end{align}

Using Theorem \ref{th:hpg}, the expected value $\mathbb{E} \left[ I^{(N)}_j \mid \bs{\theta} \right]$ is given by
\small
 \begin{align}
  \mathbb{E} \left[ I^{(N)}_j \mid \bs{\theta} \right] =  \frac{1}{N} \sum_{i = 1}^N \sum_{g^{(i)}} p(g^{(i)}) \mathbb{E} \left[  I(\bs{\Tau}^{(i)}, g^{(i)}, \bs{\theta})_j \mid g^{(i)}, \bs{\theta}\right] = \frac{1}{N} \sum_{i = 1}^N \sum_{g^{(i)}} p(g^{(i)}) \frac{\partial}{\partial \theta_j} \eta(\bs{\theta}) = \frac{\partial}{\partial \theta_j} \eta(\bs{\theta}).
  \label{eq:pdis}
 \end{align}
\normalsize
 
Therefore, $I^{(N)}_j$ is an unbiased estimator of $\partial \eta(\bs{\theta}) /\partial \theta_j$.
 
Conditionally on $\bs{\Theta}$, the random variable $I^{(N)}_j$ is an average of iid random variables with expected value $\partial \eta(\bs{\theta}) /\partial \theta_j$ (see Eq. \ref{eq:pdis}). By the strong law of large numbers \citep[Theorem 2.3.13]{Sen1994},
\begin{align}
 I^{(N)}_j \xrightarrow{\text{a.s.}} \frac{\partial}{\partial \theta_j} \eta (\bs{\theta}).
\end{align}

Therefore, $I^{(N)}_j$ is a consistent estimator of $\partial \eta(\bs{\theta}) /\partial \theta_j$.

\end{proof}

\subsection{Theorem \ref{th:pdwe}}
\label{app:wpdhpgestimator}

\theoremwhpgconsistent*

\begin{proof}

Let $W^{(N)}_j$ denote the $j$-th element of the estimator, which can be written as
\begin{align}
 W^{(N)}_j = \sum_{g} p(g)\sum_{t=1}^{T-1} \sum_{t' = t+1}^{T} \frac{X(g,t,t')^{(N)}_j}{Y(g,t,t')^{(N)}_j},
\end{align}
where
\begin{align}
 X(g,t,t')^{(N)}_j &= \frac{1}{N} \sum_{i = 1}^N X(\bs{\Tau}^{(i)}, G^{(i)}, g, t, t', \bs{\theta})_j, \\
 Y(g,t,t')^{(N)}_j &= \frac{1}{N} \sum_{i=1}^N Y(\bs{\Tau}^{(i)}, G^{(i)}, g, t, t', \bs{\theta})_j, \\
 X(\bs{\tau}, g', g, t, t', \bs{\theta})_j &=  \left[ \prod_{k=1}^{t' -1} \frac{p(a_k \mid s_k, g, \bs{\theta})}{p(a_k \mid s_k, g', \bs{\theta})} \right] \frac{\partial}{\partial \theta_j} \log p(a_{t} \mid s_{t}, g, \bs{\theta}) r(s_{t'}, g),\\
 Y(\bs{\tau}, g', g, t, t', \bs{\theta})_j &= \left[ \prod_{k=1}^{t' -1} \frac{p(a_k \mid s_k, g, \bs{\theta})}{p(a_k \mid s_k, g', \bs{\theta})} \right].
\end{align}

Consider the expected value $E_{X_i} = \mathbb{E} \left[ X(\bs{\Tau}^{(i)}, G^{(i)}, g, t, t', \bs{\theta})_j \mid \bs{\theta} \right]$, which is given by
\small
\begin{align}
 E_{X_i} &= \sum_{g^{(i)}} p(g^{(i)}) \mathbb{E} \left[ \left[ \prod_{k=1}^{t' -1} \frac{p(A_k \mid S_k, g, \bs{\theta})}{p(A_k \mid S_k, G = g^{(i)}, \bs{\theta})} \right] \frac{\partial}{\partial \theta_j} \log p(A_{t} \mid S_{t}, g, \bs{\theta})   r(S_{t'}, g) \mid G = g^{(i)}, \bs{\theta} \right]. 
 \end{align}
\normalsize
 
Using the fact that $t' > t $, Lemma \ref{th:lemma1}, and canceling terms, $E_{X_i}$ can be written as
\scriptsize
\begin{align}
 \sum_{g^{(i)}} p(g^{(i)}) \sum_{s_{1:t'}} \sum_{a_{1:t'-1}} p(s_{t'} \mid s_{1:t'-1}, a_{1:t'-1}, G = g^{(i)}, \bs{\theta}) p(s_{1:t'-1}, a_{1:t'-1} \mid g, \bs{\theta}) \frac{\partial}{\partial \theta_j} \log p(a_{t} \mid s_{t}, g, \bs{\theta})  r(s_{t'}, g).
\end{align}
\normalsize

Because $S_{t'} \independent G \mid S_{1:t'-1}, A_{1:t'-1}, \bs{\Theta} $, 
\begin{align}
 E_{X_i} &= \mathbb{E} \left[ \frac{\partial}{\partial \theta_j} \log p(A_{t} \mid S_{t}, g, \bs{\theta}) r(S_{t'}, g) \mid g, \bs{\theta} \right].
\end{align}

Conditionally on $\bs{\Theta}$, the variable $X(g,t,t')^{(N)}_j$ is an average of iid random variables with expected value $E_{X_i}$. By the strong law of large numbers \citep[Theorem 2.3.13]{Sen1994},
$ X(g,t,t')^{(N)}_j \xrightarrow{\text{a.s.}} E_{X_i}$.

Using Lemma \ref{th:lemma1}, the expected value $E_{Y_i} = \mathbb{E} \left[ Y(\bs{\Tau}^{(i)}, G^{(i)}, g, t, t', \bs{\theta})_j \mid \bs{\theta}\right]$ is given by
\begin{align}
 E_{Y_i} = \sum_{g^{(i)}} p(g^{(i)})  \mathbb{E} \left[ \frac{p(S_{1:t'-1}^{(i)}, A_{1:t'-1}^{(i)} \mid G^{(i)} = g, \bs{\theta})}{p(S_{1:t'-1}^{(i)}, A_{1:t'-1}^{(i)} \mid g^{(i)}, \bs{\theta})} \mid g^{(i)}, \bs{\theta} \right] = 1.
 \label{eq:pdwe_normalizing}
\end{align}

Conditionally on $\bs{\Theta}$, the variable $Y(g,t,t')^{(N)}_j$ is an average of iid random variables with expected value $1$. By the strong law of large numbers,
$ Y(g,t,t')^{(N)}_j \xrightarrow{\text{a.s.}} 1$.

Because both $X(g,t,t')^{(N)}_j$ and $Y(g,t,t')^{(N)}_j$ converge almost surely to real numbers \citep[Ch. 3, Property 2]{Thomas15},
\begin{align}
 \frac{X(g,t,t')^{(N)}_j}{Y(g,t,t')^{(N)}_j} \xrightarrow{\text{a.s.}} \mathbb{E} \left[ \frac{\partial}{\partial \theta_j} \log p(A_{t} \mid S_{t}, g, \bs{\theta}) r(S_{t'}, g) \mid  g, \bs{\theta} \right].
\end{align}

By Theorem \ref{th:pg} and the fact that $W^{(N)}_j$ is a linear combination of terms $X(g,t,t')^{(N)}_j/Y(g,t,t')^{(N)}_j $, 
\begin{align}
 W^{(N)}_j \xrightarrow{\text{a.s.}} \sum_{g} p(g)\sum_{t=1}^{T-1} \sum_{t' = t+1}^{T} \mathbb{E} \left[ \frac{\partial}{\partial \theta_j} \log p(A_{t} \mid S_{t}, g, \bs{\theta}) r(S_{t'}, g) \mid  g, \bs{\theta} \right] = \frac{\partial}{\partial \theta_j} \eta(\bs{\theta}).
\end{align}

\end{proof}

\subsection{Theorem \ref{th:wbaselineestimator}}
\label{app:wbaselineestimator}

\begin{theorem}
 The weighted baseline estimator, given by
 \begin{align}
  \sum_{i=1}^N \sum_{g} p(g) \sum_{t=1}^{T-1}   \nabla \log p(A^{(i)}_t \mid S^{(i)}_t, G^{(i)} = g, \bs{\theta})  \frac{ \left[ \prod_{k=1}^t \frac{p(A^{(i)}_k \mid S^{(i)}_k, G^{(i)} =g, \bs{\theta})}{p(A^{(i)}_k \mid S^{(i)}_k, G^{(i)}, \bs{\theta})} \right] b^{\bs{\theta}}_t(S_t^{(i)}, g)}{ \sum_{j=1}^N \left[ \prod_{k=1}^t \frac{p(A^{(j)}_k \mid S^{(j)}_k, G^{(j)} =g, \bs{\theta})}{p(A^{(j)}_k \mid S^{(j)}_k, G^{(j)}, \bs{\theta})} \right]},
 \end{align}
 converges almost surely to zero.
 \label{th:wbaselineestimator}
\end{theorem}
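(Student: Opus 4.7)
The plan is to mirror the structure of the proof of Theorem \ref{th:pdwe}. First I would rewrite the estimator in a form amenable to the strong law of large numbers: pull the outer $\sum_{i=1}^N$ inside the sums over $g$ and $t$, and multiply numerator and denominator of each fraction by $1/N$. The estimator then takes the form $\sum_{g} p(g) \sum_{t=1}^{T-1} X(g,t)^{(N)}/Y(g,t)^{(N)}$, where $X(g,t)^{(N)}$ is the sample mean (over $i$) of $\nabla \log p(A^{(i)}_t \mid S^{(i)}_t, g, \bs{\theta}) \bigl[\prod_{k=1}^t \frac{p(A^{(i)}_k \mid S^{(i)}_k, g, \bs{\theta})}{p(A^{(i)}_k \mid S^{(i)}_k, G^{(i)}, \bs{\theta})}\bigr] b^{\bs{\theta}}_t(S_t^{(i)}, g)$ and $Y(g,t)^{(N)}$ is the sample mean of the corresponding likelihood ratio product. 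Since $\mathcal{D}$ is iid conditionally on $\bs{\Theta}$, so are the summands of $X(g,t)^{(N)}$ and $Y(g,t)^{(N)}$.

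Next, the strong law of large numbers yields $X(g,t)^{(N)} \xrightarrow{\text{a.s.}} E_X(g,t)$ and $Y(g,t)^{(N)} \xrightarrow{\text{a.s.}} E_Y(g,t)$. For the denominator, the argument leading to Eq.~\ref{eq:pdwe_normalizing} in the proof of Theorem \ref{th:pdwe} applies verbatim (the upper index is $t$ rather than $t'-1$, but that is immaterial): conditioning on $G^{(i)}$ and applying Lemma \ref{th:lemma1} together with cancelation of state-transition factors shows $E_Y(g,t)=1$.

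The main step, and the principal obstacle, is establishing $E_X(g,t)=\mathbf{0}$ elementwise for every $g$ and $t$. Writing this expectation as $\sum_{g'} p(g')\, \mathbb{E}[\,\cdots\mid g',\bs{\theta}]$, the integrand is exactly the summand of the left-hand side of Eq.~\ref{eq:bchpg} in Lemma \ref{lm:bchpg} with the outer sums over $g$ and $t$ stripped off. The key observation is that the proof of Lemma \ref{lm:bchpg} in App.~\ref{sec:lemmabchpg} actually establishes a stronger, termwise identity: each such summand is shown to be zero \emph{before} the summations over $g$ and $t$ are performed, because the inner $\frac{\partial}{\partial \theta_j}\sum_{a_t} p(a_t\mid s_t, g, \bs{\theta}) = 0$ kills the integrand for every fixed $(g', g, t)$. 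Averaging zero over $g'$ under $p(g')$ gives $E_X(g,t)=\mathbf{0}$.

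To conclude, because $Y(g,t)^{(N)}\to 1\neq 0$ almost surely, the continuous mapping argument invoked in the proof of Theorem \ref{th:pdwe} (property that quotients of almost surely convergent sequences with nonzero limit denominator converge almost surely) yields $X(g,t)^{(N)}/Y(g,t)^{(N)} \xrightarrow{\text{a.s.}} \mathbf{0}/1 = \mathbf{0}$. Since $\Val(G)$ and $\{1,\ldots,T-1\}$ are finite, the finite linear combination $\sum_{g} p(g)\sum_{t=1}^{T-1} X(g,t)^{(N)}/Y(g,t)^{(N)}$ inherits almost sure convergence to $\mathbf{0}$, completing the proof. The only delicate point is the termwise zero claim above; once one verifies by rereading App.~\ref{sec:lemmabchpg} that Lemma \ref{lm:bchpg} is proved summand by summand rather than only in aggregate, no new calculation is required.
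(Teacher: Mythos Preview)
Your proposal is correct and follows essentially the same route as the paper's own proof: the same $X(g,t)^{(N)}/Y(g,t)^{(N)}$ decomposition, the strong law of large numbers applied to each, the denominator handled by the argument from Theorem~\ref{th:pdwe} (Eq.~\ref{eq:pdwe_normalizing}), the numerator expectation shown to vanish via the termwise identity established in the proof of Lemma~\ref{lm:bchpg}, and the conclusion by quotient convergence plus finite linear combination. The only cosmetic difference is that the paper works componentwise in $j$ whereas you keep the full gradient vector, which changes nothing substantive.
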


\begin{proof}

Let $B^{(N)}_j$ denote the $j$-th element of the estimator, which can be written as
\begin{align}
 B^{(N)}_j = \sum_{g} p(g)\sum_{t=1}^{T-1} \frac{X(g,t)^{(N)}_j}{Y(g,t)^{(N)}_j},
\end{align}
where
\begin{align}
 X(g,t)^{(N)}_j &= \frac{1}{N} \sum_{i = 1}^N X(\bs{\Tau}^{(i)}, G^{(i)}, g, t, \bs{\theta})_j, \\
  Y(g,t)^{(N)}_j &= \frac{1}{N} \sum_{i=1}^N Y(\bs{\Tau}^{(i)}, G^{(i)}, g, t, \bs{\theta})_j, \\
   X(\bs{\tau}, g', g, t, \bs{\theta})_j &=  \left[ \prod_{k=1}^t \frac{p(a_k \mid s_k, g, \bs{\theta})}{p(a_k \mid s_k, g', \bs{\theta})} \right] \frac{\partial}{\partial \theta_j} \log p(a_t \mid s_t, g, \bs{\theta})  b^{\bs{\theta}}_t(s_t, g),\\
 Y(\bs{\tau}, g', g, t, \bs{\theta})_j &= \prod_{k=1}^t \frac{p(a_k \mid s_k, g, \bs{\theta})}{p(a_k \mid s_k, g', \bs{\theta})}.
\end{align}

Using Eqs. \ref{eq:expectationhpgbaseline} and \ref{eq.hpg_baseline}, the expected value $E_{X_i} = \mathbb{E} \left[ X(\bs{\Tau}^{(i)}, G^{(i)}, g, t, \bs{\theta})_j \mid \bs{\theta} \right]$ is given by
\begin{align}
 E_{X_i} &= \sum_{g^{(i)}} p(g^{(i)}) \mathbb{E} \left[ X(\bs{\Tau}^{(i)}, g^{(i)}, g, t, \bs{\theta})_j \mid g^{(i)}, \bs{\theta} \right] = 0. 
 \end{align}

Conditionally on $\bs{\Theta}$, the variable $X(g,t)^{(N)}_j$ is an average of iid random variables with expected value zero. By the strong law of large numbers \citep[Theorem 2.3.13]{Sen1994}, $X(g,t)^{(N)}_j \xrightarrow{\text{a.s.}} 0$. 

The fact that $Y(g,t)^{(N)}_j \xrightarrow{\text{a.s.}} 1$ is already established in the proof of Theorem \ref{th:pdwe}. Because both $X(g,t)^{(N)}_j$ and $Y(g,t)^{(N)}_j$ converge almost surely to real numbers \citep[Ch. 3, Property 2]{Thomas15},
\begin{align}
 \frac{X(g,t)^{(N)}_j}{Y(g,t)^{(N)}_j} \xrightarrow{\text{a.s.}} 0.
\end{align}

Because $B^{(N)}_j$ is a linear combination of terms $X(g,t)^{(N)}_j/Y(g,t)^{(N)}_j $, $B^{(N)}_j \xrightarrow{\text{a.s.}} 0$.

\end{proof}

Clearly, if $E^{(N)}$ is a consistent estimator of a some quantity given $\bs{\theta}$, then so is $E^{(N)} - B^{(N)}_j$, which allows using this result in combination with Theorem \ref{th:pdwe}.

\newpage

\section{Fundamental results}

This appendix presents results required by previous sections: Lemma \ref{th:lemma1} (App. \ref{sec:lemma1}), Lemma \ref{th:lemma2} (App. \ref{sec:lemma2}), Theorem \ref{th:advantage} (App. \ref{app:advantage}), and Lemma \ref{lm:optimalconstantbaseline} (App. \ref{app:optimalconstantbaseline}). Appendix \ref{app:actionvalue} contains an auxiliary result.

\subsection{Lemma \ref{th:lemma1}}
\label{sec:lemma1}

\begin{lemma}
For every $\bs{\tau}, g, \bs{\theta}$, and $1 \leq t \leq T - 1$,
\begin{align}
 p(s_{1:t}, a_{1:t} \mid g, \bs{\theta}) = p(s_1) p(a_t \mid s_t, g, \bs{\theta}) \prod_{k = 1}^{t - 1} p(a_k \mid s_k, g, \bs{\theta}) p(s_{k+1} \mid s_k, a_k).
\end{align}
\label{th:lemma1}
\end{lemma}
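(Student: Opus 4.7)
The plan is to derive the claim by direct marginalization from the trajectory factorization in Equation \ref{eq:trajectory}. Specifically, $p(s_{1:t}, a_{1:t} \mid g, \bs{\theta})$ equals the sum of $p(\bs{\tau} \mid g, \bs{\theta})$ over the remaining variables $a_{t+1:T-1}$ and $s_{t+1:T}$. Substituting Equation \ref{eq:trajectory} and splitting the product at index $t$ yields a ``head'' piece that depends only on $s_{1:t}, a_{1:t}$, namely
\[
 p(s_1)\, p(a_t \mid s_t, g, \bs{\theta}) \prod_{k=1}^{t-1} p(a_k \mid s_k, g, \bs{\theta})\, p(s_{k+1} \mid s_k, a_k),
\]
multiplied by a ``tail'' piece $p(s_{t+1} \mid s_t, a_t) \prod_{k=t+1}^{T-1} p(a_k \mid s_k, g, \bs{\theta})\, p(s_{k+1} \mid s_k, a_k)$ which is the only factor that still depends on the summed-out variables.

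The key step is then to show that the summation of the tail piece over $s_{t+1:T}$ and $a_{t+1:T-1}$ equals one. I would do this by peeling off one time step at a time from the outside in: first sum over $s_T$ using $\sum_{s_T} p(s_T \mid s_{T-1}, a_{T-1}) = 1$, then over $a_{T-1}$ using $\sum_{a_{T-1}} p(a_{T-1} \mid s_{T-1}, g, \bs{\theta}) = 1$, and so on, iterating down to summing over $s_{t+1}$. Each step is valid because each factor in the tail is a well-defined conditional probability mass function (by our assumption that $\Val(S_t)$ and $\Val(A_t)$ are finite and that the trajectory probabilities are given by Equation \ref{eq:trajectory}).

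There is no real obstacle; the only thing to be careful about is the indexing: the factor $p(a_t \mid s_t, g, \bs{\theta})$ is kept in the head while the paired factor $p(s_{t+1} \mid s_t, a_t)$ goes into the tail, which is why the stated expression has an extra $p(a_t \mid s_t, g, \bs{\theta})$ outside of the product running up to $t-1$. An equally clean alternative would be a short induction on $t$, using the base case $p(s_1, a_1 \mid g, \bs{\theta}) = p(s_1)\, p(a_1 \mid s_1, g, \bs{\theta})$ and, in the inductive step, applying the conditional independences $S_{t+1} \independent G, S_{1:t-1}, A_{1:t-1} \mid S_t, A_t$ and $A_{t+1} \independent S_{1:t}, A_{1:t} \mid S_{t+1}, G, \bs{\Theta}$ that are implicit in Equation \ref{eq:trajectory}. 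Either route produces the stated factorization in a few lines.
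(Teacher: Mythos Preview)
Your proposal is correct and essentially matches the paper's proof: the paper also marginalizes the full trajectory factorization, framing the ``peel off one time step at a time'' argument as a backward induction on $t$ (base case $t=T-1$ by summing out $s_T$, inductive step by summing out $s_t, a_t$), which is exactly the computation you describe. Your mention of a forward induction from $t=1$ is an equally valid alternative organization, but the paper does not take that route.
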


\begin{proof}
In order to employ backward induction, consider the case $t = T - 1$. By marginalization,
\begin{align}
 p(s_{1:T-1}, a_{1:T-1} \mid g, \bs{\theta}) &= \sum_{s_T} p(\bs{\tau} \mid g, \bs{\theta}) = \sum_{s_T} p(s_1) \prod_{k=1}^{T-1} p(a_k \mid s_k, g, \bs{\theta}) p(s_{k+1} \mid s_k, a_k) \\
 &= p(s_1) p(a_{T-1} \mid s_{T-1}, g, \bs{\theta}) \prod_{k=1}^{T-2} p(a_k \mid s_k, g, \bs{\theta}) p(s_{k+1} \mid s_k, a_k) ,
\end{align}
which completes the proof of the base case.

Assuming the inductive hypothesis is true for a given $2 \leq t \leq T-1$ and considering the case $t-1$,

\begin{align}
 p(s_{1:t-1}, a_{1:t-1} \mid g, \bs{\theta}) &= \sum_{s_t} \sum_{a_t} p(s_1) p(a_t \mid s_t, g, \bs{\theta}) \prod_{k = 1}^{t - 1} p(a_k \mid s_k, g, \bs{\theta}) p(s_{k+1} \mid s_k, a_k) \\
 &= p(s_1) p(a_{t-1} \mid s_{t-1}, g, \bs{\theta}) \prod_{k = 1}^{t - 2} p(a_k \mid s_k, g, \bs{\theta}) p(s_{k+1} \mid s_k, a_k).
\end{align}

\end{proof}

\subsection{Lemma \ref{th:lemma2}}
\label{sec:lemma2}

\begin{lemma}
For every $\bs{\tau}, g, \bs{\theta}$, and $1 \leq t \leq T$,
\begin{align}
 p(s_{t:T}, a_{t:T-1} \mid s_{1:t-1}, a_{1:t-1}, g, \bs{\theta}) = p(s_t \mid s_{t-1}, a_{t-1}) \prod_{k=t}^{T - 1} p(a_k \mid s_k, g, \bs{\theta}) p(s_{k+1} \mid s_k, a_k).
\end{align}
\label{th:lemma2}
\end{lemma}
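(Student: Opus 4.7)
The plan is to apply the definition of conditional probability and reduce to a ratio that can be simplified term by term. That is, for $2 \leq t \leq T$, I would write
\[
 p(s_{t:T}, a_{t:T-1} \mid s_{1:t-1}, a_{1:t-1}, g, \bs{\theta}) = \frac{p(\bs{\tau} \mid g, \bs{\theta})}{p(s_{1:t-1}, a_{1:t-1} \mid g, \bs{\theta})},
\]
then expand the numerator using Eq.~\ref{eq:trajectory} and the denominator using Lemma~\ref{th:lemma1}. Both factorizations share the prefix $p(s_1) \prod_{k=1}^{t-2} p(a_k \mid s_k, g, \bs{\theta}) p(s_{k+1} \mid s_k, a_k)$ as well as the factor $p(a_{t-1} \mid s_{t-1}, g, \bs{\theta})$, which appears explicitly on the right-hand side of Lemma~\ref{th:lemma1}. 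After cancelling these common factors, what remains on top is exactly $p(s_t \mid s_{t-1}, a_{t-1}) \prod_{k=t}^{T-1} p(a_k \mid s_k, g, \bs{\theta}) p(s_{k+1} \mid s_k, a_k)$, which is the claimed expression.

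Next I would dispose of the boundary cases. For $t = 1$ the conditioning set is empty and the claim collapses to Eq.~\ref{eq:trajectory} (under the natural convention that the ``$p(s_1 \mid s_0, a_0)$'' factor denotes $p(s_1)$, or simply by treating this case separately and using Eq.~\ref{eq:trajectory} directly). For $t = T$ the product $\prod_{k=t}^{T-1}$ is empty and the claim reduces to $p(s_T \mid s_{1:T-1}, a_{1:T-1}, g, \bs{\theta}) = p(s_T \mid s_{T-1}, a_{T-1})$, which follows by the same ratio argument using Lemma~\ref{th:lemma1} at $t = T-1$ together with Eq.~\ref{eq:trajectory}.

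I do not expect a real obstacle here: the result is essentially bookkeeping on the factorization in Eq.~\ref{eq:trajectory}, expressing the Markov structure of state transitions and the goal-independence assumption $S_{t+1} \independent G \mid S_t, A_t$ that are already built into that equation. The only care required is in aligning the product ranges (indices $1$ through $t-2$ versus $t$ through $T-1$, with the pivotal pair $p(a_{t-1} \mid s_{t-1}, g, \bs{\theta}) p(s_t \mid s_{t-1}, a_{t-1})$ straddling the cut) so that cancellation against Lemma~\ref{th:lemma1} is clean. As an alternative route, one could mirror the backward induction used in Lemma~\ref{th:lemma1}, starting from $t = T$ and marginalizing out $(s_{t-1}, a_{t-2})$ step by step; but the direct ratio argument seems strictly shorter.
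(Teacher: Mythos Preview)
Your proposal is correct and follows essentially the same approach as the paper: write the conditional as the ratio $p(\bs{\tau}\mid g,\bs{\theta})/p(s_{1:t-1},a_{1:t-1}\mid g,\bs{\theta})$, expand the numerator via Eq.~\ref{eq:trajectory} and the denominator via Lemma~\ref{th:lemma1}, and cancel. The paper likewise singles out $t=1$ for separate inspection and treats $2\le t\le T$ uniformly by the ratio argument; your extra remarks on $t=T$ and the alternative backward-induction route are sound but unnecessary additions.
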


\begin{proof}

The case $t=1$ can be inspected easily. Consider $2 \leq t \leq T$. By definition,
\begin{align}
 p(s_{t:T}, a_{t:T-1} \mid s_{1:t-1}, a_{1:t-1}, g, \bs{\theta}) = \frac{p(s_{1:T}, a_{1:T-1} \mid g, \bs{\theta})}{ p(s_{1:t-1}, a_{1:t-1} \mid g,\bs{\theta}) }.
\end{align}

Using Lemma \ref{th:lemma1},
\small
\begin{align}
 p(s_{t:T}, a_{t:T-1} \mid s_{1:t-1}, a_{1:t-1}, g, \bs{\theta}) &= \frac{p(s_1) \prod_{k=1}^{T-1} p(a_k \mid s_k, g, \bs{\theta}) p(s_{k+1} \mid s_k, a_k)}{p(s_1) p(a_{t-1} \mid s_{t-1}, g, \bs{\theta}) \prod_{k = 1}^{t - 2} p(a_k \mid s_k, g, \bs{\theta}) p(s_{k+1} \mid s_k, a_k)} \\
 &= \frac{\prod_{k=t-1}^{T-1} p(a_k \mid s_k, g, \bs{\theta}) p(s_{k+1} \mid s_k, a_k)}{p(a_{t-1} \mid s_{t-1}, g, \bs{\theta})}.
 \end{align}
\normalsize
 
\end{proof}

\subsection{Lemma \ref{lm:actionvalue}}
\label{app:actionvalue}

\begin{lemma}
 For every $t$ and $\bs{\theta}$, the action-value function $Q_t^{\bs{\theta}}$ is given by
 \begin{align}
  Q_t^{\bs{\theta}}(s, a, g) = \mathbb{E} \left[ r(S_{t+1}, g) + V_{t+1}^{\bs{\theta}}(S_{t+1}, g) \mid S_t = s, A_t = a \right].
 \end{align}
 \label{lm:actionvalue}
\end{lemma}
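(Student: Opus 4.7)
The plan is to derive the lemma from a one-step application of the tower property, extracting the first reward from the defining sum of $Q_t^{\bs{\theta}}$ and collapsing the remaining tail into a value function. Concretely, I would begin from the definition
\begin{align}
 Q_t^{\bs{\theta}}(s, a, g) = \mathbb{E}\left[ r(S_{t+1}, g) + \sum_{t' = t+2}^{T} r(S_{t'}, g) \;\bigg|\; S_t = s, A_t = a, g, \bs{\theta} \right],
\end{align}
and further condition on $S_{t+1}$ inside the expectation of the tail sum.

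The key step is then to identify $\mathbb{E}\left[ \sum_{t' = t+2}^{T} r(S_{t'}, g) \mid S_{t+1}, S_t = s, A_t = a, g, \bs{\theta} \right]$ with $V_{t+1}^{\bs{\theta}}(S_{t+1}, g)$. By Lemma \ref{th:lemma2}, the factorization in Eq. \ref{eq:trajectory} implies that the joint distribution of $S_{t+2:T}, A_{t+1:T-1}$ given $(S_{1:t+1}, A_{1:t}, g, \bs{\theta})$ depends on the past only through $S_{t+1}$, so the conditioning on $(S_t, A_t)$ may be dropped from the inner expectation. The result $\mathbb{E}\left[ \sum_{t' = t+2}^{T} r(S_{t'}, g) \mid S_{t+1}, g, \bs{\theta} \right]$ then equals $V_{t+1}^{\bs{\theta}}(S_{t+1}, g)$ by combining the definitions of $V_{t+1}^{\bs{\theta}}$ and $Q_{t+1}^{\bs{\theta}}$ with one further application of the tower property, this time inserting a conditioning on $A_{t+1}$.

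Finally, I would strip $g$ and $\bs{\theta}$ from the remaining outer conditioning set. Although these symbols appear as fixed parameters inside the integrand $r(S_{t+1}, g) + V_{t+1}^{\bs{\theta}}(S_{t+1}, g)$, the distribution of $S_{t+1}$ given $S_t = s, A_t = a$ is governed entirely by the transition kernel $p(s_{t+1} \mid s_t, a_t)$, which by assumption satisfies $S_{t+1} \independent G \mid S_t, A_t$ and does not depend on $\bs{\theta}$. The main subtlety I anticipate is precisely this bookkeeping distinction between the roles of $g$ and $\bs{\theta}$ as labels of deterministic mappings versus as conditioning variables for the trajectory distribution; once that distinction is made explicit, the stated identity follows immediately.
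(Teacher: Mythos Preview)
Your proposal is correct and follows essentially the same route as the paper's proof. The only difference is presentational: the paper writes the tail term as an explicit sum over $s_{t+1}, a_{t+1}, s_{t+2:T}$ and separately invokes the three conditional-independence statements $S_{t+1} \independent G, \bs{\Theta} \mid S_t, A_t$, $A_{t+1} \independent S_t, A_t \mid S_{t+1}, G, \bs{\Theta}$, and $S_{t+2:T} \independent S_t, A_t \mid S_{t+1}, A_{t+1}, G, \bs{\Theta}$, whereas you package the same facts through Lemma~\ref{th:lemma2} and two applications of the tower property.
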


\begin{proof}
 From the definition of action-value function and using the fact that $S_{t+1} \independent G, \bs{\Theta} \mid S_t, A_t$,
 \begin{align}
  Q_{t}^{\bs{\theta}}(s, a, g) = \mathbb{E} \left[ r(S_{t+1}, g) \mid S_t = s, A_t = a \right] + \mathbb{E} \left[ \sum_{t'= t+2}^{T} r(S_{t'}, g) \mid S_t = s, A_t = a, g, \bs{\theta} \right].
 \end{align}
 
 Let $z$ denote the second term in the right-hand side of the previous equation, which can also be written as
\begin{align}
 z = \sum_{s_{t+1}} \sum_{a_{t+1}} \sum_{s_{t+2:T}} p(s_{t+1}, a_{t+1}, s_{t+2:T} \mid S_t = s, A_t = a, g, \bs{\theta}) \sum_{t'= t+2}^{T} r(s_{t'}, g).
\end{align}

Consider the following three independence properties:
\begin{align}
 S_{t+1} & \independent G, \bs{\Theta} \mid S_t, A_t, \\
 A_{t+1} &\independent S_{t}, A_{t} \mid S_{t+1}, G, \bs{\Theta}, \\
 S_{t+2:T} & \independent S_{t}, A_{t} \mid S_{t+1}, A_{t+1}, G, \bs{\Theta}.
\end{align}

Together, these properties can be used to demonstrate that
\scriptsize
\begin{align}
 z = \sum_{s_{t+1}}   p(s_{t+1} \mid S_t = s, A_t = a) \sum_{a_{t+1}} p(a_{t+1} \mid s_{t+1}, g, \bs{\theta}) \sum_{s_{t+2:T}} p(s_{t+2:T} \mid s_{t+1}, a_{t+1}, g, \bs{\theta}) \sum_{t'= t+2}^{T} r(s_{t'}, g).
\end{align}
\normalsize

From the definition of value function, $ z = \mathbb{E} \left[ V_{t+1}^{\bs{\theta}}(S_{t+1}, g) \mid S_t = s, A_t = a \right]$.

\end{proof}

\subsection{Theorem \ref{th:advantage}}
\label{app:advantage}

\theoremadvantage*

\begin{proof}
  The result follows from the definition of advantage function and Lemma \ref{lm:actionvalue}.
\end{proof}

\subsection{Lemma \ref{lm:optimalconstantbaseline}}
\label{app:optimalconstantbaseline}

Consider a discrete random variable $X$ and real-valued functions $f$ and $h$. Suppose also that $\mathbb{E} \left[ h(X) \right] = 0$ and $\Var \left[ h(X) \right] > 0$. Clearly, for every $b \in \mathbb{R}$, we have $\mathbb{E} \left[ f(X) - bh(X) \right] =\mathbb{E} \left[ f(X) \right]$.
\begin{lemma}
The constant $b \in \mathbb{R}$ that minimizes $\Var \left[ f(X) - bh(X) \right]$ is given by
\begin{align}
 b = \frac{\mathbb{E} \left[ f(X)h(X) \right]}{\mathbb{E} \left[ h(X)^2  \right]}.
\end{align}
\label{lm:optimalconstantbaseline}
\end{lemma}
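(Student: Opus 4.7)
The plan is to treat this as a one-dimensional quadratic minimization problem in $b$. First I would expand the variance using the standard identity
\begin{align}
\Var\left[f(X) - bh(X)\right] = \Var\left[f(X)\right] - 2b\,\mathrm{Cov}\left[f(X), h(X)\right] + b^2 \Var\left[h(X)\right].
\end{align}
Since $\mathbb{E}\left[h(X)\right] = 0$ by hypothesis, the covariance simplifies to $\mathrm{Cov}\left[f(X), h(X)\right] = \mathbb{E}\left[f(X)h(X)\right]$, and similarly $\Var\left[h(X)\right] = \mathbb{E}\left[h(X)^2\right]$. This reduces the objective to a quadratic function of $b$ whose coefficient on $b^2$ is strictly positive by the assumption $\Var\left[h(X)\right] > 0$.

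Next I would differentiate with respect to $b$ and set the derivative to zero, obtaining
\begin{align}
-2\,\mathbb{E}\left[f(X)h(X)\right] + 2b\,\mathbb{E}\left[h(X)^2\right] = 0,
\end{align}
which yields the stated value of $b$ directly. To confirm this is a minimum rather than some other critical point, I would note that the second derivative equals $2\mathbb{E}\left[h(X)^2\right] > 0$, so the quadratic is strictly convex and the unique stationary point is the global minimizer.

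Since $\mathbb{E}\left[h(X)^2\right] > 0$ (equivalent to $\Var\left[h(X)\right] > 0$ given that $\mathbb{E}\left[h(X)\right] = 0$), the denominator is nonzero and $b$ is well-defined. There is no substantive obstacle here; the only thing to watch is invoking the hypothesis $\Var\left[h(X)\right] > 0$ in exactly the right place to ensure both that the stationary point exists uniquely and that it is a minimum.
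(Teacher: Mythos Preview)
Your proposal is correct and essentially identical to the paper's proof: both expand the variance into a quadratic in $b$ (the paper does this directly from the definition of variance, you via the covariance decomposition, but the resulting expression is the same), differentiate, set the derivative to zero, and verify the second derivative $2\mathbb{E}[h(X)^2] > 0$ to confirm a minimum.
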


\begin{proof}
Let $v = \Var \left[ f(X) - bh(X) \right]$. Using our assumptions and the definition of variance, 
\begin{align}
 v &= \mathbb{E} \left[ (f(X) - bh(X))^2 \right] - \mathbb{E} \left[ f(X) - bh(X) \right]^2 = \mathbb{E} \left[ (f(X) - bh(X))^2 \right] - \mathbb{E} \left[ f(X) \right]^2 \\
 &= \mathbb{E} \left[ f(X)^2 \right] - 2b\mathbb{E} \left[ f(X)h(X) \right] + b^2 \mathbb{E} \left[ h(X)^2 \right] - \mathbb{E} \left[ f(X) \right]^2.
\end{align}

The first and second derivatives of $v$ with respect to $b$ are given by $dv/db = -2 \mathbb{E} \left[ f(X) h(X) \right] + 2b \mathbb{E} \left[ h(X)^2 \right]$ and $d^2 v/db^2 = 2\mathbb{E} \left[ h(X)^2 \right]$. Our assumptions guarantee that $\mathbb{E} \left[ h(X)^2 \right] > 0$. Therefore, by Fermat's theorem, if $b$ is a local minimum, then $dv/db = 0$, leading to the desired equality. By the second derivative test, $b$ must be a local minimum.

\end{proof}

\newpage

\section{Experiments}
\label{app:experiments}

This appendix contains additional information about the experiments introduced in Section \ref{sec:experiments}. Appendix \ref{app:experiments:architectures} details policy and baseline representations. Appendix \ref{app:experiments:settings} documents experimental settings. Appendix \ref{app:experiments:results} presents unabridged results.

\subsection{Policy and baseline representations}
\label{app:experiments:architectures}

In every experiment, a policy is represented by a feedforward neural network with a \emph{softmax} output layer. The input to such a policy is a pair composed of state and goal. A baseline function is represented by a feedforward neural network with a single (linear) output neuron. The input to such a baseline function is a triple composed of state, goal, and time step. The baseline function is trained to approximate the value function using the mean squared (one-step) temporal difference error \citep{sutton1998reinforcement}. Parameters are updated using Adam \citep{kingma2014adam}. The networks are given by the following.

\paragraph{Bit flipping environments and grid world environments.} Both policy and baseline networks have two hidden layers, each with $256$ hyperbolic tangent units. Every weight is initially drawn from a Gaussian distribution with mean $0$ and standard deviation $0.01$ (and redrawn if far from the mean by two standard deviations), and every bias is initially zero.

\paragraph{Ms. Pac-man environment.} The policy network is represented by a convolutional neural network. The network architecture is given by a convolutional layer with $32$ filters ($8\times8$, stride $4$); convolutional layer with $64$ filters ($4\times4$, stride $2$); convolutional layer with $64$ filters ($3\times3$, stride $1$); and three fully-connected layers, each with $256$ units. Every unit uses a hyperbolic tangent activation function. Every weight is initially set using variance scaling \citep{glorot2010understanding}, and every bias is initially zero. These design decisions are similar to the ones made by \citet{mnih2015human}.

A sequence of images obtained from the Arcade Learning Environment \citep{bellemare13arcade} is preprocessed as follows. Individually for each color channel, an elementwise maximum operation is employed between two consecutive images to reduce rendering artifacts. Such $210 \times 160 \times 3$ preprocessed image is converted to grayscale, cropped, and rescaled into an $84 \times 84$ image $x_t$. A sequence of images $x_{t-12},x_{t-8},x_{t-4}, x_t$ obtained in this way is \emph{stacked} into an $84 \times 84 \times 4$ image, which is an input to the policy network (recall that each action is repeated for $13$ game ticks). The goal information is concatenated with the \emph{flattened} output of the last convolutional layer.

\paragraph{FetchPush environment.} The policy network has three hidden layers, each with $256$ hyperbolic tangent units. Every weight is initially set using variance scaling \citep{glorot2010understanding}, and every bias is initially zero.

\subsection{Experimental settings}
\label{app:experiments:settings}

Tables \ref{app:tb:bitflippinggridworld} and \ref{app:tb:mspacmanfetchpush} document the experimental settings. The number of runs, training batches, and batches between evaluations are reported separately for hyperparameter search and definitive runs. The number of training batches is adapted according to how soon each estimator leads to apparent convergence. Note that it is very difficult to establish this setting before hyperparameter search. The number of batches between evaluations is adapted so that there are $100$ evaluation steps in total.

Other settings include the sets of policy and baseline learning rates under consideration for hyperparameter search, and the number of active goals subsampled per episode. In Tables \ref{app:tb:bitflippinggridworld} and \ref{app:tb:mspacmanfetchpush}, $ \mathcal{R}_1 = \{\alpha \times 10^{-k} \mid \alpha \in \{1, 5 \} \text{ and } k \in \{ 2, 3, 4, 5 \} \}$ and $\mathcal{R}_2 = \{ \beta \times 10^{-5} \mid \beta \in \{1, 2.5, 5, 7.5, 10 \} \}$.

As already mentioned in Section \ref{sec:experiments}, the definitive runs use the best combination of hyperparameters (learning rates) found for each estimator. Every setting was carefully chosen during preliminary experiments to ensure that the best result for each estimator is representative. In particular, the best performing learning rates rarely lie on the extrema of the corresponding search range. In the single case where the best performing learning rate found by hyperparameter search for a goal-conditional policy gradient estimator was such an extreme value (FetchPush, for a small batch size), evaluating one additional learning rate lead to decreased average performance.

\newcommand{\rowrunsfinal}[0]{Runs (definitive)}
\newcommand{\rowtrainingbatchesfinal}[0]{Training batches (definitive)}
\newcommand{\rowbatchesbtwevaluationsfinal}[0]{Batches between evaluations (definitive)}

\newcommand{\rowrunssearch}[0]{Runs (search)}
\newcommand{\rowtrainingbatchessearch}[0]{Training batches (search)}
\newcommand{\rowbatchesbtwevaluationssearch}[0]{Batches between evaluations (search)}

\newcommand{\rowpolicylearningrate}[0]{\emph{Policy learning rates}}
\newcommand{\rowbaselinelearningrate}[0]{\emph{Baseline learning rates}}

\newcommand{\rowactivegoals}[0]{Maximum active goals per episode}
\newcommand{\rowepisodesperevaluation}[0]{Episodes per evaluation}

\begin{sidewaystable}[p]
  \centering
  \caption{Experimental settings for the bit flipping and grid world environments}
  \label{app:tb:bitflippinggridworld}
  \subfloat{
    \begin{tabular}{l cc cc}
      \toprule
      & \multicolumn{2}{c}{Bit flipping (8 bits)} & \multicolumn{2}{c}{Bit flipping (16 bits)} \\
      \cmidrule(lr){2-3} \cmidrule(lr){4-5} 
      & Batch size $2$ & Batch size $16$ & Batch size $2$ & Batch size $16$ \\
      \midrule
      \rowrunsfinal & 20 & 20 & 20 & 20 \\
      \rowtrainingbatchesfinal & 5000 & 1400 & 15000 & 1000 \\
      \rowbatchesbtwevaluationsfinal & 50 & 14 & 150 & 10 \\[0.5em] 
      \rowrunssearch & 10 & 10 & 10 & 10 \\
      \rowtrainingbatchessearch & 4000 & 1400 & 4000 & 1000 \\
      \rowbatchesbtwevaluationssearch & 40 & 14 & 40 & 10 \\[0.5em] 
      \rowpolicylearningrate & $\mathcal{R}_1$ & $\mathcal{R}_1$ & $\mathcal{R}_1$ & $\mathcal{R}_1$ \\
      \rowbaselinelearningrate & $\mathcal{R}_1$ & $\mathcal{R}_1$ & $\mathcal{R}_1$ & $\mathcal{R}_1$ \\[0.5em] 
      \rowepisodesperevaluation & 256 & 256 & 256 & 256 \\
      \rowactivegoals & $\infty$ & $\infty$ & $\infty$ & $\infty$ \\
      \bottomrule
    \end{tabular}
  }
  \quad
  \subfloat{
    \begin{tabular}{l cc cc}
    \toprule
     & \multicolumn{2}{c}{Empty room} & \multicolumn{2}{c}{Four rooms}\\
    \cmidrule(lr){2-3} \cmidrule(lr){4-5}
    & Batch size $2$ & Batch size $16$  & Batch size $2$ & Batch size $16$\\
    \midrule
    \rowrunsfinal & 20 & 20 & 20 & 20 \\
    \rowtrainingbatchesfinal & 2200 & 200 & 10000 & 1700 \\
    \rowbatchesbtwevaluationsfinal & 22 & 2 & 100 & 17 \\[0.5em] 
    \rowrunssearch & 10 & 10 & 10 & 10 \\
    \rowtrainingbatchessearch & 2500 & 800 & 10000 & 3500 \\
    \rowbatchesbtwevaluationssearch & 25 & 8 & 100 & 35 \\[0.5em] 
    \rowpolicylearningrate & $\mathcal{R}_1$ & $\mathcal{R}_1$ & $\mathcal{R}_1$ & $\mathcal{R}_1$ \\
    \rowbaselinelearningrate & $\mathcal{R}_1$ & $\mathcal{R}_1$ & $\mathcal{R}_1$ & $\mathcal{R}_1$ \\[0.5em] 
    \rowepisodesperevaluation & 256 & 256 & 256 & 256 \\
    \rowactivegoals & $\infty$ & $\infty$ & $\infty$ & $\infty$ \\
    \bottomrule
  \end{tabular}
  }
\end{sidewaystable}

\begin{sidewaystable}[p]
  \centering
  \caption{Experimental settings for the Ms. Pac-man and FetchPush environments}
  \label{app:tb:mspacmanfetchpush}
  \begin{tabular}{l cc cc}
    \toprule
     & \multicolumn{2}{c}{Ms. Pac-man} & \multicolumn{2}{c}{FetchPush}\\
    \cmidrule(lr){2-3} \cmidrule(lr){4-5}
    & Batch size $2$ & Batch size $16$  & Batch size $2$ & Batch size $16$\\
    \midrule
    \rowrunsfinal & 10 & 10 & 10 & 10 \\
    \rowtrainingbatchesfinal & 40000 & 12500 & 40000 & 12500 \\
    \rowbatchesbtwevaluationsfinal & 400 & 125 & 400 & 125 \\[0.5em] 
    \rowrunssearch & 5 & 5 & 5 & 5 \\
    \rowtrainingbatchessearch & 40000 & 12000 & 40000 & 15000 \\
    \rowbatchesbtwevaluationssearch & 800 & 120 & 800 & 300 \\[0.5em] 
    \rowpolicylearningrate & $\mathcal{R}_2$ & $\mathcal{R}_2$ & $\mathcal{R}_2$ & $\mathcal{R}_2$ \\ [0.5em] 
    \rowepisodesperevaluation & 240 & 240 & 512 & 512 \\
    \rowactivegoals & $\infty$ & 3 & $\infty$ & 3 \\
    \bottomrule
  \end{tabular}
\end{sidewaystable}

\newpage
\subsection{Results}
\label{app:experiments:results}

This appendix contains unabridged experimental results. Appendices \ref{app:experiments:results:sensitivitybs2} and \ref{app:experiments:results:sensitivitybs16} present hyperparameter sensitivity plots for every combination of environment and batch size. A hyperparameter sensitivity plot displays the average performance achieved by each hyperparameter setting (sorted from best to worst along the horizontal axis). Appendices \ref{app:experiments:results:lcbs2} and \ref{app:experiments:results:lcbs16} present learning curves for every combination of environment and batch size. Appendix \ref{app:experiments:results:ap} presents average performance results. Appendix \ref{app:experiments:lra} presents an empirical study of likelihood ratios. Appendix \ref{app:experiments:her} presents an empirical comparison with hindsight experience replay \citep{andrychowicz2017hindsight}.

\subsubsection{Hyperparameter sensitivity plots (batch size 2)}
\label{app:experiments:results:sensitivitybs2}

\begin{figure}[h!]
    \centering
    \begin{floatrow}
      \ffigbox{\caption{Bit flipping ($k=8$).}}{%
        \includegraphics[width=1.0\linewidth]{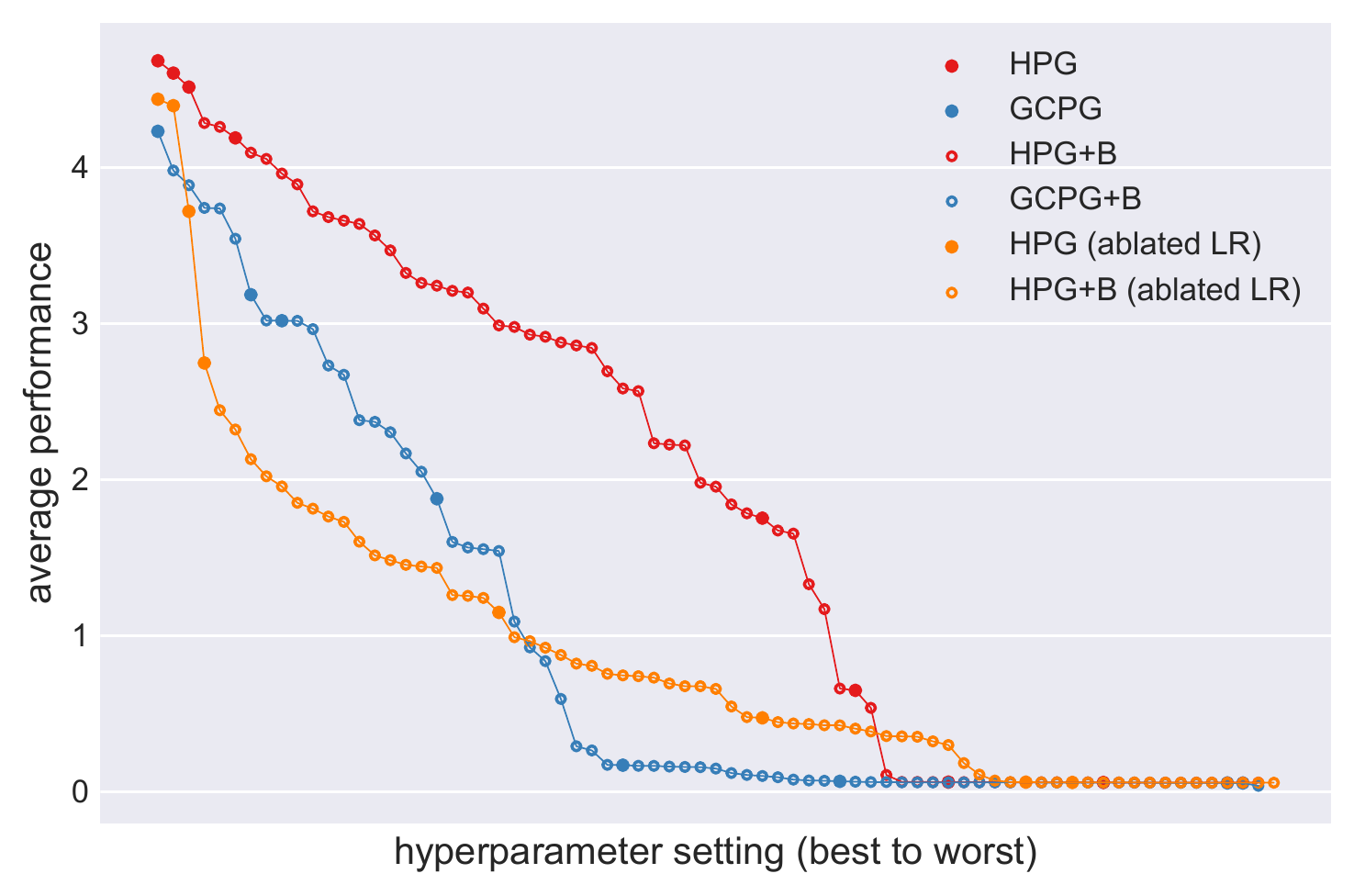}
      }
      \ffigbox{\caption{Bit flipping ($k=16$).}}{%
        \includegraphics[width=1.0\linewidth]{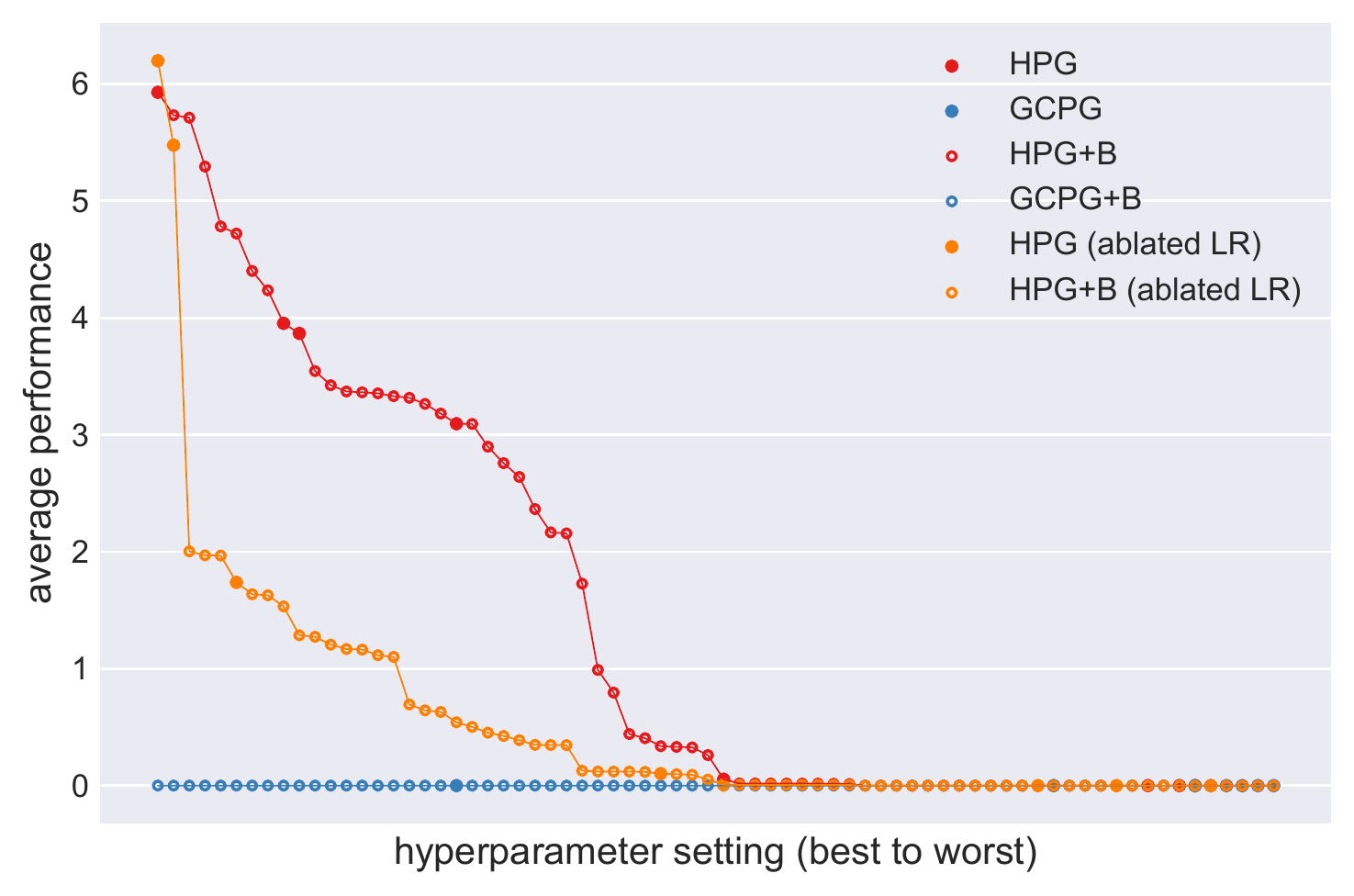}
      }
    \end{floatrow}
    \vspace{0.7cm}
    \begin{floatrow}
      \ffigbox{\caption{Empty room.}}{%
        \includegraphics[width=1.0\linewidth]{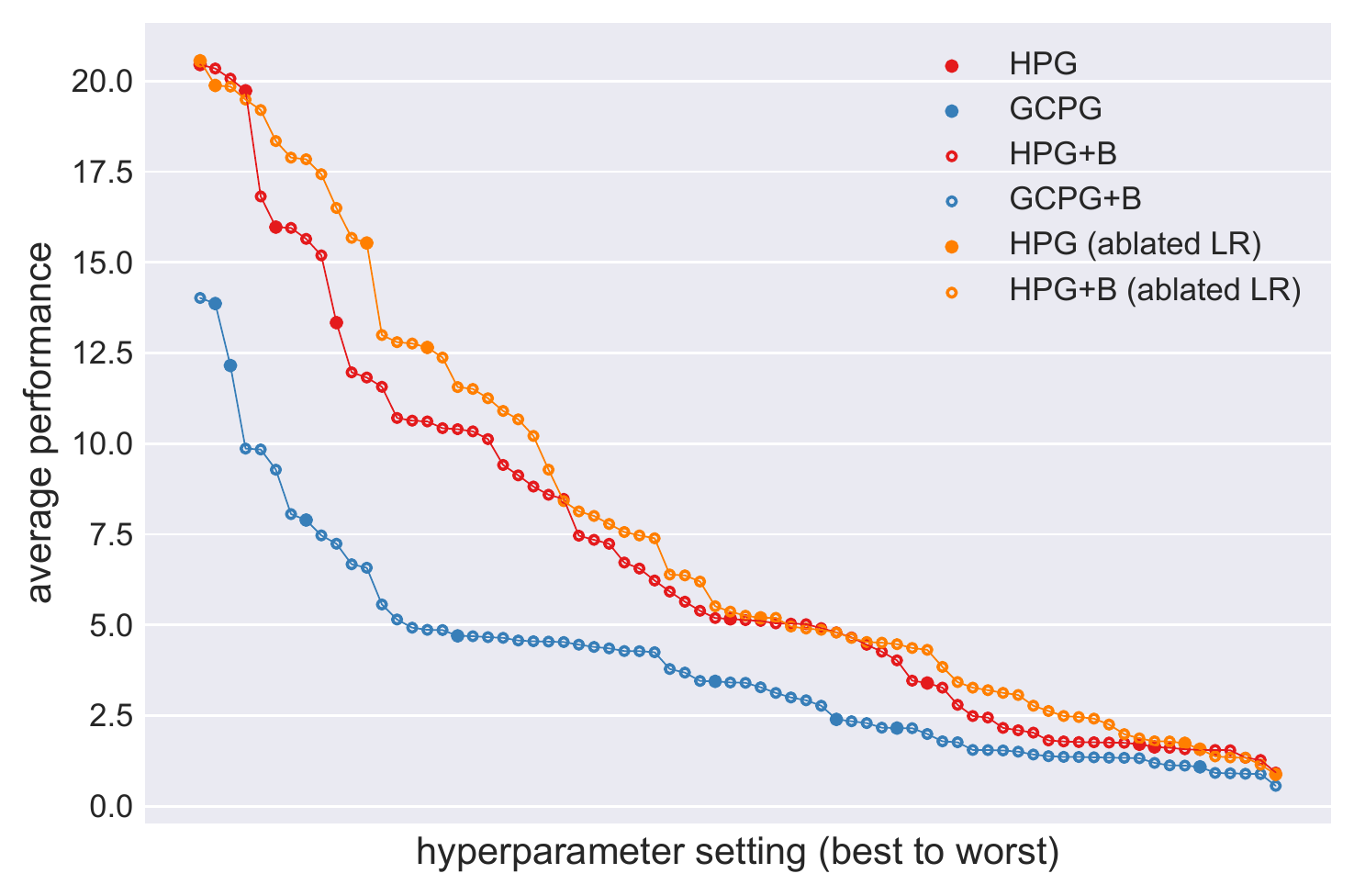}
      }
      \ffigbox{\caption{Four rooms.}}{%
        \includegraphics[width=1.0\linewidth]{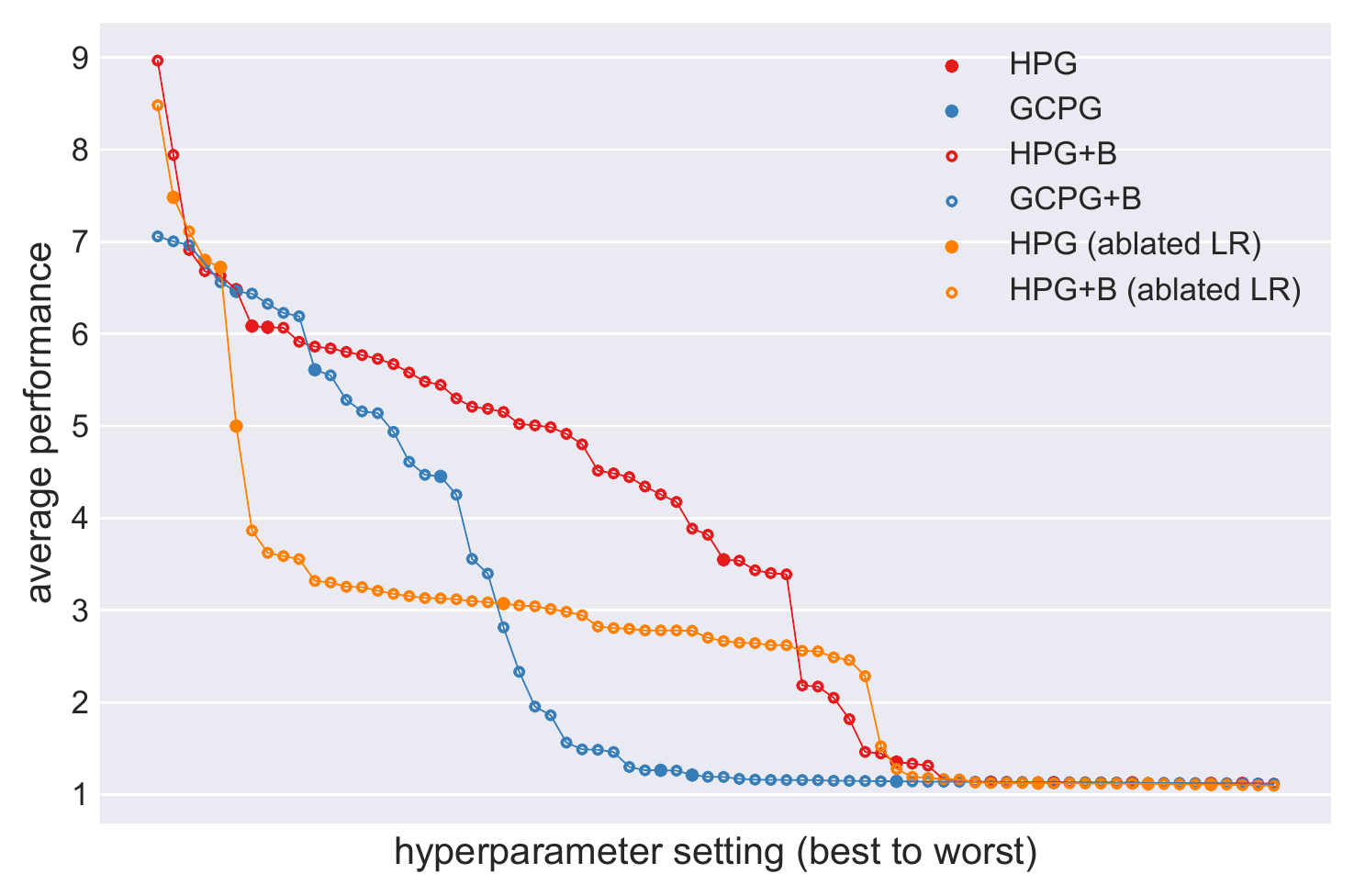}
      }
    \end{floatrow}
    \vspace{0.7cm}
    \begin{floatrow}
      \ffigbox{\caption{Ms. Pac-man.}}{%
        \includegraphics[width=1.0\linewidth]{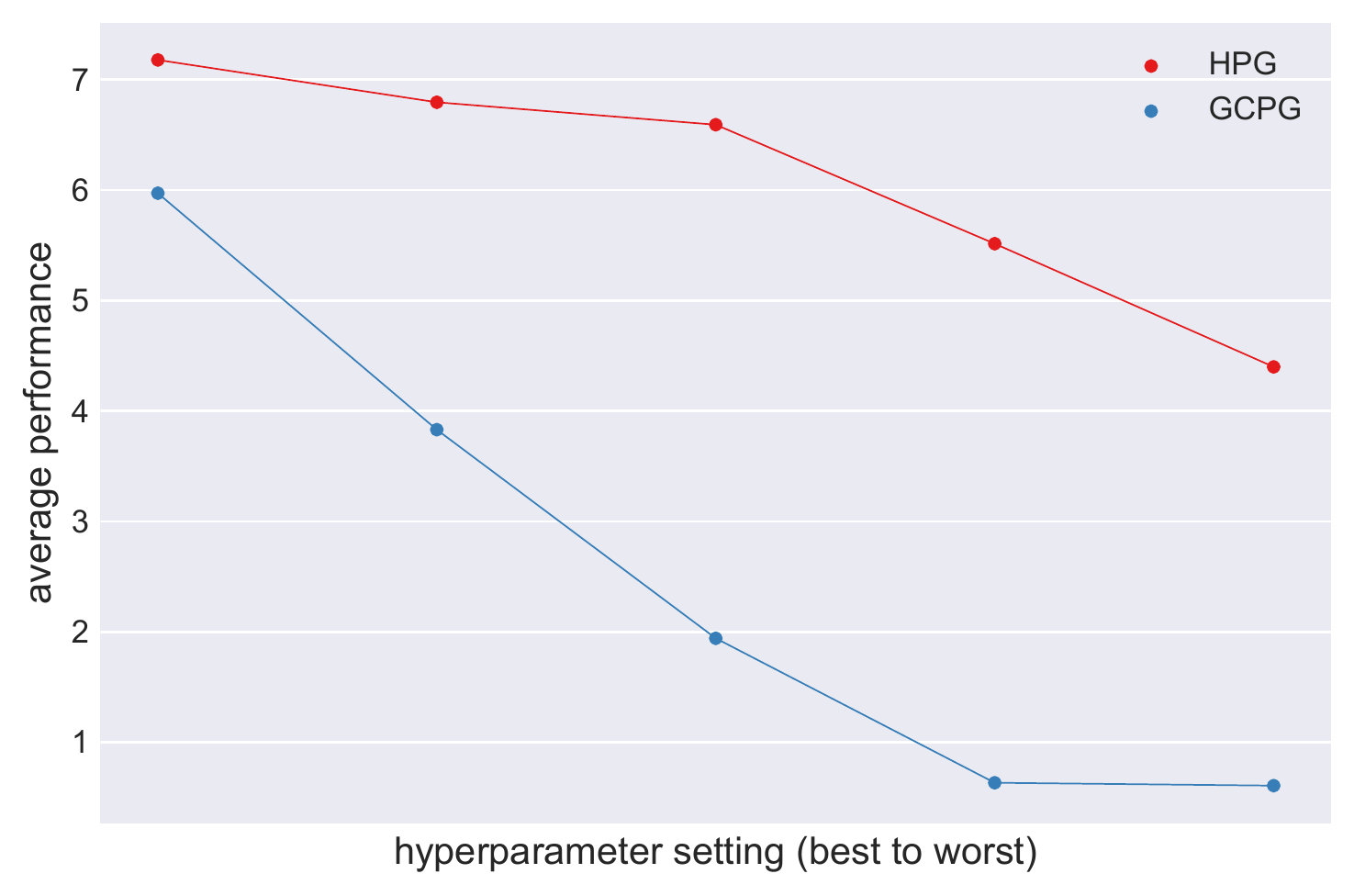}
      }
      \ffigbox{\caption{FetchPush.}}{%
        \includegraphics[width=1.0\linewidth]{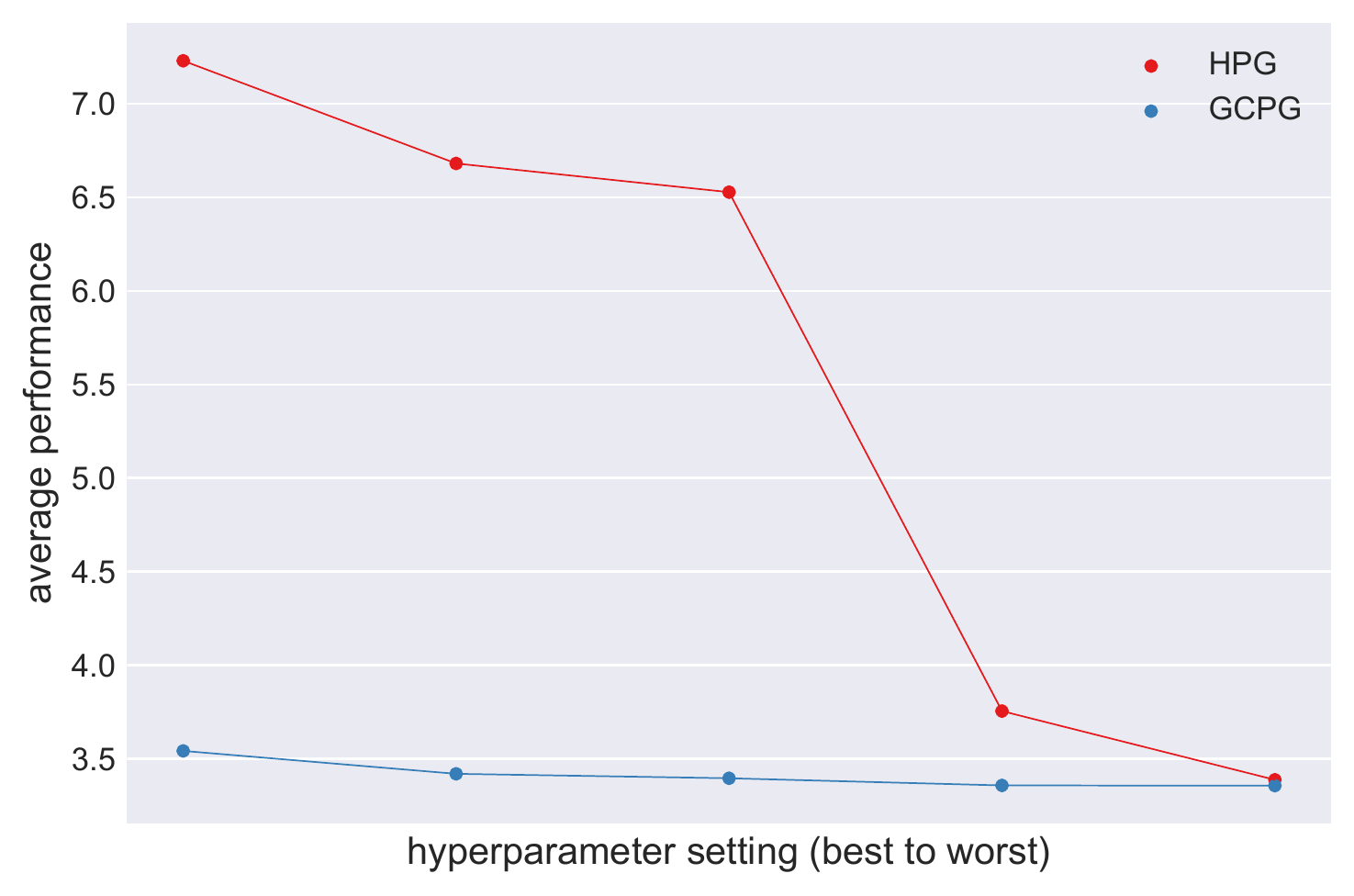}
      }
    \end{floatrow}

\end{figure}

\newpage
\subsubsection{Hyperparameter sensitivity plots (batch size 16)}
\label{app:experiments:results:sensitivitybs16}

\begin{figure}[h!]
    \centering
    \begin{floatrow}
      \ffigbox{\caption{Bit flipping ($k=8$).}}{%
        \includegraphics[width=1.0\linewidth]{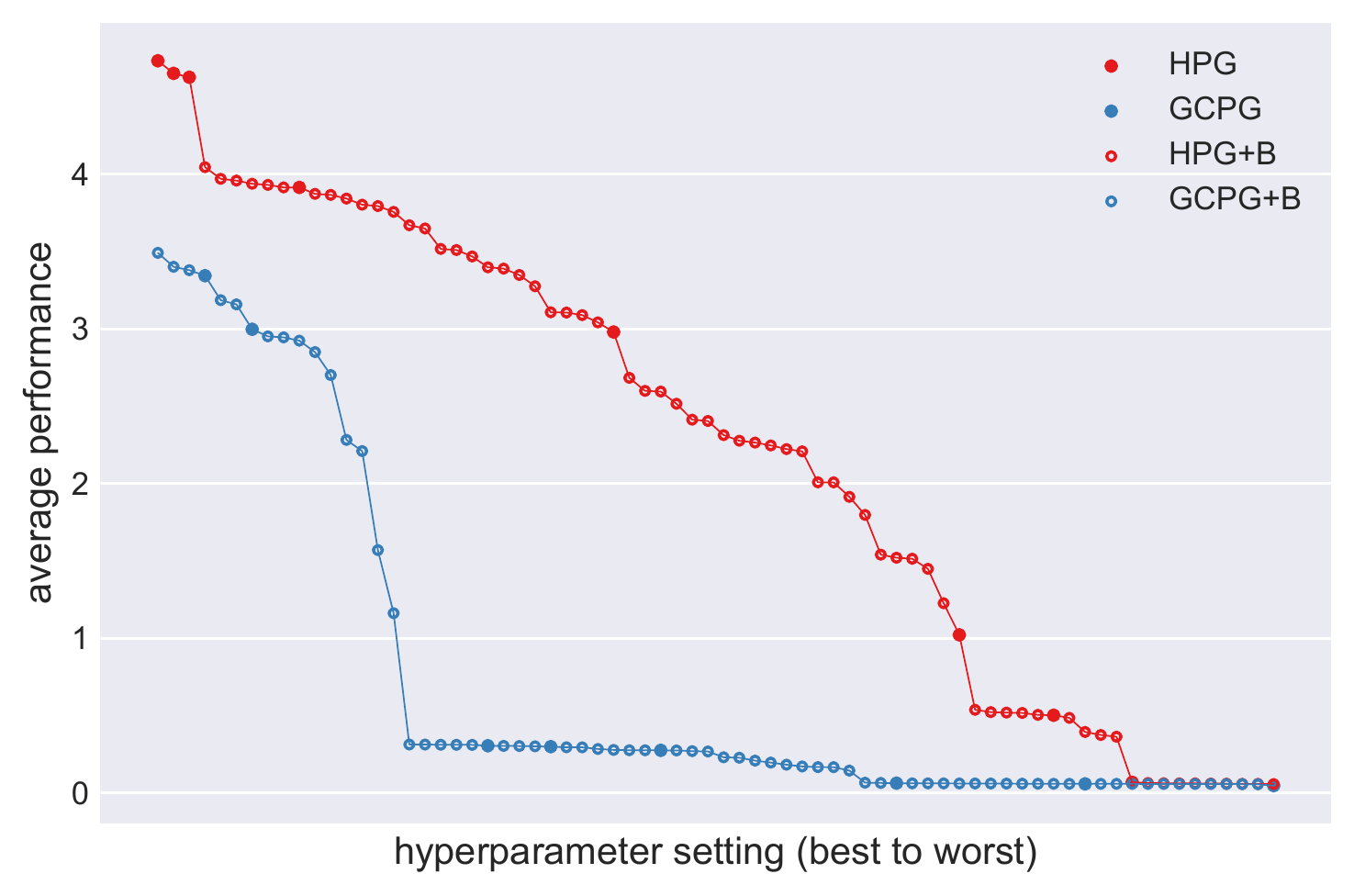}
      }
      \ffigbox{\caption{Bit flipping ($k=16$).}}{%
        \includegraphics[width=1.0\linewidth]{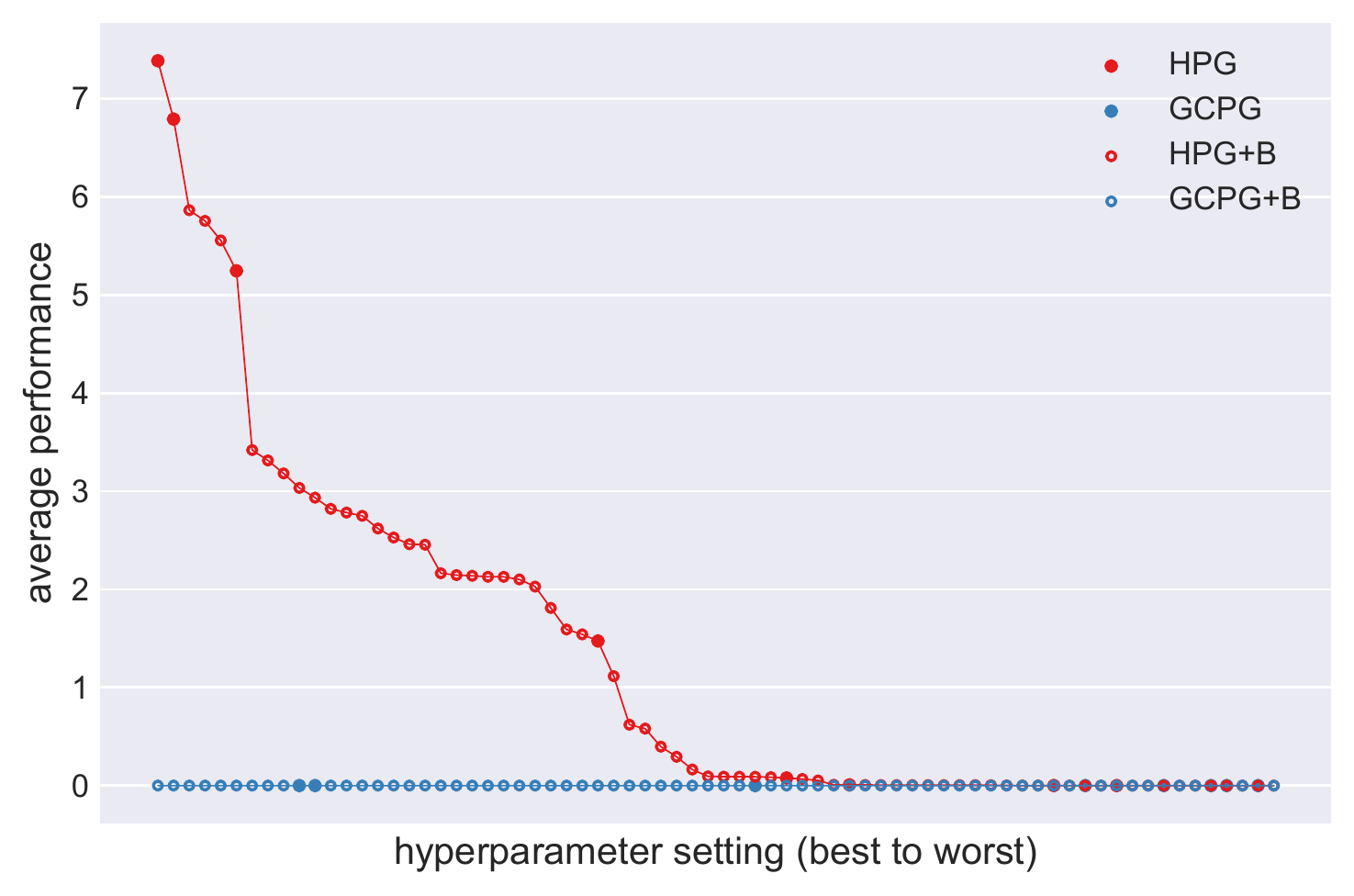}
      }
    \end{floatrow}
    \vspace{0.7cm}
    \begin{floatrow}
      \ffigbox{\caption{Empty room.}}{%
        \includegraphics[width=1.0\linewidth]{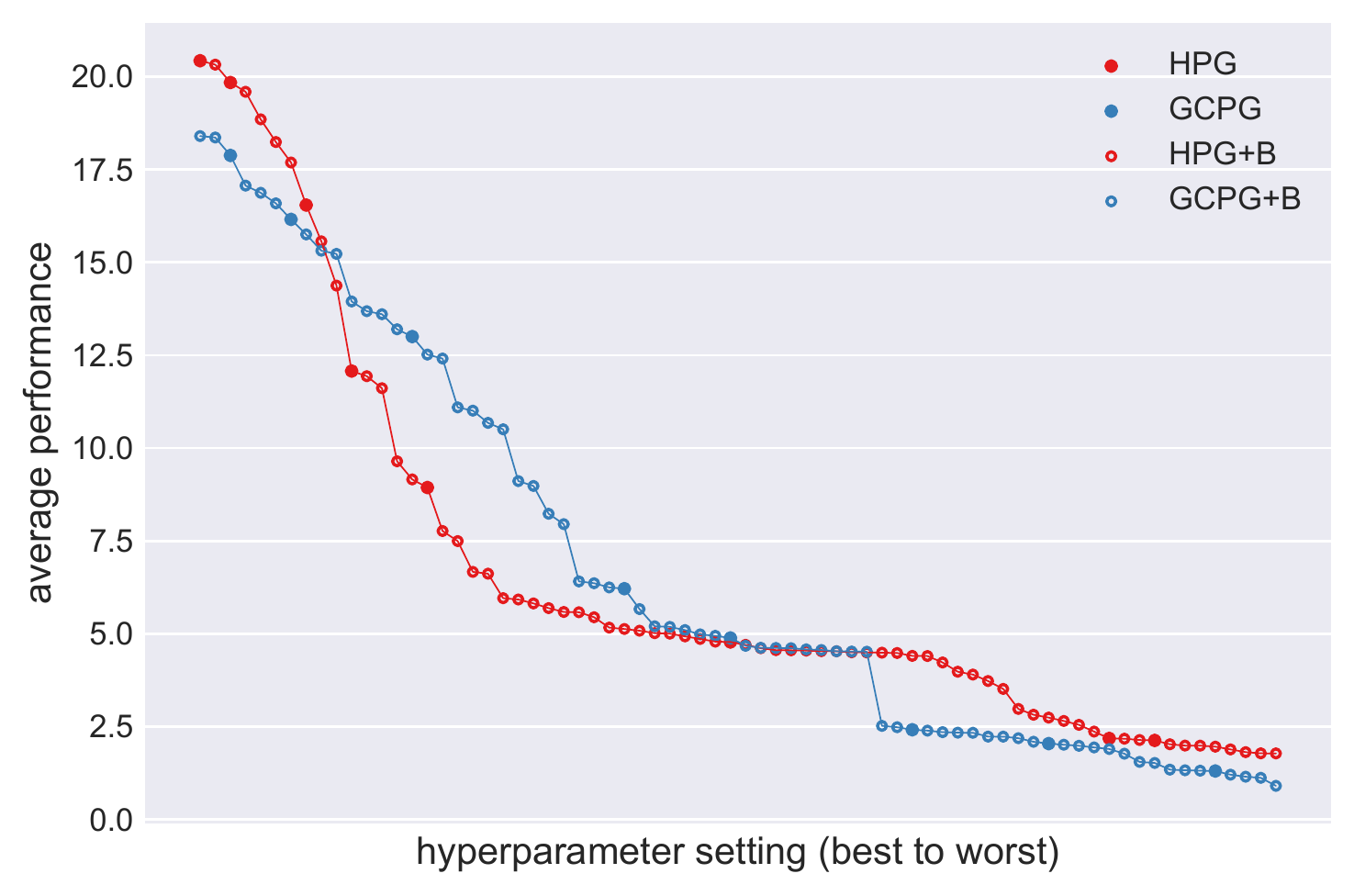}
      }
      \ffigbox{\caption{Four rooms.}}{%
        \includegraphics[width=1.0\linewidth]{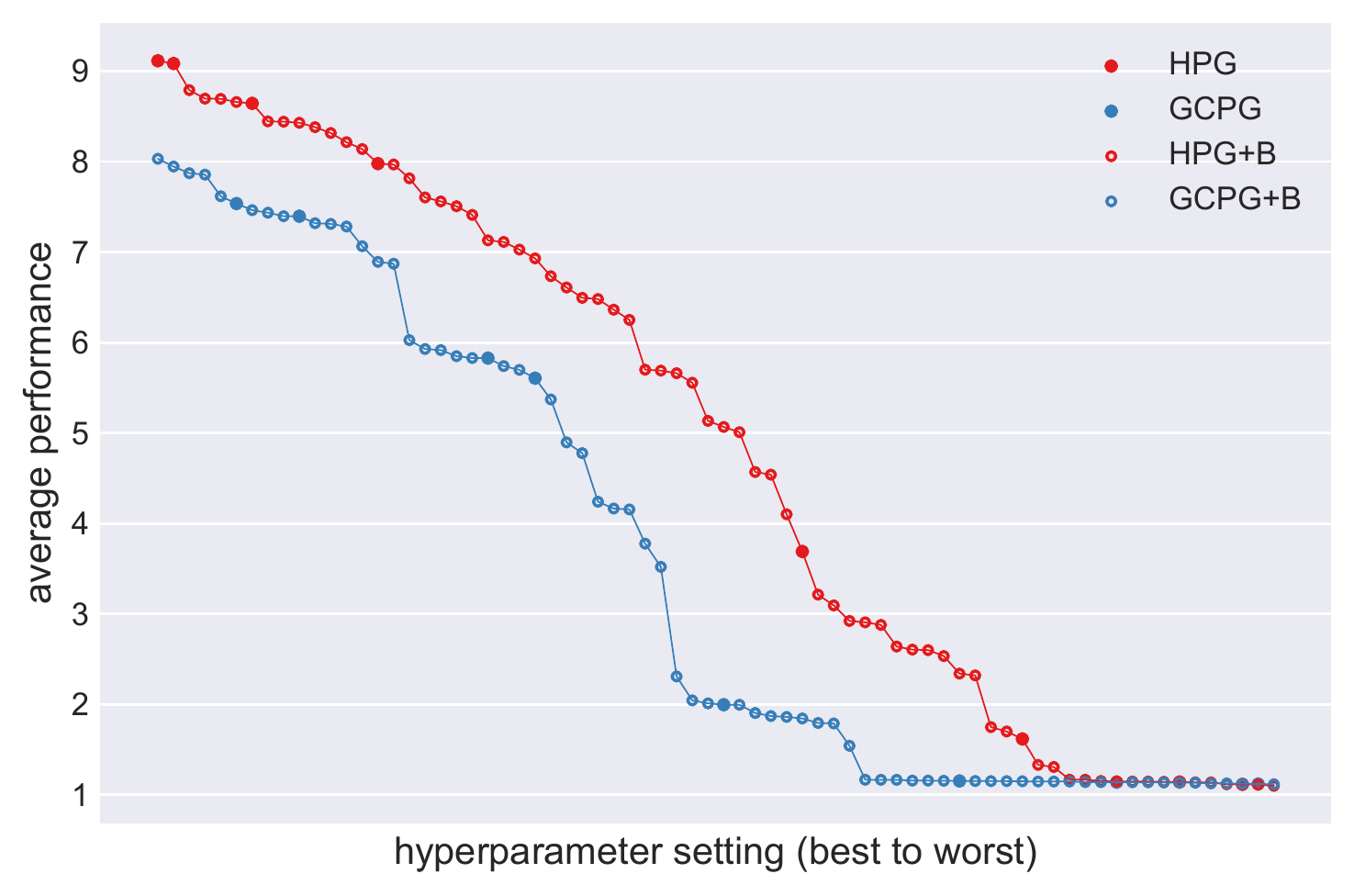}
      }
    \end{floatrow}
    \vspace{0.7cm}
    \begin{floatrow}
      \ffigbox{\caption{Ms. Pac-man.}}{%
        \includegraphics[width=1.0\linewidth]{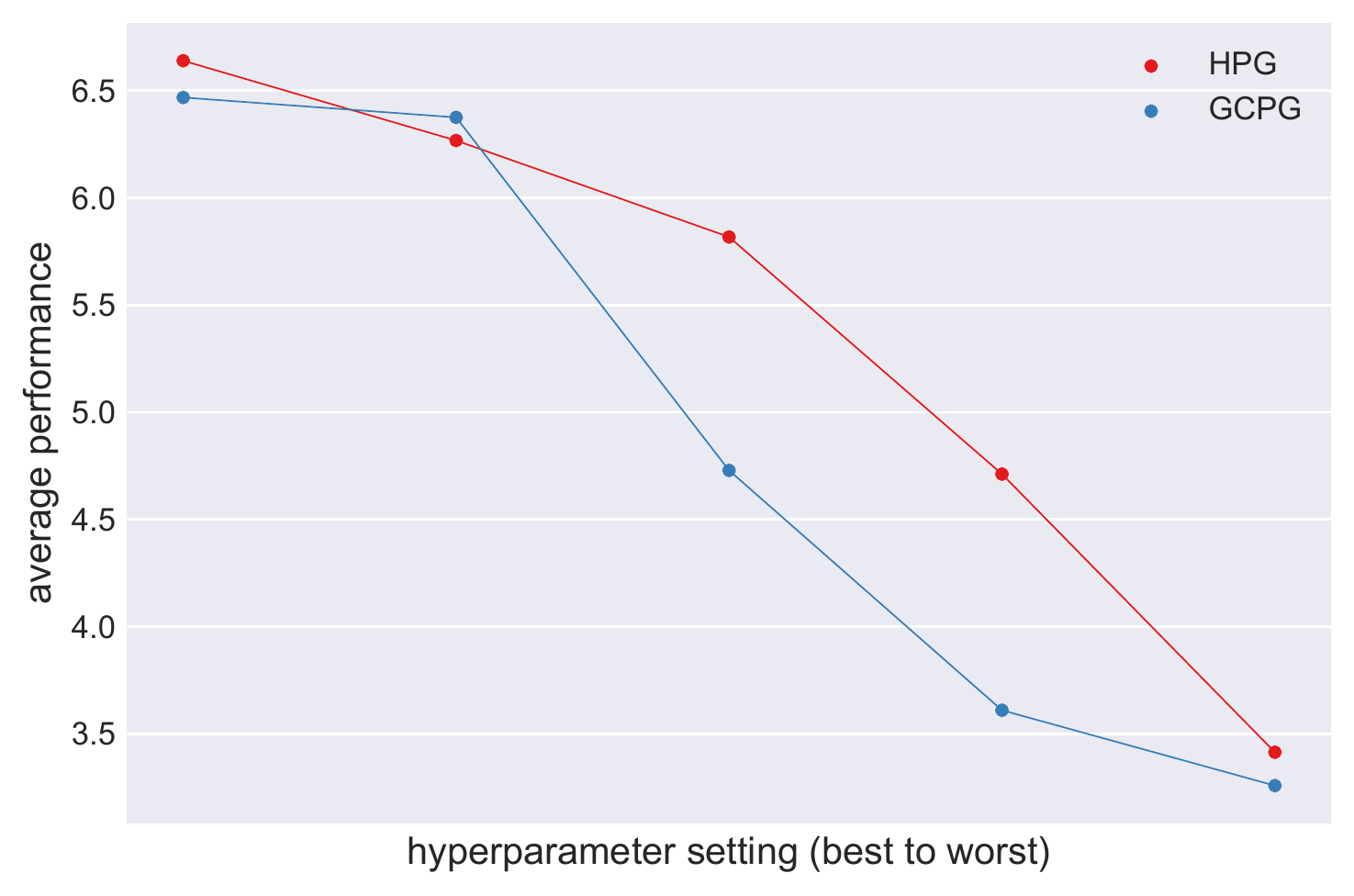}
      }
      \ffigbox{\caption{FetchPush.}}{%
        \includegraphics[width=1.0\linewidth]{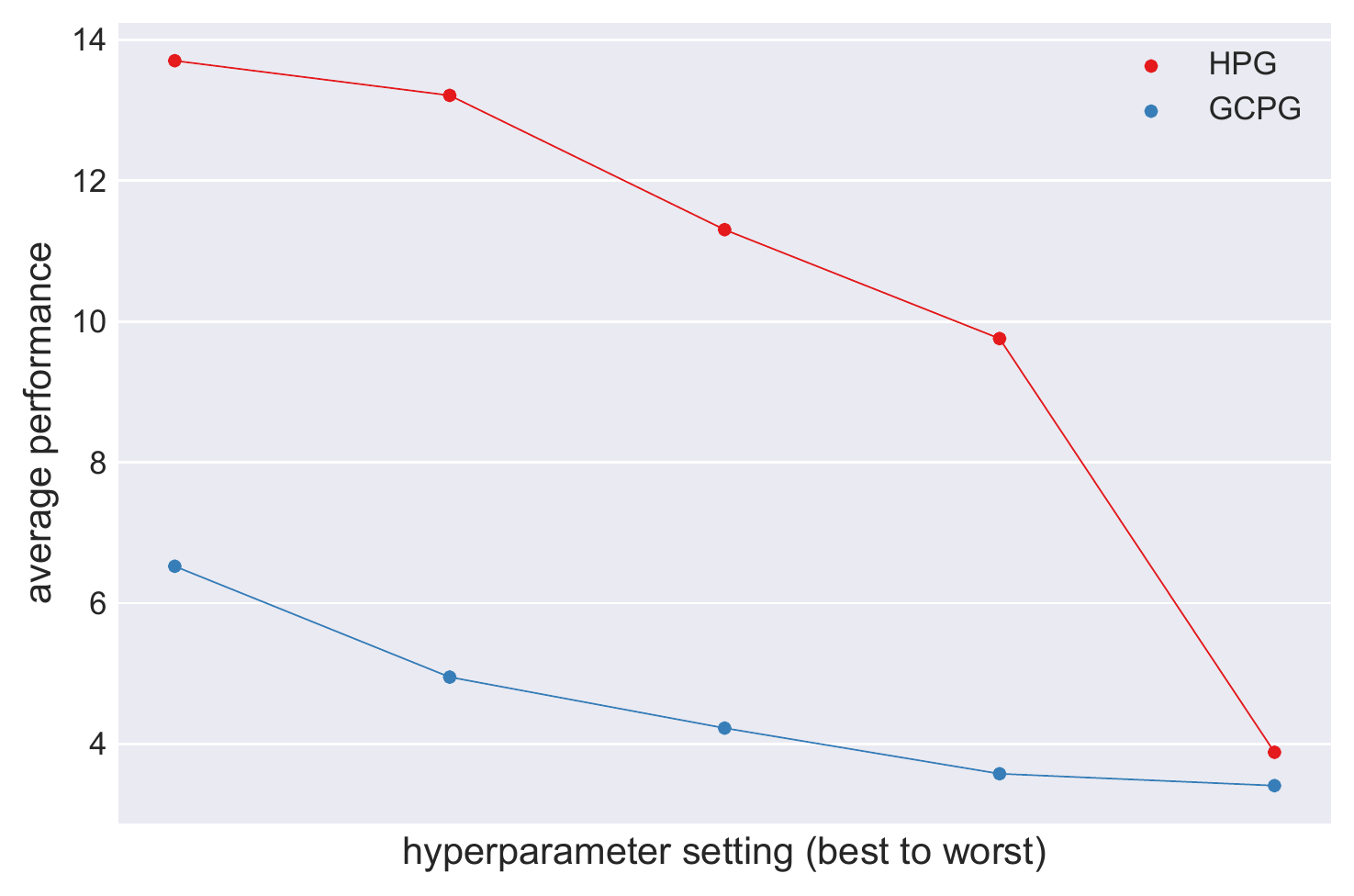}
      }
    \end{floatrow}

\end{figure}

\newpage

\subsubsection{Learning curves (batch size 2)}
\label{app:experiments:results:lcbs2}

\begin{figure}[h!]
    \centering
    \begin{floatrow}
      \ffigbox{\caption{Bit flipping ($k=8$).}}{%
        \includegraphics[width=1.0\linewidth]{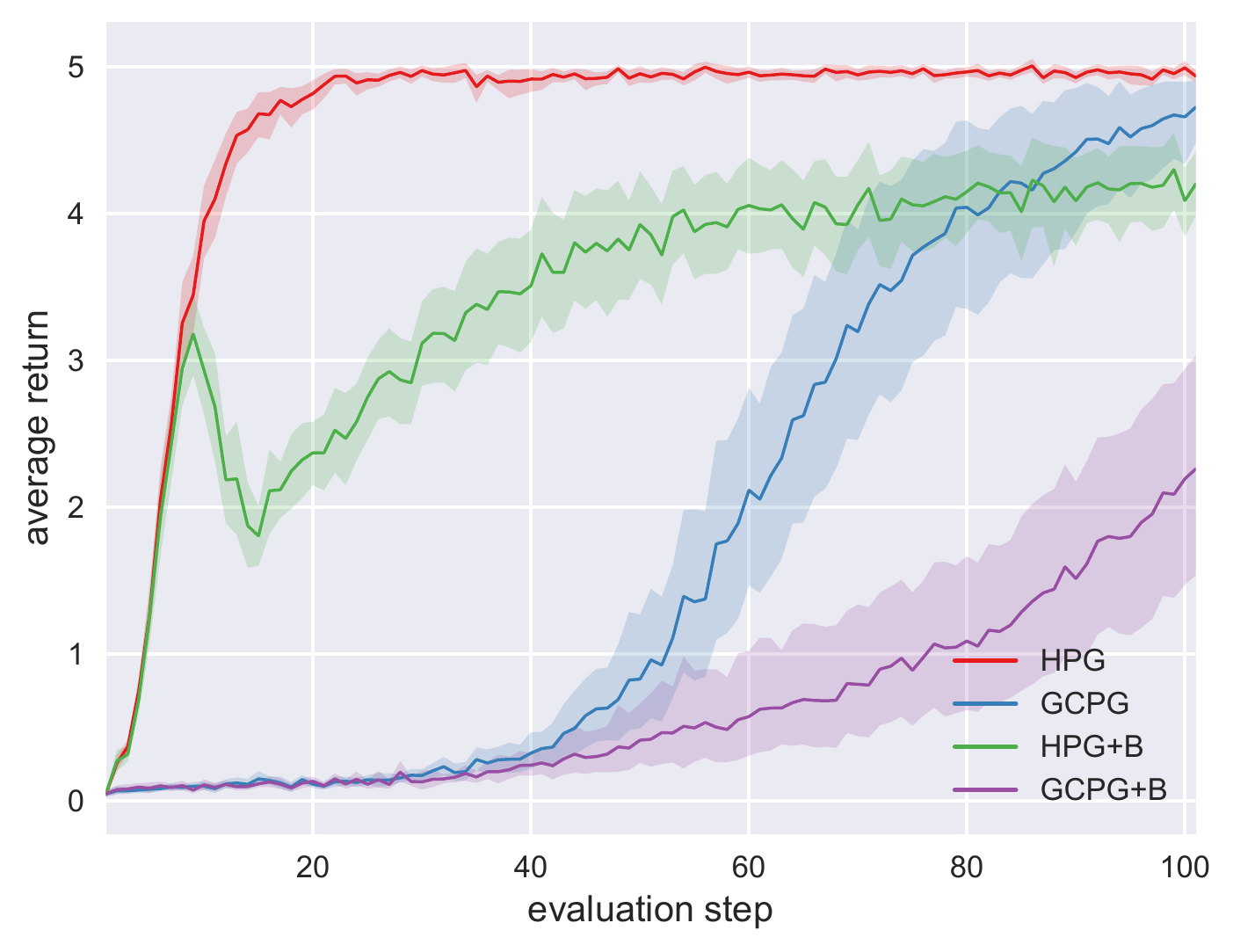}
      }
      \ffigbox{\caption{Bit flipping ($k=16$).}}{%
        \includegraphics[width=1.0\linewidth]{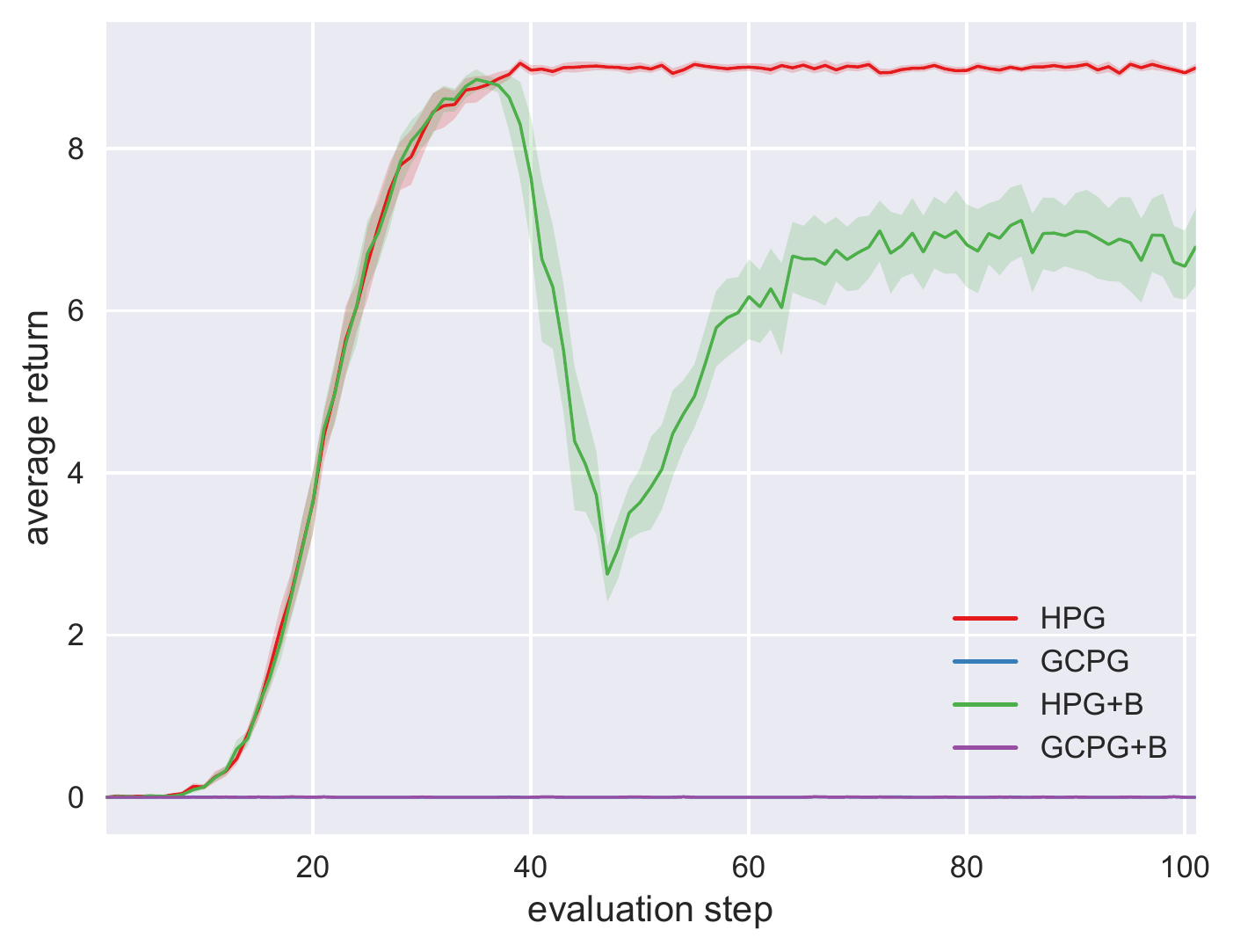}
      }
    \end{floatrow}
    \vspace{0.7cm}
    \begin{floatrow}
      \ffigbox{\caption{Empty room.}}{%
        \includegraphics[width=1.0\linewidth]{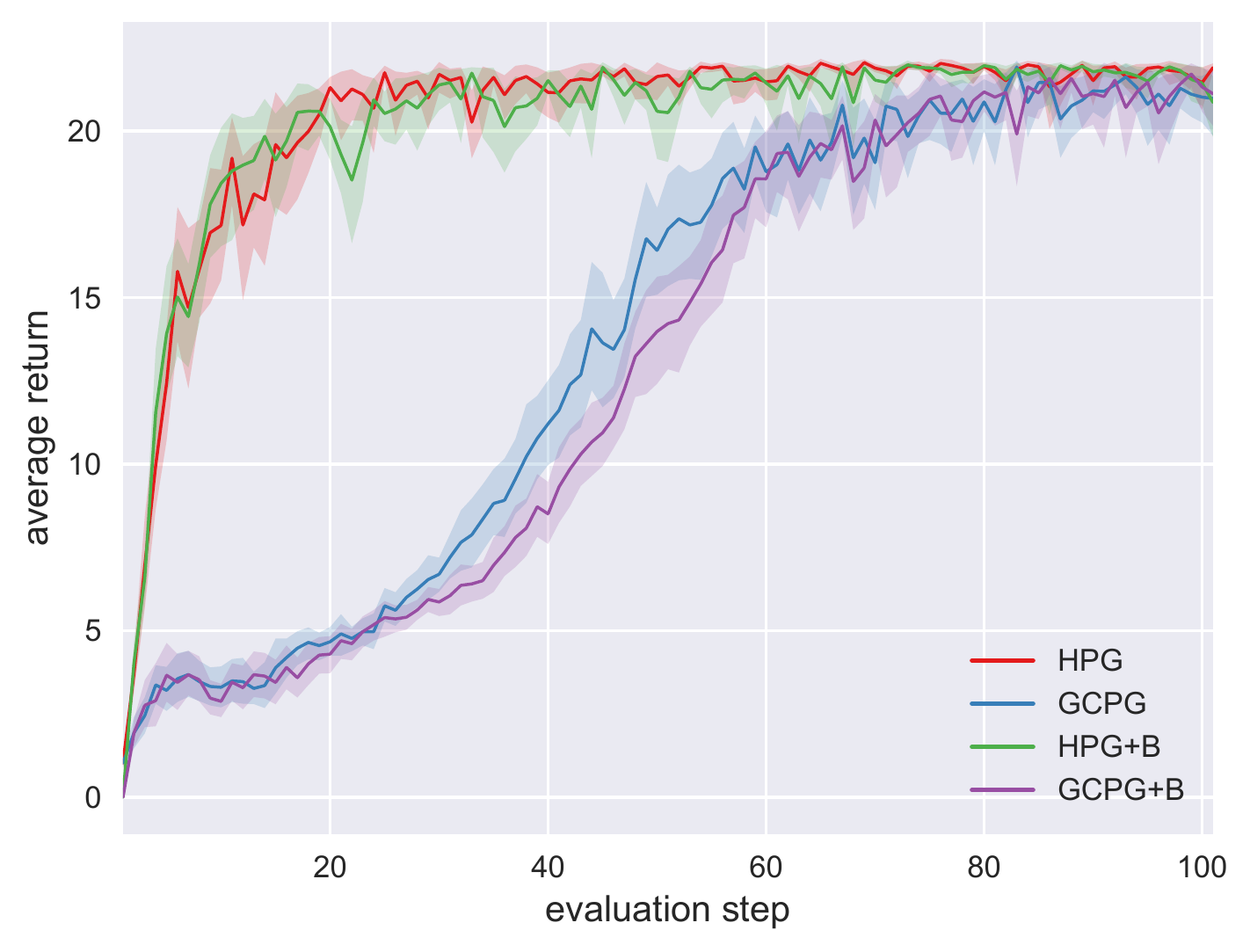}
      }
      \ffigbox{\caption{Four rooms.}}{%
        \includegraphics[width=1.0\linewidth]{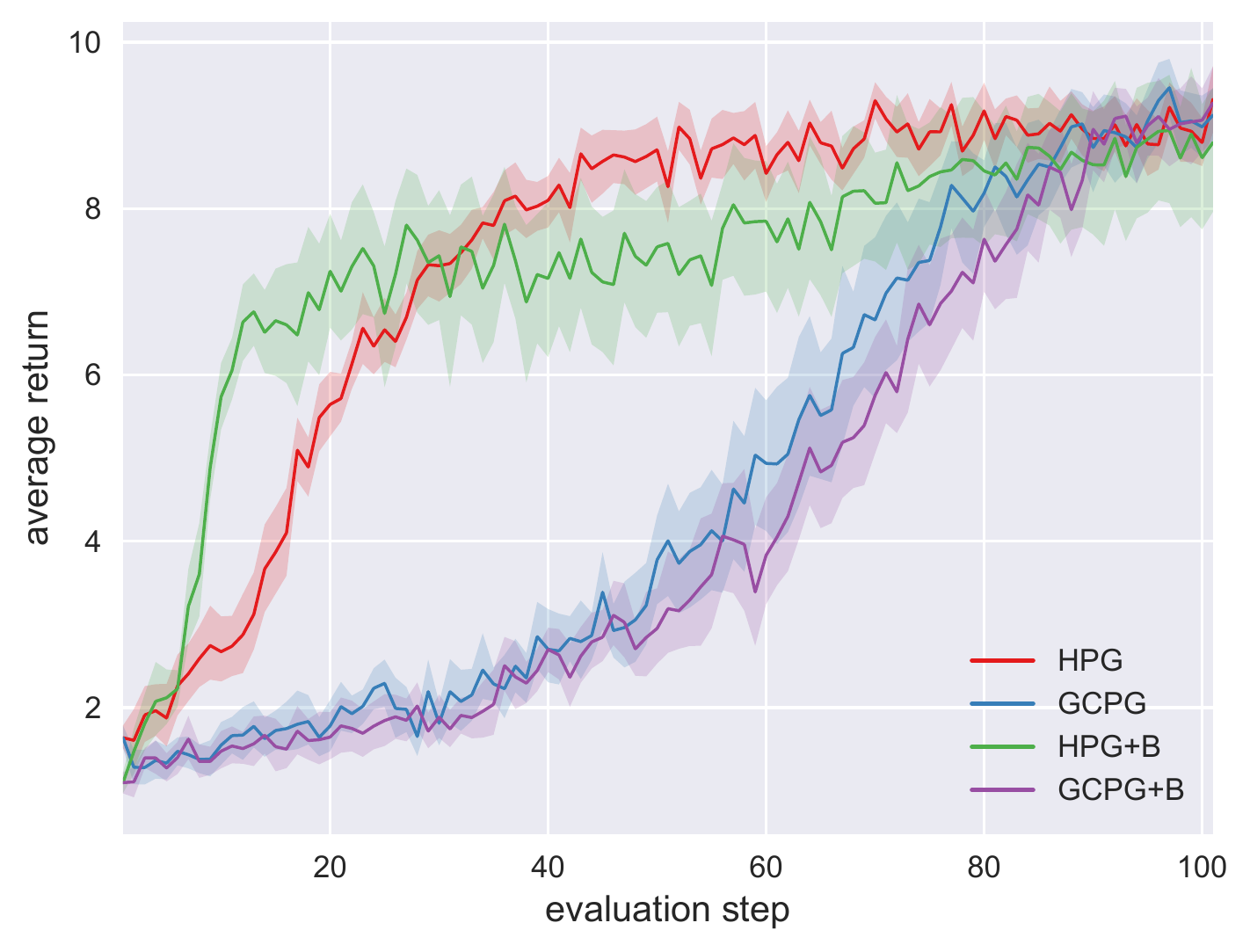}
      }
    \end{floatrow}
    \vspace{0.7cm}
    \begin{floatrow}
      \ffigbox{\caption{Ms. Pac-man.}}{%
        \includegraphics[width=1.0\linewidth]{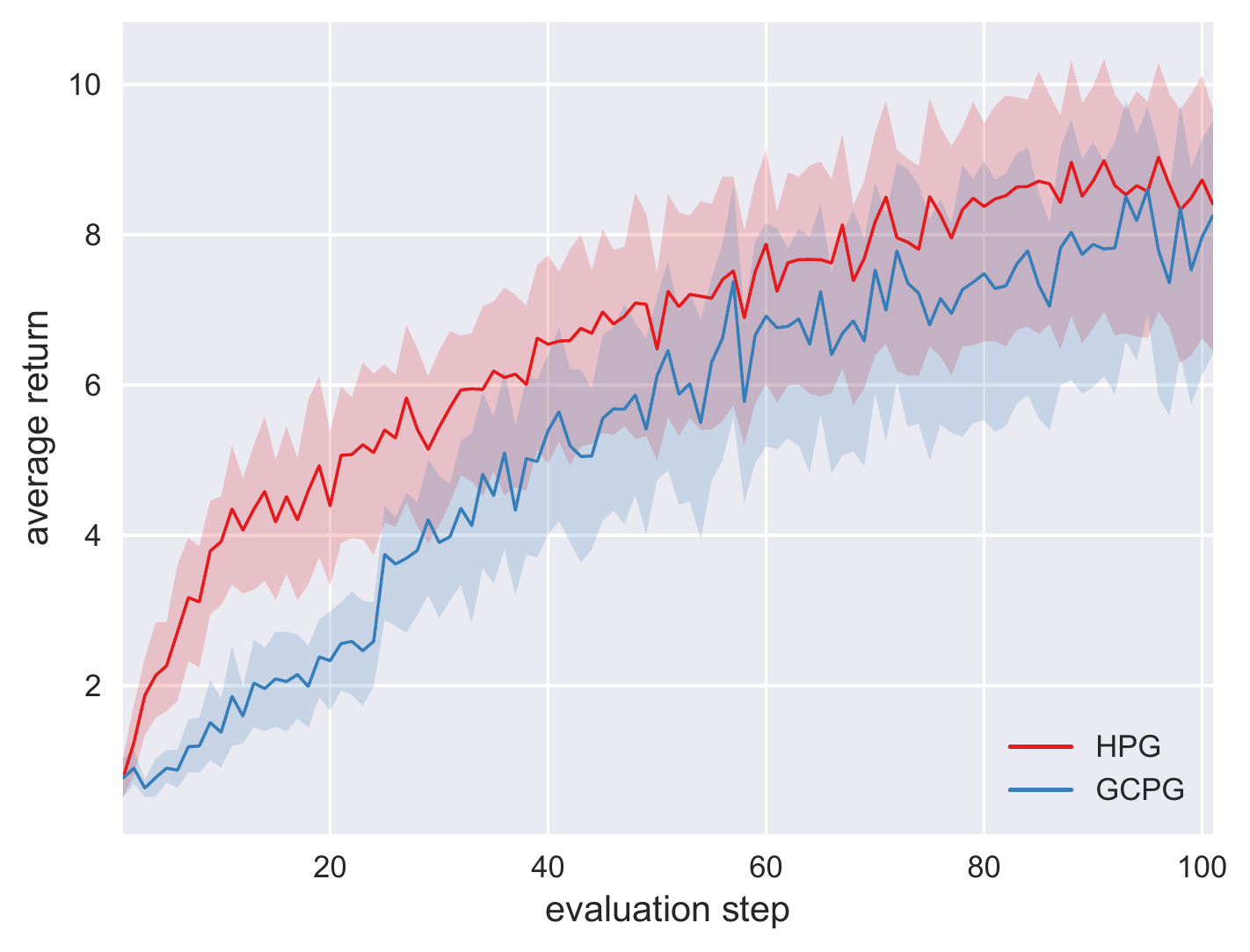}
      }
      \ffigbox{\caption{FetchPush.}}{%
        \includegraphics[width=1.0\linewidth]{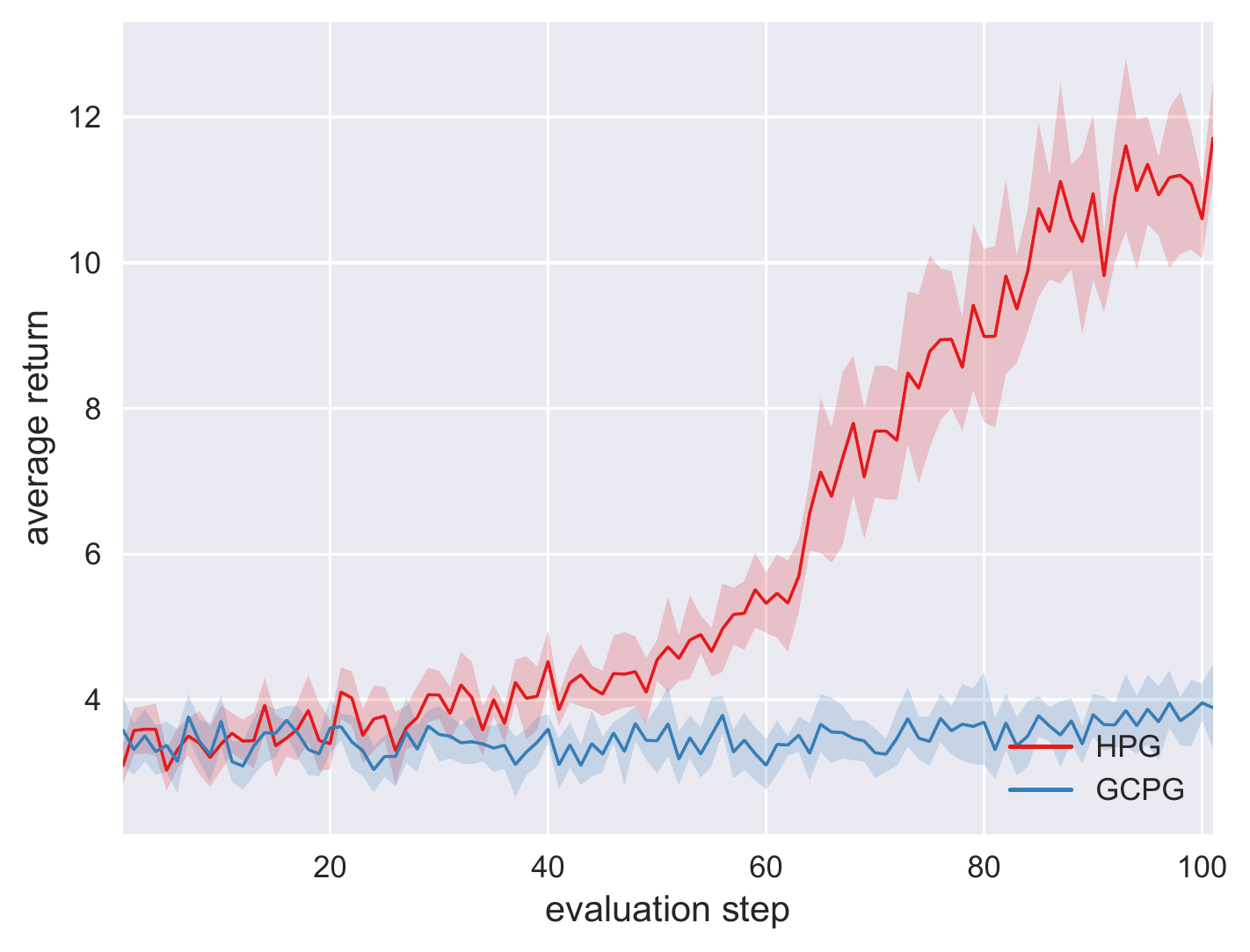}
      }
    \end{floatrow}

\end{figure}

\newpage

\subsubsection{Learning curves (batch size 16)}
\label{app:experiments:results:lcbs16}

\begin{figure}[h!]
    \centering
    \begin{floatrow}
      \ffigbox{\caption{Bit flipping ($k=8$).}}{%
        \includegraphics[width=1.0\linewidth]{experiments/figures/batch_size_16/conclusive/ereturns_flipbit_8.pdf}
      }
      \ffigbox{\caption{Bit flipping ($k=16$).}}{%
        \includegraphics[width=1.0\linewidth]{experiments/figures/batch_size_16/conclusive/ereturns_flipbit_16.pdf}
      }
    \end{floatrow}
    \vspace{0.7cm}
    \begin{floatrow}
      \ffigbox{\caption{Empty room.}}{%
        \includegraphics[width=1.0\linewidth]{experiments/figures/batch_size_16/conclusive/ereturns_empty_maze_11_11.pdf}
      }
      \ffigbox{\caption{Four rooms.}}{%
        \includegraphics[width=1.0\linewidth]{experiments/figures/batch_size_16/conclusive/ereturns_four_rooms_maze.pdf}
      }
    \end{floatrow}
    \vspace{0.7cm}
    \begin{floatrow}
      \ffigbox{\caption{Ms. Pac-man.}}{%
        \includegraphics[width=1.0\linewidth]{experiments/figures/batch_size_16/conclusive/ereturns_mspacman.pdf}
      }
      \ffigbox{\caption{FetchPush.}}{%
        \includegraphics[width=1.0\linewidth]{experiments/figures/batch_size_16/conclusive/ereturns_fetchpush.pdf}
      }
    \end{floatrow}

\end{figure}

\subsubsection{Average performance results}
\label{app:experiments:results:ap}

Table \ref{app:tab:unabridgedresults} presents average performance results for every combination of environment and batch size.

\newcommand{\rowhpg}[0]{HPG}
\newcommand{\rowgcpg}[0]{GCPG}
\newcommand{\rowhpgb}[0]{HPG+B}
\newcommand{\rowgcpgb}[0]{GCPG+B}

\begin{table}[p]
\centering
\caption{Definitive average performance results}
\label{app:tab:unabridgedresults}
\subfloat{
  \begin{tabular}{l cc cc cc cc}
    \toprule
    & \multicolumn{2}{c}{Bit flipping (8 bits)} & \multicolumn{2}{c}{Bit flipping (16 bits)} \\
    \cmidrule(lr){2-3} \cmidrule(lr){4-5} \cmidrule(lr){6-7} \cmidrule(lr){8-9}  
    & Batch size $2$ & Batch size $16$ & Batch size $2$ & Batch size $16$ \\
    \midrule
    \rowhpg  &  $\mathbf{4.60 \pm 0.06 }$ & $\mathbf{4.72 \pm 0.02}$ & $\mathbf{7.11 \pm 0.12 }$ & $\mathbf{7.39 \pm 0.24}$ \\
    \rowgcpg  &  $1.81 \pm 0.61$ & $3.44 \pm 0.30$ & $0.00 \pm 0.00$ & $0.00 \pm 0.00$ \\
    \rowhpgb &  $3.40 \pm 0.46$ & $4.04 \pm 0.10$ & $5.35 \pm 0.40$ & $6.09 \pm 0.29 $ \\
    \rowgcpgb  &  $0.64 \pm 0.58$ & $3.31 \pm 0.58$ & $0.00 \pm 0.00$ & $0.00 \pm 0.00$  \\
    \bottomrule
  \end{tabular}
}
  \quad
\subfloat{
  \begin{tabular}{l cc cc}
    \toprule
    & \multicolumn{2}{c}{Empty room} & \multicolumn{2}{c}{Four rooms} \\
    \cmidrule(lr){2-3} \cmidrule(lr){4-5} 
    & Batch size $2$ & Batch size $16$ & Batch size $2$ & Batch size $16$ \\
    \midrule
    \rowhpg  &  $\mathbf{20.22 \pm 0.37}$ & $16.83 \pm 0.84$ & $\mathbf{7.38 \pm 0.16}$ & $\mathbf{8.75 \pm 0.12}$ \\
    \rowgcpg   & $12.54 \pm 1.01$ & $10.96 \pm 1.24$ & $4.64 \pm 0.57$ & $6.12 \pm 0.54$  \\
    \rowhpgb & $19.90 \pm 0.29$ & $\mathbf{ 17.12 \pm 0.44}$ & $7.28 \pm 1.28$ & $8.08 \pm 0.18$ \\
    \rowgcpgb  & $12.69 \pm 1.16$ & $10.68 \pm 1.36$ & $4.26 \pm 0.55$ & $6.61 \pm 0.49$ \\ 
    \bottomrule
  \end{tabular}
}  

  \quad
\subfloat{
  \begin{tabular}{l cc cc}
    \toprule
    & \multicolumn{2}{c}{Ms. Pac-man} & \multicolumn{2}{c}{FetchPush} \\
    \cmidrule(lr){2-3} \cmidrule(lr){4-5} 
    & Batch size $2$ & Batch size $16$ & Batch size $2$ & Batch size $16$ \\
    \midrule
    \rowhpg  & $\mathbf{6.58 \pm 1.96}$ & $6.80 \pm 0.64$ & $\mathbf{6.10 \pm 0.34}$ & $\mathbf{13.15 \pm 0.40}$\\
    \rowgcpg   & $5.29 \pm 1.67$ & $\mathbf{6.92 \pm 0.58}$ & $3.48 \pm 0.15$ & $4.42 \pm 0.28$ \\
    \bottomrule
  \end{tabular}
}  

\end{table}

\cleardoublepage
\subsubsection{Likelihood ratio plots}
\label{app:experiments:lra}

This appendix presents a study of the \emph{active} (normalized) likelihood ratios computed by agents during training. A likelihood ratio is considered active if and only if it multiplies a non-zero reward (see Expression \ref{eq:wpdhpgestimator}). Note that only these likelihood ratios affect gradient estimates based on \hpg.

This study is conveyed through plots that encode the distribution of active likelihood ratios computed during training, individually for each time step within an episode. Each plot corresponds to an agent that employs \hpg and obtains the highest definitive average performance for a given environment (Figs. \ref{fig:lr:bf8}-\ref{fig:lr:fetchpush}). Note that the length of the largest bar for a given time step is fixed to aid visualization.

The most important insight provided by these plots is that likelihood ratios behave very differently across environments, even for equivalent time steps (for instance, compare bit flipping environments to grid world environments). In contrast, after the first time step, the behavior of likelihood ratios changes slowly across time steps within the same environment. In any case, alternative goals have a significant effect on gradient estimates, which agrees with the results presented in Section \ref{sec:experiments}.

\begin{figure}[h]
\centering
\caption{Bit flipping ($k=8$, batch size 16).}
\includegraphics[width=1.0\textwidth]{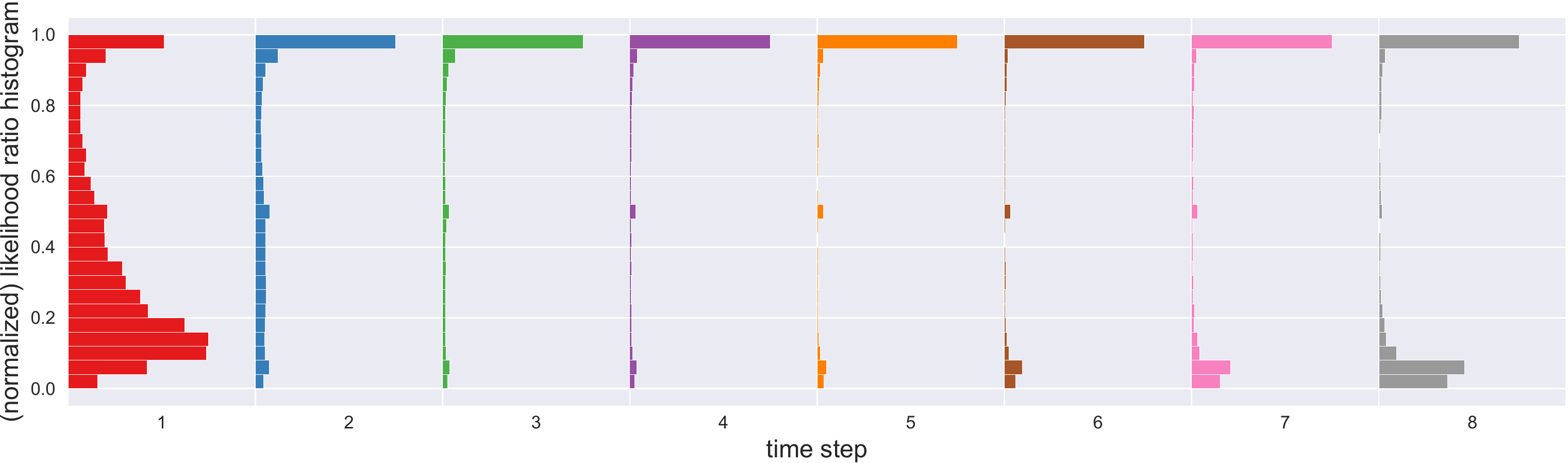}
\label{fig:lr:bf8}
\end{figure}

\begin{figure}[h]
\centering
\caption{Bit flipping ($k=16$, batch size 16).}
\includegraphics[width=1.0\textwidth]{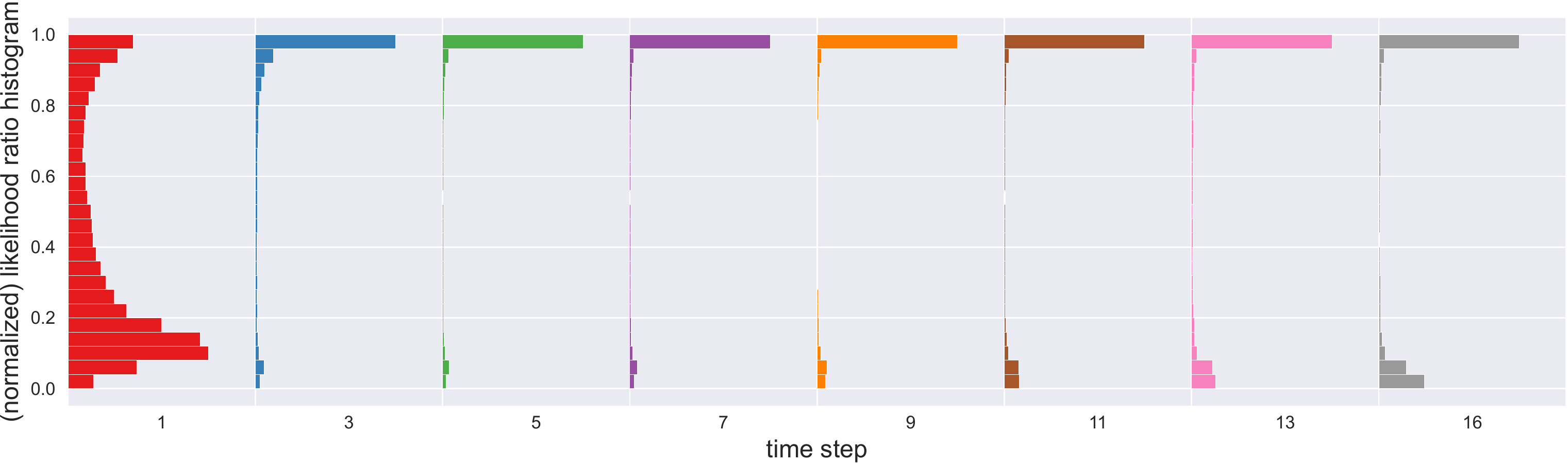}
\end{figure}

\begin{figure}[h]
\centering
\caption{Empty room (batch size 16).}
\includegraphics[width=1.0\textwidth]{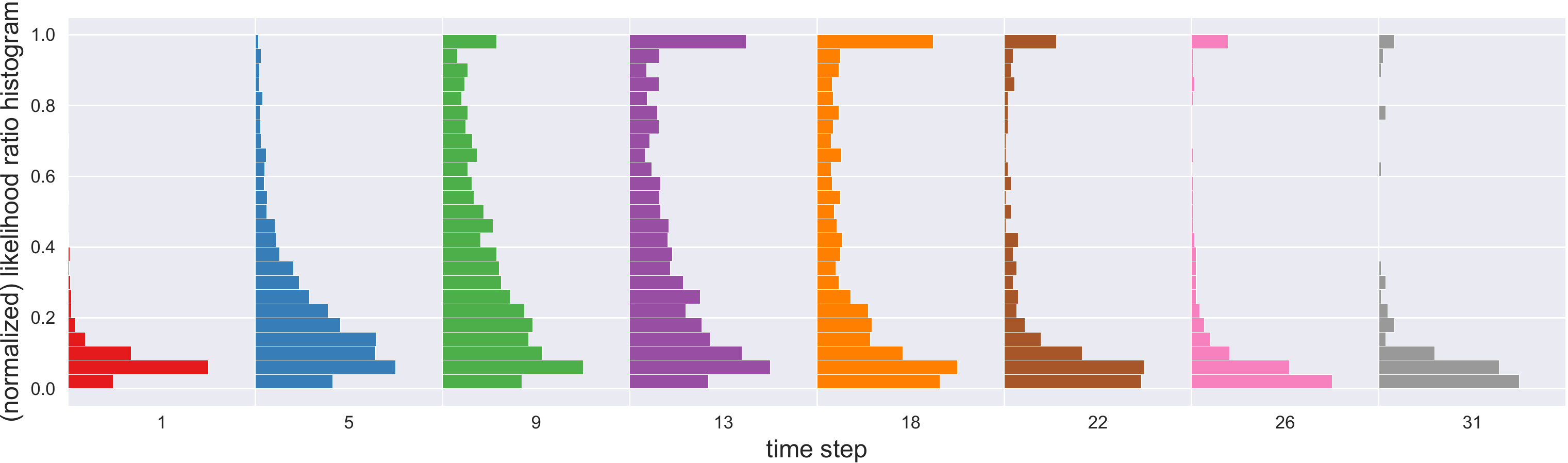}
\end{figure}

\begin{figure}[h]
\centering
\caption{Four rooms (batch size 16).}
\includegraphics[width=1.0\textwidth]{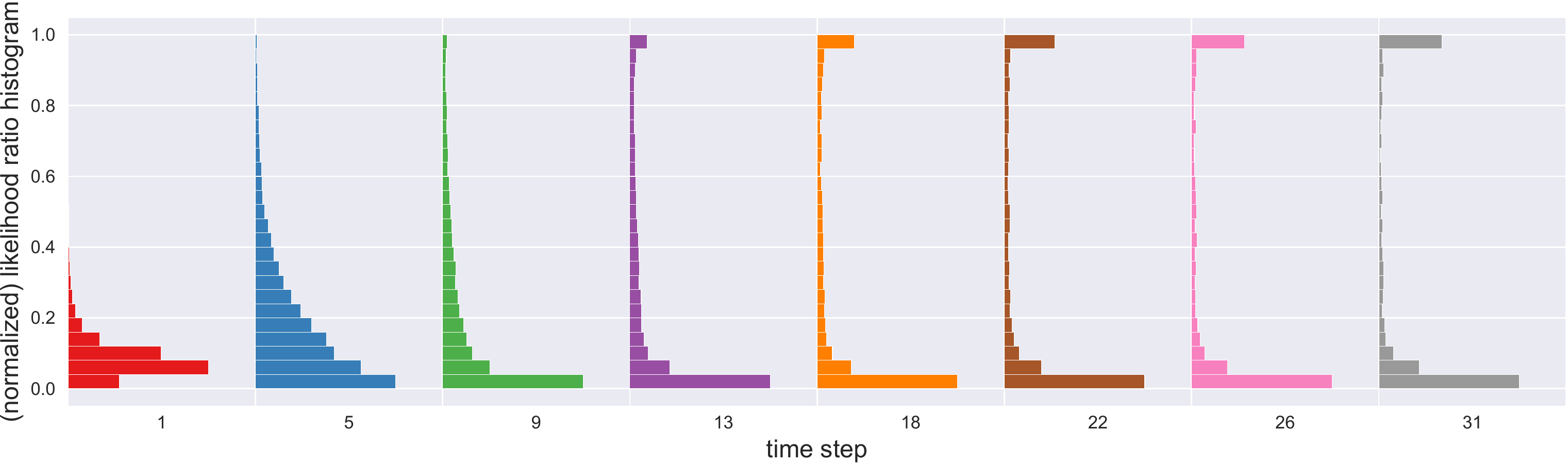}
\end{figure}

\begin{figure}[h]
\centering
\caption{Ms. Pac-man (batch size 16).}
\includegraphics[width=1.0\textwidth]{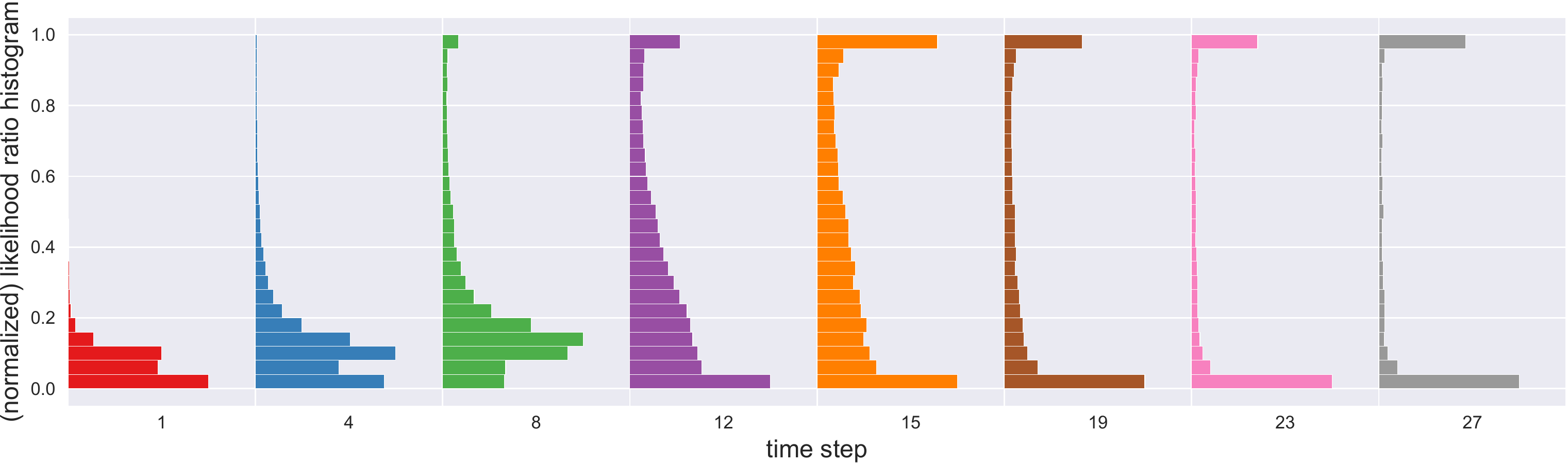}
\end{figure}

\begin{figure}[h]
\centering
\caption{FetchPush (batch size 16).}
\includegraphics[width=1.0\textwidth]{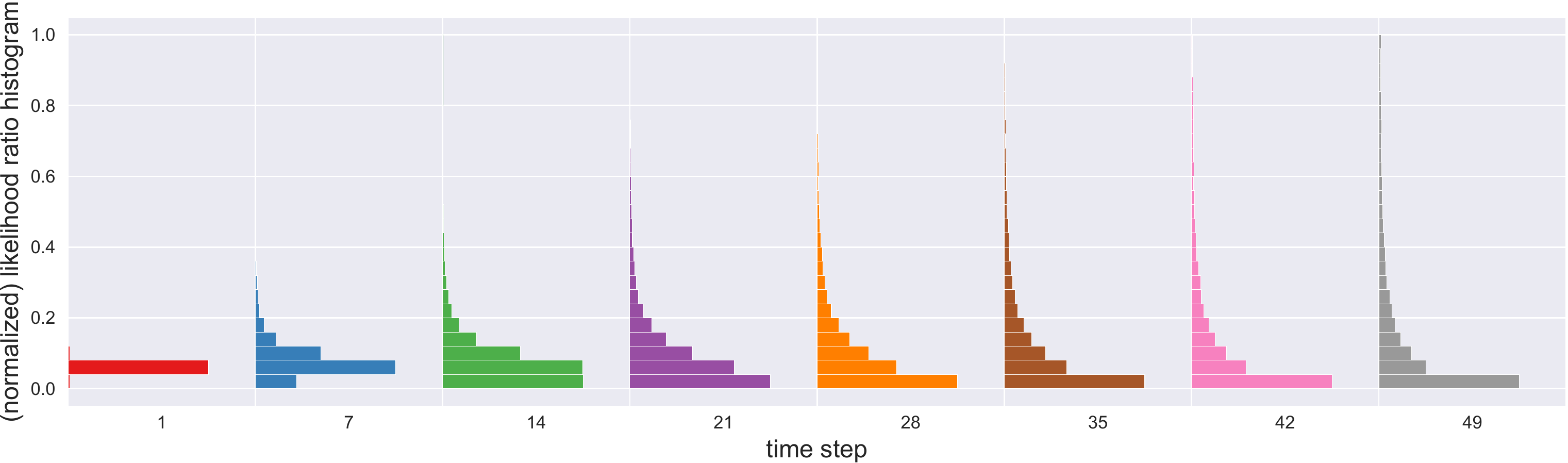}
\label{fig:lr:fetchpush}
\end{figure}

\cleardoublepage

\subsubsection{Hindsight experience replay}
\label{app:experiments:her}

\newcommand{\dqn}[0]{DQN\xspace}
\newcommand{\dqnher}[0]{DQN+HER\xspace}

This appendix documents an empirical comparison between goal-conditional policy gradients (\gcpg), hindsight policy gradients (\hpg), deep Q-networks \citep[\dqn]{mnih2015human}, and a combination of \dqn and hindsight experience replay \citep[\dqnher]{andrychowicz2017hindsight}.  

\paragraph{Experience replay.} Our implementations of both \dqn and \dqnher are based on OpenAI Baselines \citep{openaibaselines2017}, and use mostly the same hyperparameters that \citet{andrychowicz2017hindsight} used in their experiments on environments with discrete action spaces, all of which resemble our bit flipping environments. The only notable differences in our implementations are the lack of both Polyak-averaging and temporal difference target clipping.

Concretely, a \emph{cycle} begins when an agent collects a number of episodes ($16$) by following an $\epsilon$-greedy policy derived from its deep Q-network ($\epsilon = 0.2$). The corresponding transitions are included in a replay buffer, which contains at most $10^6$ transitions. In the case of \dqnher, hindsight transitions derived from a \emph{final} strategy are also included in this replay buffer. Completing the cycle, for a total of $40$ different batches, a batch composed of $128$ transitions chosen at random from the replay buffer is used to define a loss function and allow one step of gradient-based minimization. The targets required to define these loss functions are computed using a copy of the deep Q-network from the start of the corresponding cycle. Parameters are updated using Adam \citep{kingma2014adam}. A discount factor of $\gamma = 0.98$ is used, and seems necessary to improve the stability of both \dqn and \dqnher.

\paragraph{Network architectures.} In every experiment, the deep Q-network is implemented by a feedforward neural network with a linear output neuron corresponding to each action. The input to such a network is a triple composed of state, goal, and time step. The network architectures are the same as those described in Appendix \ref{app:experiments:architectures}, except that every weight is initially set using variance scaling \citep{glorot2010understanding}, and all hidden layers use rectified linear units \citep{nair2010rectified}. For the Ms. Pac-man environment, the time step information is concatenated with the flattened output of the last convolutional layer (together with the goal information).  In comparison to the architecture employed by \citet{andrychowicz2017hindsight} for environments with discrete action spaces, our architectures have one or two additional hidden layers (besides the convolutional architecture used for Ms. Pac-man).

\paragraph{Experimental protocol.} The experimental protocol employed in our comparison is very similar to the one described in Section \ref{sec:experiments}. Each agent is evaluated periodically, after a number of cycles that depends on the environment. During this evaluation, the agent collects a number of episodes by following a greedy policy derived from its deep Q-network. 

For each environment, grid search is used to select the learning rates for both \dqn and \dqnher according to average performance scores (after the corresponding standard deviation across runs is subtracted, as in Section \ref{sec:experiments}). The candidate sets of learning rates are the following. Bit flipping and grid world environments: $\{ \alpha \times 10^{-k} \mid \alpha \in \{1, 5 \} \text{ and } k \in \{ 2, 3, 4, 5 \} \}$, FetchPush: $\{10^{-2}, 5\times 10^{-3}, 10^{-3}, 5 \times 10^{-4}, 10^{-4} \}$, Ms. Pac-man: $\{ 10^{-3}, 5 \times 10^{-4}, 10^{-4}, 5 \times 10^{-5}, 10^{-5} \}$. These sets were carefully chosen such that the best performing learning rates do not lie on their extrema.

Definitive results for a given environment are obtained by using the best hyperparameters found for each method in additional runs. These definitive results are directly comparable to our previous results for \gcpg and \hpg (batch size $16$), since every method will have interacted with the environment for the same number of episodes before each evaluation step. For each environment, the number of runs, the number of training batches (cycles), the number of batches (cycles) between evaluations, and the number of episodes per evaluation step are the same as those listed in Tables \ref{app:tb:bitflippinggridworld} and \ref{app:tb:mspacmanfetchpush}.

\paragraph{Results.} The definitive results for the different environments are represented by learning curves (Figs. \ref{fig:her:bf8}-\ref{fig:her:fp}, Pg. \pageref{fig:her:bf8}). In the bit flipping environment for $k=8$ (Figure \ref{fig:her:bf8}), \hpg and \dqnher lead to equivalent sample efficiency, while \gcpg lags far behind and \dqn is completely unable to learn. In the bit flipping environment for $k=16$ (Figure \ref{fig:her:bf16}), \hpg surpasses \dqnher in sample efficiency by a small margin, while both \gcpg and \dqn are completely unable to learn. In the empty room environment (Figure \ref{fig:her:er}), \hpg is arguably the most sample efficient method, although \dqnher is more stable upon obtaining a good policy. \gcpg eventually obtains a good policy, whereas \dqn exhibits instability. In the four rooms environment (Figure \ref{fig:her:fr}), \dqnher outperforms all other methods by a large margin. Although \dqn takes much longer to obtain good policies, it would likely surpass both \hpg and \gcpg given additional training cycles. In the Ms. Pac-man environment (Figure \ref{fig:her:mp}), \dqnher once again outperforms all other methods, which achieve equivalent sample efficiency (although \dqn appears unstable by the end of training). In the FetchPush environment (Figure \ref{fig:her:fp}), \hpg dramatically outperforms all other methods. Both \dqnher and \dqn are completely unable to learn, while \gcpg appears to start learning by the end of the training process. Note that active goals are harshly subsampled to increase the computational efficiency of \hpg for both Ms. Pac-man and FetchPush  (see Sec. \ref{sec:experiments:mp} and Sec. \ref{sec:experiments:fp}).

\paragraph{Discussion.} Our results suggest that the decision between applying \hpg or \dqnher in a particular sparse-reward environment requires experimentation. In contrast, the decision to apply hindsight was always successful. 

Note that we have not employed heuristics that are known to sometimes increase the performance of policy gradient methods (such as entropy bonuses, reward scaling, learning rate annealing, and simple statistical baselines) to avoid introducing confounding factors. We believe that such heuristics would allow both \gcpg and \hpg to achieve good results in both the four rooms environment and Ms. Pac-man. Furthermore, whereas hindsight experience replay is directly applicable to state-of-the-art techniques, our work can probably benefit from being extended to state-of-the-art policy gradient approaches, which we intend to explore in future work. Similarly, we believe that additional heuristics and careful hyperparameter settings would allow \dqnher to achieve good results in the FetchPush environment. This is evidenced by the fact that \citet{andrychowicz2017hindsight} achieve good results using the deep deterministic policy gradient \cite[DDPG]{lillicrap2015continuous} in a similar environment (with a continuous action space and a different reward function). The empirical comparisons between either \gcpg and \hpg or \dqn and \dqnher are comparatively more conclusive, since the similarities between the methods minimize confounding factors. 

Regardless of these empirical results, policy gradient approaches constitute one of the most important classes of model-free reinforcement learning methods, which by itself warrants studying how they can benefit from hindsight. Our approach is also complementary to previous work, since it is entirely possible to combine a critic trained by hindsight experience replay with an actor that employs hindsight policy gradients. Although hindsight experience replay does not require a correction analogous to importance sampling, indiscriminately adding hindsight transitions to the replay buffer is problematic, which has mostly been tackled by heuristics \citep[Sec. 4.5]{andrychowicz2017hindsight}. In contrast, our approach seems to benefit from incorporating all available information about goals at every update, which also avoids the need for a replay buffer.

\begin{figure}[h!]
    \centering
    \begin{floatrow}
      \ffigbox{\caption{Bit flipping ($k=8$).}\label{fig:her:bf8}}{%
        \includegraphics[width=1.0\linewidth]{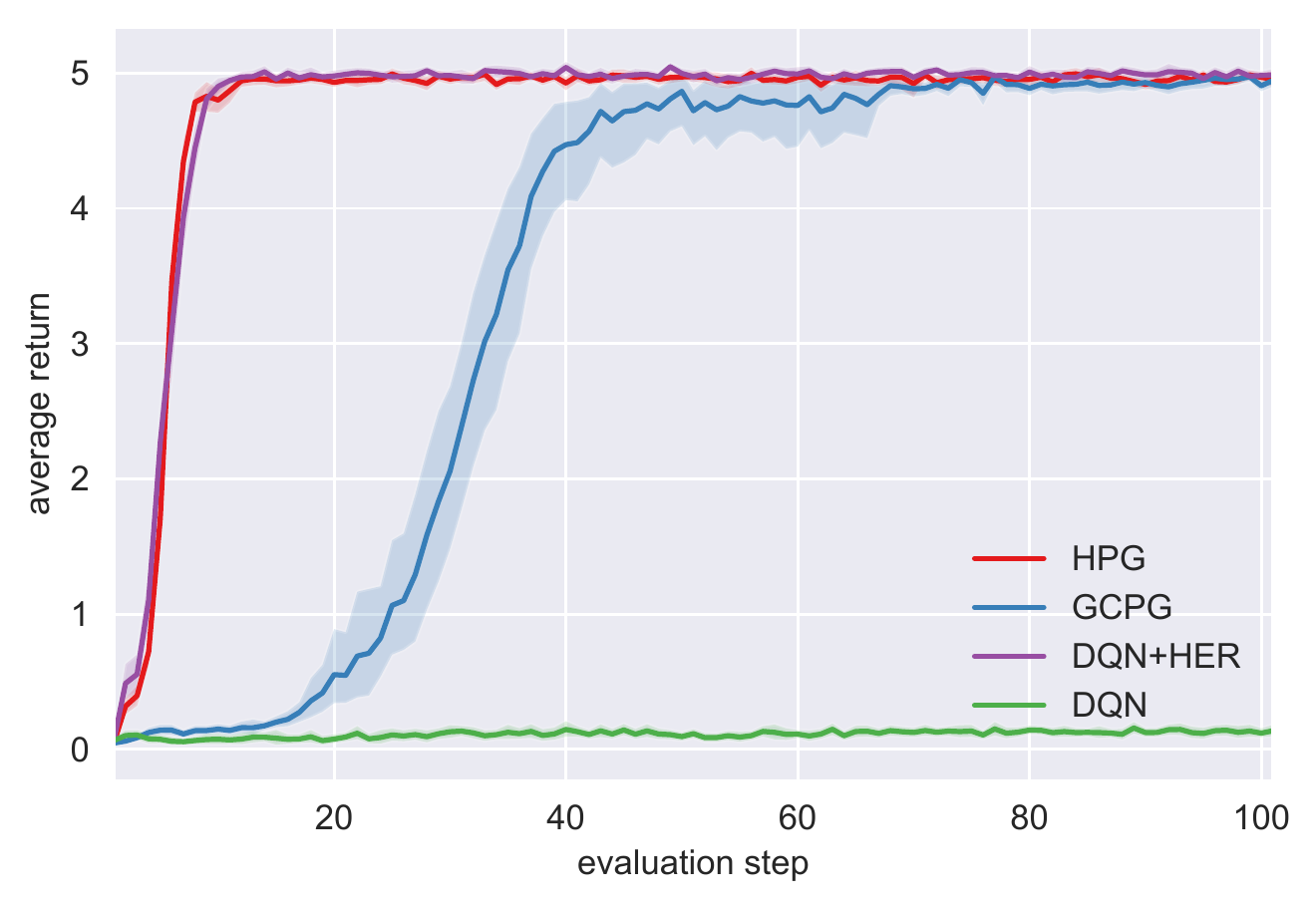}
      }
      \ffigbox{\caption{Bit flipping ($k=16$).}\label{fig:her:bf16}}{%
        \includegraphics[width=1.0\linewidth]{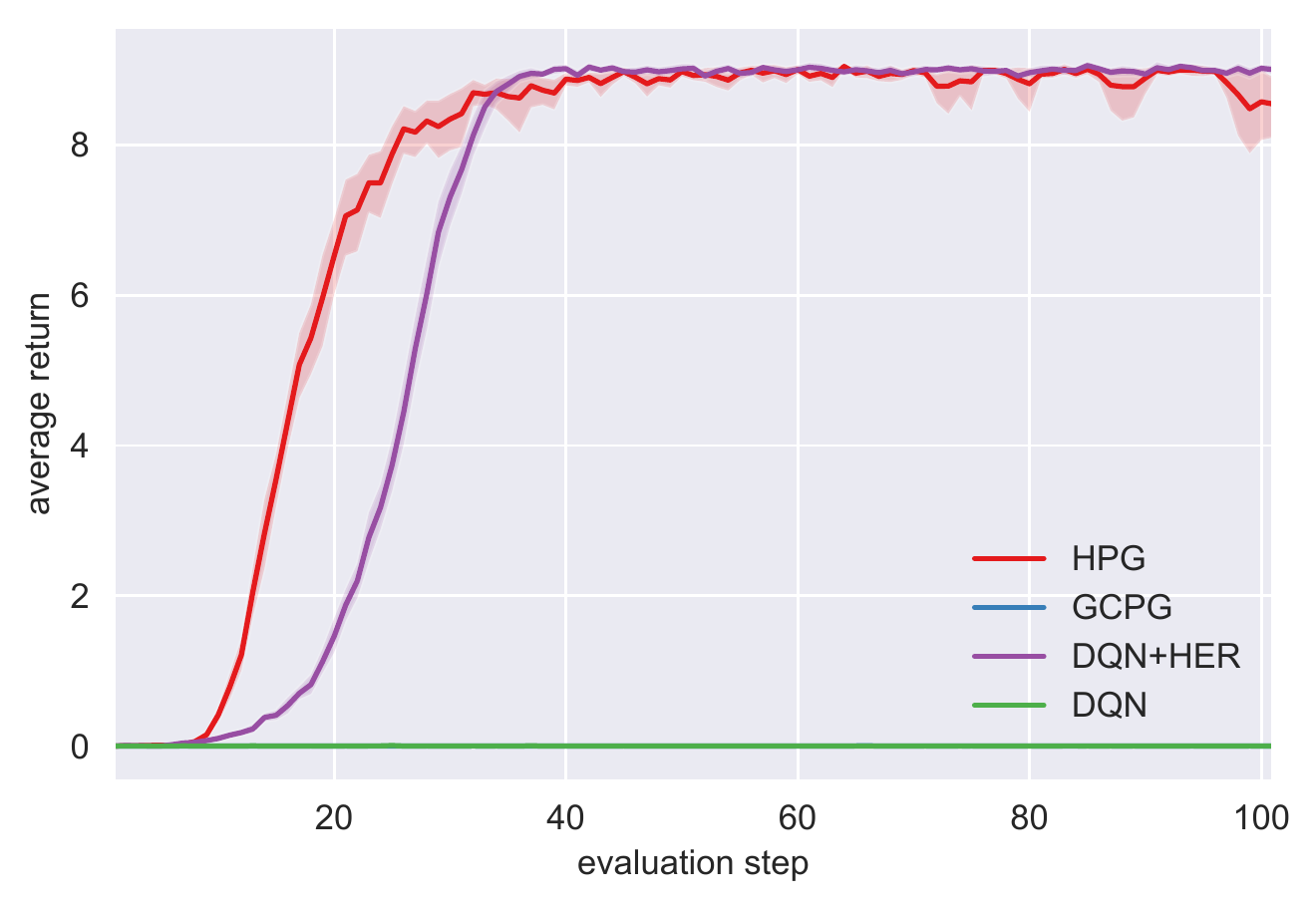}
      }
    \end{floatrow}
    \vspace{0.7cm}
    \begin{floatrow}
      \ffigbox{\caption{Empty room.}\label{fig:her:er}}{%
        \includegraphics[width=1.0\linewidth]{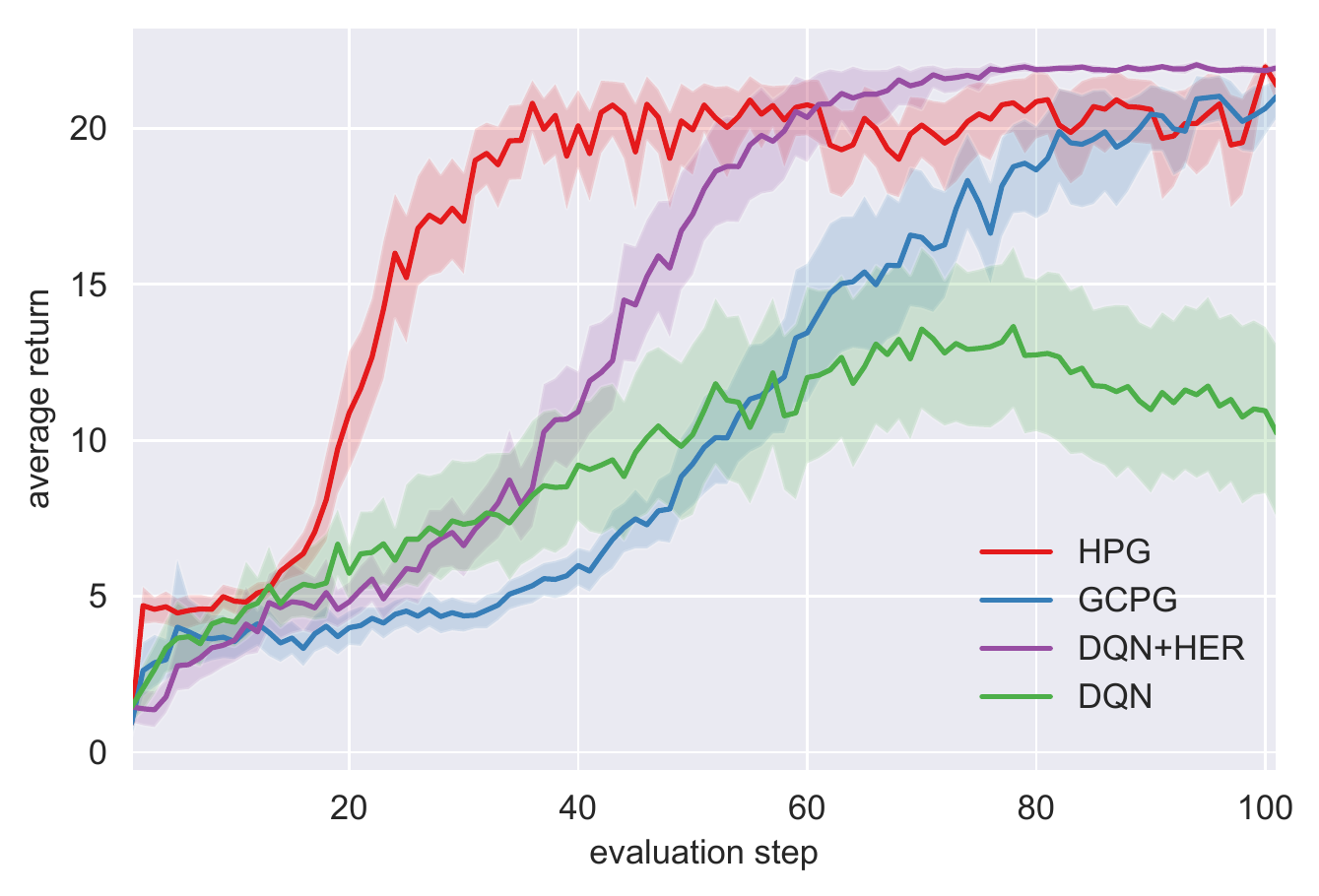}
      }
      \ffigbox{\caption{Four rooms.}\label{fig:her:fr}}{%
        \includegraphics[width=1.0\linewidth]{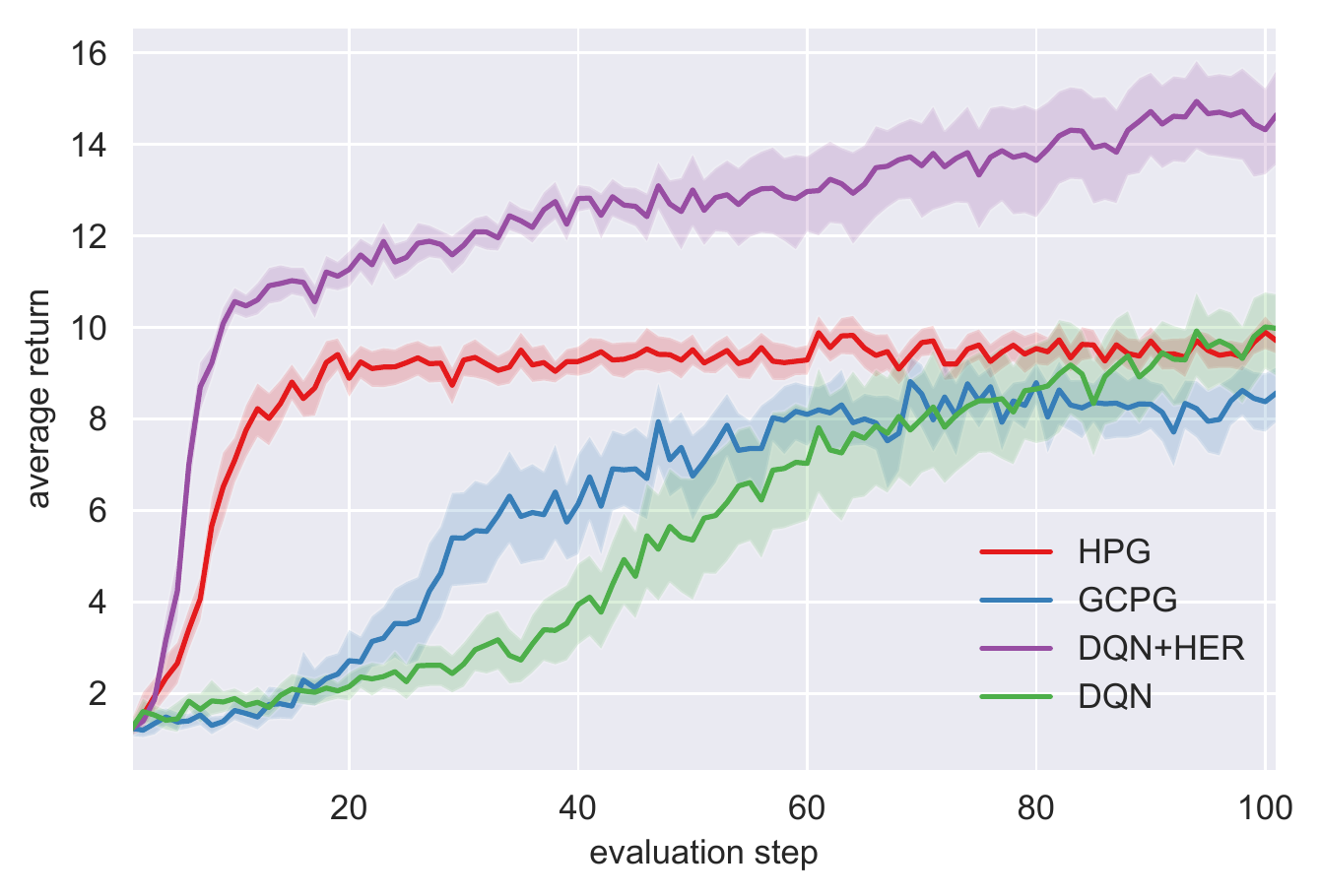}
      }
    \end{floatrow}
    \vspace{0.7cm}
    \begin{floatrow}
      \ffigbox{\caption{Ms. Pac-man.}\label{fig:her:mp}}{%
        \includegraphics[width=1.0\linewidth]{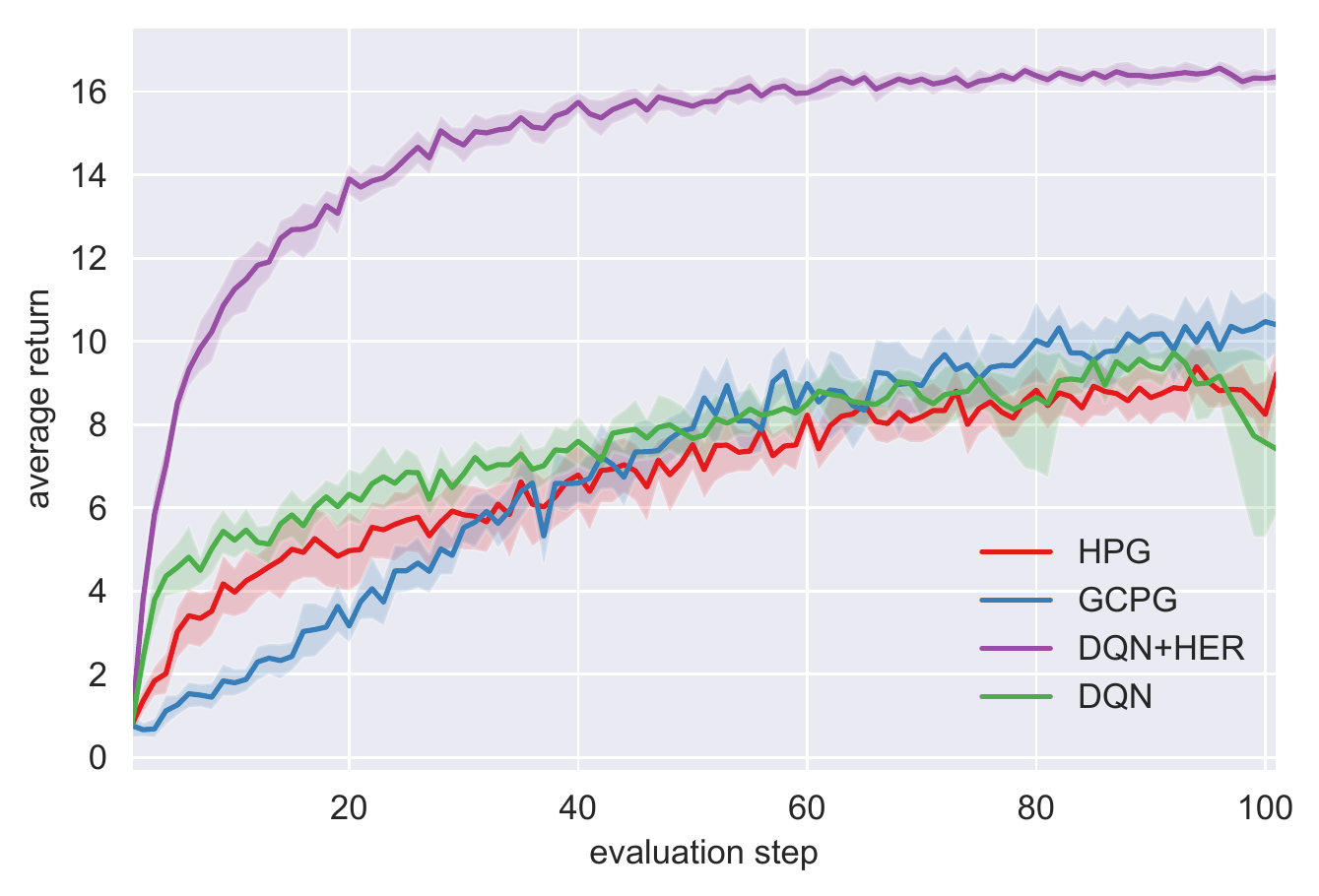}
      }
      \ffigbox{\caption{FetchPush.}\label{fig:her:fp}}{%
        \includegraphics[width=1.0\linewidth]{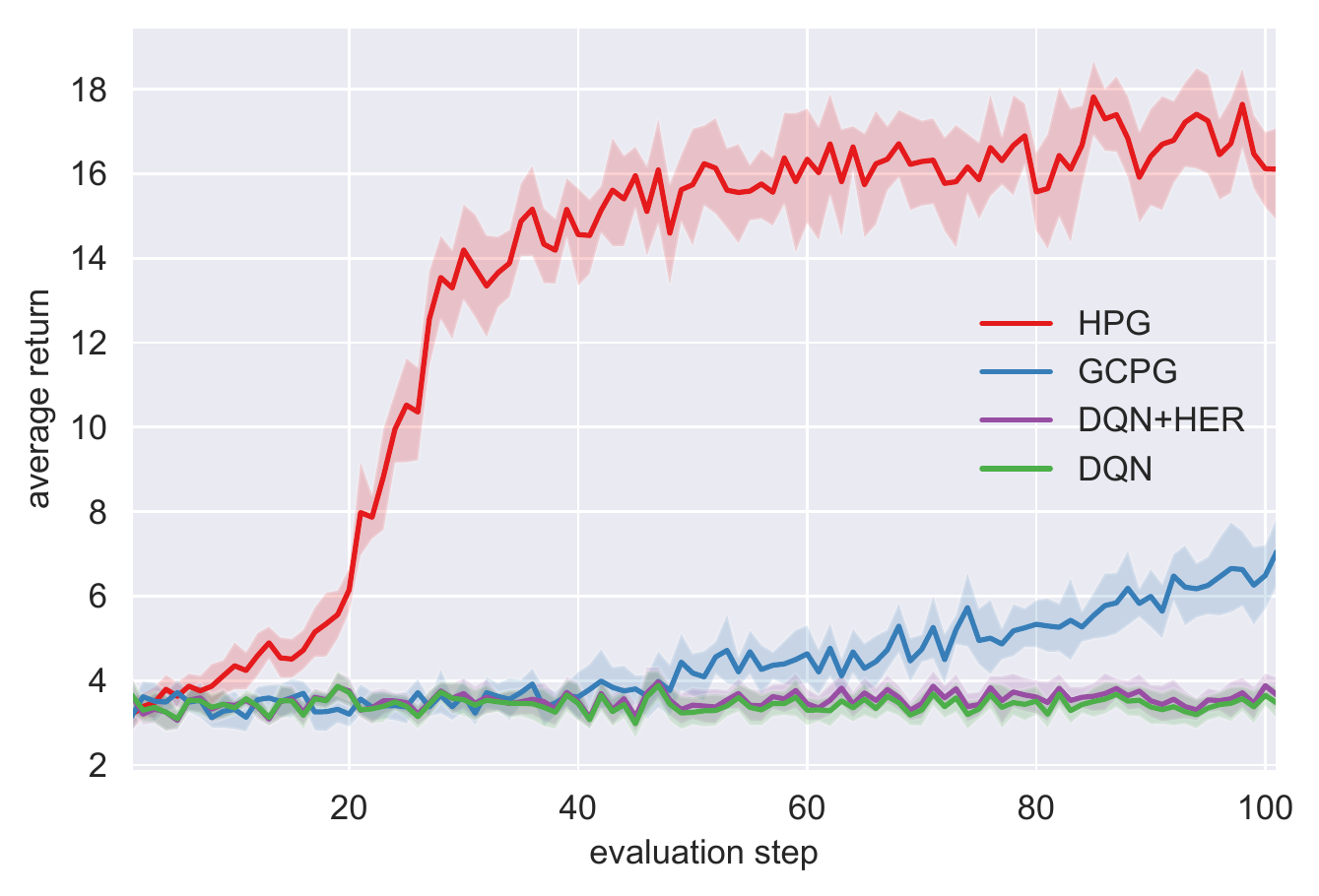}
      }
    \end{floatrow}

\end{figure}

\end{document}